\title{Misspecification in Inverse Reinforcement Learning}
\author {
    Joar Skalse, 
    Alessandro Abate 
}
\newcommand{\States}{\mathcal{S}}
\newcommand{\Actions}{\mathcal{A}}
\newcommand{\reward}{R}
\newcommand{\TransitionDistribution}{\tau}
\newcommand{\InitStateDistribution}{\mu_0} 
\newcommand{\discount}{\gamma}
\newcommand{\MDP}{\langle \States, \Actions, \TransitionDistribution, \InitStateDistribution, \reward, \discount \rangle}
\theoremstyle{plain}
\newtheorem{theorem}{Theorem}[section]
\newtheorem{lemma}[theorem]{Lemma}
\theoremstyle{definition}
\newtheorem{definition}[theorem]{Definition}
\theoremstyle{remark}
\newcommand{\SxA}{{\States{\times}\Actions}}
\newcommand{\SxAxS}{{\States{\times}\Actions{\times}\States}}
\newcommand{\M}{\mathcal{M}}
\newcommand{\Q}{Q}
\newcommand{\V}{V}
\newcommand{\A}{A}
\newcommand{\Return}{G}
\newcommand{\Evaluation}{\mathcal{J}}
\newcommand{\Qfor}[1]{\Q^{#1}}
\newcommand{\Vfor}[1]{\V^{#1}}
\newcommand{\Afor}[1]{\A^{#1}}
\newcommand{\QStar}{\Q^\star}
\newcommand{\VStar}{\V^\star}
\newcommand{\AStar}{\A^\star}
\newcommand{\policy}{\pi}
\newcommand{\OptimalPolicy}{\policy_\star}
\newcommand{\Expect}[2]{\mathbb{E}_{#1}\left[{#2}\right]}
\newcommand{\PS}{\mathrm{PS}_\gamma}
\newcommand{\SR}{S'\mathrm{R}_\TransitionDistribution}
\newcommand{\LS}{\mathrm{LS}}
\newcommand{\OPT}{\mathrm{OP}_{\TransitionDistribution,\gamma}}
\newcommand{\CS}{\mathrm{CS}}
\newcommand{\kPS}[1]{\mathrm{PS}_{\discount,\InitStateDistribution}^{#1}}
\newcommand{\Rspace}{{\hat{\mathcal{R}}}}
\begin{document}

\maketitle

\begin{abstract}
The aim of Inverse Reinforcement Learning (IRL) is to infer a reward function $R$ from a policy $\pi$. 
To do this, we need a model of how $\pi$ relates to $R$. In the current literature, the most common models are \emph{optimality}, \emph{Boltzmann rationality}, and \emph{causal entropy maximisation}. 
One of the primary motivations behind IRL is to infer human preferences from human behaviour.  
However, the true relationship between human preferences and human behaviour is much more complex than any of the models currently used in IRL. This means that they are \emph{misspecified}, which raises the worry that they might lead to unsound inferences if applied to real-world data.
In this paper, we provide a mathematical analysis of how robust different IRL models are to misspecification, and answer precisely how the demonstrator policy may differ from each of the standard models before that model leads to faulty inferences about the reward function $R$. We also introduce a framework for reasoning about misspecification in IRL, together with formal tools that can be used to easily derive the misspecification robustness of new IRL models.
\end{abstract}

\section{Introduction}

Inverse Reinforcement Learning (IRL) is an area of machine learning concerned with inferring what objective an agent is pursuing based on the actions taken by that agent \citep{ng2000}. 
IRL roughly corresponds to the notion of \emph{revealed preferences} in psychology and economics, since it aims to infer \emph{preferences} from \emph{behaviour} \citep{rothkopf2011}. 
IRL has many possible applications. For example, it has been used in scientific contexts, as a tool for understanding animal behaviour \citep{celegansIRL}.
It can also be used in engineering contexts; many important tasks can be represented as sequential decision problems, where the goal is to maximise a \emph{reward function} over several steps \citep{sutton2018}. However, for many complex tasks, it can be very challenging to manually specify a reward function that incentivises the intended behaviour. IRL can then be used to \emph{learn} a good reward function, based on demonstrations of correct behaviour \cite[e.g. ][]{abbeel2010,singh2019}. 
Overall, IRL relates to many fundamental questions about goal-directed behaviour and agent-based modelling.

There are two primary motivations for IRL. The first motivation is to use IRL as a tool for \emph{imitation learning} \cite[e.g.][]{imitation2017}. 
For these applications, it is not fundamentally important whether the learnt reward function actually corresponds to the true intentions of the demonstrator, as long as it helps the imitation learning process. 
The second motivation is to use IRL to understand an agent's preferences and motives \cite[e.g.][]{CIRL}. 
From this perspective, the goal is to learn a reward that captures the demonstrator's true intentions. 
This paper was written with mainly the second motivation in mind.



An IRL algorithm must make assumptions about how the preferences of an agent relate to its behaviour. Most IRL algorithms are based on one of three models; \emph{optimality}, \emph{Boltzmann rationality}, or \emph{causal entropy maximisation}.
These behavioural models are very simple, whereas the true relationship between a person's preferences and their actions of course is incredibly complex. 
In fact, there are observable differences between human data and data synthesised using these standard assumptions \citep{orsini2021}. 
This means that the behavioural models are \emph{misspecified}, which raises the concern that they might systematically lead to flawed inferences if applied to real-world data.


In this paper, we study how robust the behavioural models in IRL are to misspecification. To do this, we first introduce a theoretical framework for analysing misspecification robustness in IRL.
We then derive a number of formal tools for inferring the misspecification robustness of IRL models, and apply these tools to exactly characterise what forms of misspecification the standard IRL models are (or are not) robust to. 
Our analysis is general, as it is carried out in terms of \emph{behavioural models}, rather than \emph{algorithms}, which means that our results will apply to any algorithm based on these models. 
Moreover, the tools we introduce can also be used to easily derive the misspecification robustness of new behavioural models, beyond those we consider in this work.

The motivation behind this work is to provide a theoretically principled understanding of whether and when IRL methods are (or are not) applicable to the problem of inferring a person's (true) preferences and intentions.
Human behaviour is very complex, and while a behavioural model can be more or less accurate, it will never be realistically possible to create a behavioural model that is completely free from misspecification (except possibly for in narrow domains). 
Therefore, if we wish to use IRL as a tool for preference elicitation, then it is crucial to have an understanding of how robust the IRL problem is to misspecification.
In this paper, we contribute towards building this understanding.





\subsection{Related Work}


It is well-known that the standard behavioural models of IRL are misspecified in most applications.
However, there has nonetheless so far not been much research on this topic. 
\citet{choicesetmisspecification} study the effects of \emph{choice set misspecification} in IRL (and reward inference more broadly), following the formalism of \citet{RRIC}. Our work is wider in scope, and aims to provide necessary and sufficient conditions which fully describe the kinds of misspecification to which each behavioural model is robust.
In the field of statistics more broadly, misspecification is a widely studied issue  \cite{white_1994}.


There has been a lot of work on \emph{reducing} misspecification in IRL. One approach to this is to manually add more detail to the models \citep{IgnorantAndInconsistent, IrrationalityCanHelp}, and another approach is to try to \emph{learn} the behavioural model from data \citep{armstrong2019occams, shah2019feasibility}. 
In contrast, our work aims to understand how sensitive IRL is to misspecification (and thus to answer the question of how much misspecification has to be removed).


\citet{skalse2022} study the \emph{partial identifiability} of various reward learning models. Our work uses similar techniques, 
and can be viewed as an extension of their work. The issue of partial identifiability in IRL has also been studied by \citet{ng2000, dvijotham2010, cao2021, kim2021}.

We will discuss the question of what happens if a reward function is changed or misspecified.
This question is also investigated by many previous works, including e.g.\ \citet{epic, rewardhacking, MDPcalculus, reward_misspecification}. 


\subsection{Preliminaries}

A \emph{Markov Decision Processes} (MDP) is a tuple
$(\States, \Actions, \TransitionDistribution, \InitStateDistribution, 
\reward, \discount)$
where
  $\States$ is a set of \emph{states},
  $\Actions$ is a set of \emph{actions},
  $\TransitionDistribution : \SxA \rightsquigarrow \States$ is a \emph{transition function},
  $\InitStateDistribution \in \Delta(\States)$ is an \emph{initial state
  distribution}, 
  $\reward : \SxAxS \to \mathbb{R}$ is a \emph{reward
    function}, 
    and $\discount \in (0, 1]$ is a \emph{discount rate}.
Here $f : X \rightsquigarrow Y$ denotes a probabilistic mapping from $X$ to $Y$.
In this paper, we assume that $\States$ and $\Actions$ are finite.
A \textit{policy} is a function $\policy : \States \rightsquigarrow \Actions$.
A \emph{trajectory} $\xi = \langle s_0, a_0, s_1, a_1 \dots \rangle$ is a possible path in an MDP. The \emph{return function} $\Return$ gives the
cumulative discounted reward of a trajectory,
$\Return(\xi) = \sum_{t=0}^{\infty} \discount^t \reward(s_t, a_t, s_{t+1})$, and the \emph{evaluation function} $\Evaluation$ gives the expected trajectory return given a policy, $\Evaluation(\pi) = \Expect{\xi \sim \pi}{G(\xi)}$. A policy maximising $\Evaluation$ is an \emph{optimal policy}. 
The \emph{value function} $\Vfor{\policy} : \States \rightarrow \mathbb{R}$ of a policy encodes the expected future discounted reward from each state when following that policy. The $Q$-function is $\Qfor{\policy}(s,a) = \Expect{}{R(s,a,S') + \discount \Vfor\policy(S')}$, and the \emph{advantage function} is $\Afor{\policy}(s, a) = \Qfor{\policy}(s, a) - \Vfor{\policy}(s)$.
$\QStar$, $\VStar$, and $\AStar$ denote the $Q$-, value, and advantage
functions of the optimal policies. 
In this paper, we assume that all states in $S$ are reachable under $\TransitionDistribution$ and $\InitStateDistribution$.

In IRL, it is typically assumed that the preferences of the observed agent are described by a reward function $R$, that its environment is described by an MDP, and that its behaviour is described by a (stationary) policy $\pi$. An IRL algorithm also needs a \emph{behavioural model} of how $\pi$ relates to $R$. In the current IRL literature, the most common models are:
\begin{enumerate}
    \item \emph{Optimality}: We assume that $\pi$ is optimal under $R$ (e.g.\ \citet{ng2000}).
    \item \emph{Boltzmann Rationality}: We assume that $\mathbb{P}(\pi(s) = a) \propto e^{\beta Q^\star(s,a)}$, where $\beta$ is a temperature parameter (e.g.\ \citet{ramachandran2007}). 
    %
    \item \emph{Maximal Causal Entropy}: We assume that $\pi$ maximises the causal entropy objective, which is given by $\mathbb{E}[\sum_{t=0}^{\infty} \discount^t (\reward(s_t, a_t, s_{t+1}) + \alpha H(\pi(s_{t+1})))]$, where $\alpha$ is a weight and $H$ is the Shannon entropy function (e.g.\ \citet{ziebart2010thesis}).
\end{enumerate}
In this paper, we will often talk about pairs or sets of reward functions. In these cases, we will give each reward function a subscript $R_i$, and use $\Evaluation_i$, $\VStar_i$, and $\Vfor{\pi}_i$, and so on, to denote $R_i$'s evaluation function, optimal value function, and $\pi$ value function, and so on.





\section{Theoretical Framework}

We here introduce the theoretical framework that we will use to analyse how robust various behavioural models are to misspecification. This framework is rather abstract, but 
it is quite powerful, and
makes our analysis easy to carry out.

\subsection{Definitions and Framework}\label{section:framweork}

For a given set of states $\States$ and set of actions $\Actions$, let $\mathcal{R}$ be the set of all reward functions $R : \SxAxS \rightarrow \mathbb{R}$ definable with $\States$ and $\Actions$. 
Moreover, if $P$ and $Q$ are partitions of a set $X$, we write $P \preceq Q$ if $x_1 \equiv_P x_2 \Rightarrow x_1 \equiv_Q x_2$ for $x_1,x_2 \in X$.
We will use the following definitions:
\begin{enumerate}
    \item A \emph{reward object} is a function $f : \mathcal{R} \rightarrow X$, where $X$ is any set.
    \item The \emph{ambiguity} $\mathrm{Am}(f)$ of $f$ is the partition of $\mathcal{R}$ given by $R_1 \equiv_f R_2 \iff f(R_1) = f(R_2)$. 
    \item Given a partition $P$ of $\mathcal{R}$, we say that $f$ is \emph{$P$-admissible} if $\mathrm{Am}(f) \preceq P$, i.e.\ $f(R_1) = f(R_2) \Rightarrow R_1 \equiv_P R_2$.
    \item Given a partition $P$ of $\mathcal{R}$, we say that $f$ is \emph{$P$-robust to misspecification} with $g$ if $f$ is $P$-admissible, $f \neq g$, $\mathrm{Im}(g) \subseteq \mathrm{Im}(f)$, and $f(R_1) = g(R_2) \implies R_1 \equiv_P R_2$.
    \item A \emph{reward transformation} is a function $t : \mathcal{R} \rightarrow \mathcal{R}$.
    \item If $F$ and $G$ are sets of reward transformations, then $F \circ G$ is the set of all transformations that can be obtained by composing transformations in $F$ and $G$ arbitrarily, in any order. Note that $F \circ G = G \circ F$.
\end{enumerate}
We will now explain and justify each of these definitions. First of all, anything that can be computed from a reward function can be seen as a reward object. For example, we could consider a  function $b$ that, given a reward $R$, returns the Boltzmann-rational policy with temperature $\beta$ in the MDP $\MDP$, or a  function $r$ that, from $R$, gives the return function $G$ in the MDP $\MDP$. This makes reward objects a versatile abstract building block for more complex constructions. We will mainly, but not exclusively, consider reward objects with the type $\mathcal{R} \rightarrow \Pi$, i.e.\ functions that compute policies from rewards. 

We can use reward objects to create an abstract model of a reward learning algorithm $\mathcal{L}$ as follows; first, we assume, as reasonable, that there is a true underlying reward function $R^\star$, and that the observed training data is generated by a reward object $g$, so that $\mathcal{L}$ observes $g(R^\star)$. Here $g(R^\star)$ could be a \emph{distribution}, which models the case where $\mathcal{L}$ observes a sequence of random samples from some source,
but it could also be a single, finite object. Next, we suppose that $\mathcal{L}$ has a model $f$ of how the observed data relates to $R^\star$, where $f$ is also a reward object, and that $\mathcal{L}$ learns (or converges to) a reward function $R_H$ such that $f(R_H) = g(R^\star)$. If $f \neq g$ then $f$ is \emph{misspecified}, otherwise $f$ is correctly specified. Note that this primarily is a model of the \emph{asymptotic} behaviour of learning algorithms, in the limit of \emph{infinite data}.

There are two ways to interpret $\mathrm{Am}(f)$. First, we can see it as a bound on the amount of information we can get about $R^\star$ by observing (samples from) $f(R^\star)$. For example, multiple reward functions might result in the same Boltzmann-rational policy. Thus, observing trajectories from that policy could never let us distinguish between them: this ambiguity is described by $\mathrm{Am}(b)$. We can also see $\mathrm{Am}(f)$ as the amount of information we need to have about $R^\star$ to construct $f(R^\star)$. 
Next, if $\mathrm{Am}(f) \preceq \mathrm{Am}(g)$ and $f \neq g$, this means that we get less information about $R^\star$ by observing $g(R^\star)$ than $f(R^\star)$, and that we would need more information to construct $f(R^\star)$ than $g(R^\star)$. For an extensive discussion about these notions, see \citet{skalse2022}.

Intuitively, we want to say that a behavioural model is robust to some type of misspecification if an algorithm based on that model will learn a reward function that is \enquote{close enough} to the true reward function when subject to that misspecification. To formalise this intuitive statement, we first need a definition of what it should mean for two reward functions to be \enquote{close enough}. In this work, we have chosen to define this in terms of \emph{equivalence classes}. Specifically, we assume that we have a partition $P$ of $\mathcal{R}$ (which, of course, corresponds to an equivalence relation), and that the learnt reward function $R_H$ is \enquote{close enough} to the true reward $R^\star$ if they are in the same class, $R_H \equiv_P R^\star$. We will for now leave open the question of which partition $P$ of $\mathcal{R}$ to pick, and later revisit this question in Section~\ref{section:R_equivalence_classes}.

Given this, we can now see that our definition of $P$-admissibility is equivalent to stating that a learning algorithm $\mathcal{L}$ based on $f$ is guaranteed to learn a reward function that is $P$-equivalent to the true reward function when there is no misspecification. 
Furthermore, our definition of $P$-robustness says that $f$ is $P$-robust to misspecification with $g$ if any learning algorithm $\mathcal{L}$ based on $f$ is guaranteed to learn a reward function that is $P$-equivalent to the true reward function when trained on data generated from $g$.
The requirement that $\mathrm{Im}(g) \subseteq \mathrm{Im}(f)$ ensures that the learning algorithm $\mathcal{L}$ is never given data that is impossible according to its model. 
Depending on how $\mathcal{L}$ reacts to such data, it may be possible to drop this requirement.
We include it, since we want our analysis to apply to all algorithms. 
The requirement that $f$ is $P$-admissible is included to rule out some uninteresting edge cases.


Reward transformations can be used to characterise the ambiguity of reward objects, or define other partitions of $\mathcal{R}$. 
Specifically, we say that a partition $P$ corresponds to a set of reward transformations $T_P$ if $T_P$ contains all reward transformations $t$ that satisfy $t(R) \equiv_P R$.
If $P$ is the ambiguity of $f$ then $T_P$ would be the set of all reward transformations that satisfy $f(R) = f(t(R))$.

\subsection{Fundamental Lemmas}\label{section:lemmas}

We here give two fundamental lemmas that we will later use to prove our core results. These lemmas can also be used to easily derive the misspecification robustness of new models, beyond those considered in this work. All of our proofs are provided in the supplementary material, which also contains several additional results about our framework.





\begin{lemma}\label{lemma:non-robustness_inherited}
If $f$ is not $P$-robust to misspecification with $g$, and $\mathrm{Im}(g) \subseteq \mathrm{Im}(f)$, then for any $h$, $h \circ f$ is not $P$-robust to misspecification with $h \circ g$.
\end{lemma}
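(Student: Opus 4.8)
The plan is to unpack the definition of $P$-robustness into its four constituent conditions and exploit the fact that ``not $P$-robust'' means at least one of them fails. Recall that $f$ being $P$-robust to misspecification with $g$ requires (a) $f$ is $P$-admissible, (b) $f \neq g$, (c) $\mathrm{Im}(g) \subseteq \mathrm{Im}(f)$, and (d) $f(R_1) = g(R_2) \implies R_1 \equiv_P R_2$. Since we are given $\mathrm{Im}(g) \subseteq \mathrm{Im}(f)$, condition (c) holds, so the failure of robustness of $f$ with $g$ must lie in one of the other three: either (i) $f$ is not $P$-admissible, or (ii) $f = g$, or (iii) there exist $R_1, R_2$ with $f(R_1) = g(R_2)$ but $R_1 \not\equiv_P R_2$. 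I would then treat these three cases separately and, in each, exhibit a failure of the corresponding condition for the pair $(h \circ f, h \circ g)$.

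The single fact driving all three cases is that any function $h$ \emph{preserves equalities}: if $x = y$ then $h(x) = h(y)$. In case (i), a witness $R_1, R_2$ with $f(R_1) = f(R_2)$ and $R_1 \not\equiv_P R_2$ satisfies $(h\circ f)(R_1) = (h \circ f)(R_2)$, so $h \circ f$ is itself not $P$-admissible, and condition (a) fails for $(h\circ f, h\circ g)$. In case (iii), the witness satisfies $(h\circ f)(R_1) = h(f(R_1)) = h(g(R_2)) = (h\circ g)(R_2)$ while still $R_1 \not\equiv_P R_2$, so condition (d) fails for $(h \circ f, h \circ g)$. Case (ii) is immediate: $f = g$ gives $h \circ f = h \circ g$, so condition (b) fails. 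In every case one of the four conditions fails, hence $h \circ f$ is not $P$-robust to misspecification with $h \circ g$.

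The main thing to get right — rather than a genuine obstacle — is to see why the hypothesis $\mathrm{Im}(g) \subseteq \mathrm{Im}(f)$ is essential and cannot be dropped. Composing with $h$ can only \emph{identify} more points, never separate them, so $\mathrm{Im}(h \circ g) = h(\mathrm{Im}(g)) \subseteq h(\mathrm{Im}(f)) = \mathrm{Im}(h\circ f)$; that is, condition (c) is automatically preserved under composition and can therefore never be the reason $h\circ f$ fails to be robust. This is exactly why the hypothesis must rule out the possibility that $f$ was non-robust \emph{only} because $\mathrm{Im}(g) \not\subseteq \mathrm{Im}(f)$: in that situation $h$ could collapse the offending part of $g$'s image into $f$'s (e.g.\ a constant $h$), restoring robustness and breaking the claimed inheritance. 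With the hypothesis in hand, the case analysis above is routine, the only mild subtlety being to remember to handle the degenerate case $f = g$ separately, since it produces no ``witness'' of the kind used in the other two cases.
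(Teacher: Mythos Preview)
Your proof is correct and follows essentially the same approach as the paper: both unpack the definition, use the hypothesis $\mathrm{Im}(g)\subseteq\mathrm{Im}(f)$ to reduce to the three remaining failure modes (non-admissibility, $f=g$, and a witnessing pair violating the implication), and dispatch each case via the observation that $h$ preserves equalities. Your added paragraph explaining why the image-containment hypothesis is indispensable is a nice touch that the paper does not spell out.
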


This lemma states that if we have an object $h \circ f$ that can be computed from some intermediary object $f$, and $f$ is not $P$-robust to some form of misspecification, then $h \circ f$ is likewise not robust to the corresponding misspecification. In other words, any misspecification that $f$ is sensitive to, is \enquote{inherited} by all objects that can be computed from $f$.

\begin{lemma}\label{lemma:how_to_calculate_robustness_sets}
If $f$ is $P$-admissible, 
and $T$ is the set of all reward transformations that preserve $P$, then $f$ is $P$-robust to misspecification with $g$ if and only if $g = f \circ t$ for some $t \in T$ where $f \circ t \neq f$.
\end{lemma}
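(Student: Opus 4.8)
The plan is to prove the biconditional by establishing each implication separately, unpacking the definition of $P$-robustness into its four clauses ($P$-admissibility, $f \neq g$, $\mathrm{Im}(g) \subseteq \mathrm{Im}(f)$, and the implication $f(R_1) = g(R_2) \Rightarrow R_1 \equiv_P R_2$). Throughout, $P$-admissibility of $f$ is available as a hypothesis, so the first clause is always satisfied for free, and the real work concentrates on relating the last clause to the factorisation $g = f \circ t$.

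For the direction ($\Leftarrow$), I would assume $g = f \circ t$ with $t \in T$ and $f \circ t \neq f$, and verify the three remaining clauses. Since $f\circ t \neq f$ we immediately get $f \neq g$, and since every value $g(R) = f(t(R))$ lies in $\mathrm{Im}(f)$ we get $\mathrm{Im}(g) \subseteq \mathrm{Im}(f)$. The one substantive step is the implication clause: given $f(R_1) = g(R_2) = f(t(R_2))$, $P$-admissibility of $f$ yields $R_1 \equiv_P t(R_2)$, while $t \in T$ gives $t(R_2) \equiv_P R_2$; transitivity of the equivalence relation induced by $P$ then closes the argument to $R_1 \equiv_P R_2$.

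For the direction ($\Rightarrow$), I would assume $f$ is $P$-robust to misspecification with $g$ and construct the witnessing transformation $t$. The key idea is that $\mathrm{Im}(g) \subseteq \mathrm{Im}(f)$ means that for every $R$ the preimage $\{R' : f(R') = g(R)\}$ is nonempty, so I can select one representative and set $t(R)$ equal to it; this gives a transformation with $f \circ t = g$ by construction. It then remains to check $t \in T$ and $f \circ t \neq f$. For the former, apply the robustness implication with $R_1 = t(R)$ and $R_2 = R$: since $f(t(R)) = g(R)$, we conclude $t(R) \equiv_P R$ for every $R$, which is exactly the condition defining membership in $T$. For the latter, $f \circ t = g \neq f$ follows directly from the $f \neq g$ clause of robustness.

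The main obstacle is the construction of $t$ in the forward direction: because $\mathcal{R}$ is uncountable, choosing a preimage representative for every $R$ simultaneously requires an appeal to the axiom of choice, and one must take care that the resulting $t$ is genuinely a well-defined element of $\mathcal{R} \rightarrow \mathcal{R}$. Since no constraints beyond being a function are imposed on reward transformations, any such selection is admissible, and once it is fixed the remaining verifications are routine consequences of $P$-admissibility, transitivity of $\equiv_P$, and the defining clauses of robustness.
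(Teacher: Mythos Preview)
Your proposal is correct and follows essentially the same approach as the paper's proof: both directions unpack the four clauses of $P$-robustness, the backward direction chains $P$-admissibility with $t\in T$ via transitivity, and the forward direction constructs $t$ by selecting (via choice) an $f$-preimage of $g(R)$ for each $R$ and then invokes the robustness implication to check $t\in T$. The only cosmetic difference is that the paper selects one preimage per value $y\in\mathrm{Im}(g)$ rather than per $R$, and does not comment on the use of choice; neither point affects the argument.
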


This lemma gives us a very powerful tool for characterising the misspecification robustness of reward objects. Specifically, we can derive the set of objects to which $f$ is $P$-robust by first deriving the set $T$ of all transformations that preserve $P$, and then composing $f$ with each $t \in T$.






\subsection{Reward Transformations}


We here introduce several classes of reward transformations, that we will later use to express our results.  
First recall \emph{potential shaping}
\cite{ng1999}:
%
%
%
\begin{definition}[Potential Shaping]
A \emph{potential function} is a function $\Phi : \States \to \mathbb{R}$.
Given a discount $\discount$,
we say that
  $R_2 \in \mathcal{R}$ is produced by \emph{potential shaping} of $R_1 \in \mathcal{R}$
if for some potential $\Phi$,
  $$
  R_2(s,a,s') = R_1(s,a,s') + \discount\cdot\Phi(s') - \Phi(s).
  $$
\end{definition}
Potential shaping is widely used for reward shaping.
We next define two classes of transformations that were used by \citet{skalse2022}, starting with \emph{$S'$-redistribution}.
\begin{definition}[$S'$-Redistribution]
\label{def:sprime-redistribution}
Given a transition function $\TransitionDistribution$,
we say that
    $R_2 \in \mathcal{R}$ is produced by \emph{$S'$-redistribution} of $R_1 \in \mathcal{R}$
if
    $$
    \Expect{S' \sim \TransitionDistribution(s,a)}{\reward_1(s,a,S')}
    = \Expect{S' \sim \TransitionDistribution(s,a)}{\reward_2(s,a,S')}.
    $$ 
\end{definition}


If $s_1$, $s_2 \in \mathrm{Supp}(\TransitionDistribution(s,a))$ then $S'$-redistribution can increase $\reward(s,a,s_1)$ if it decreases $\reward(s,a,s_2)$ proportionally. $S'$-redistribution can also change $\reward$ arbitrarily for transitions that occur with probability 0. We next consider \emph{optimality-preserving transformations}:
%

\begin{definition}\label{def:optimality_preserving_transformation}
Given a transition function $\TransitionDistribution$ and a discount $\discount$, we say that $R_2 \in \mathcal{R}$ is produced by an \emph{optimality-preserving transformation} of $R_1 \in \mathcal{R}$ if there exists a function $\psi : \States \rightarrow \mathbb{R}$ such that
$$
\mathbb{E}_{S' \sim \TransitionDistribution(s,a)}[R_2(s,a,S') + \discount\cdot\psi(S')] \leq \psi(s), 
$$
with equality if and only if $a \in \mathrm{argmax}_{a \in \Actions} A^\star_1(s,a)$. 
\end{definition}

An optimality preserving transformation of $R_1$ lets us pick an arbitrary new value function $\psi$, and then adjust $R_2$ in any way that respects the new value function and the argmax of $A^\star_1$ --- the latter condition ensures that the same actions (and hence the same policies) stay optimal. 


Based on these definitions, we can now specify several \emph{sets} of reward transformations:
\begin{enumerate}
    \item Let $\PS$ be the set of all  reward transformations $t$ such that $t(R)$ is given by potential shaping of $R$ relative to the discount $\discount$.
    \item Let $\SR$ be the set of all reward transformations $t$ such that $t(R)$ is given by $S'$-redistribution of $R$ relative to the transition function $\TransitionDistribution$.
    \item Let $\LS$ be the set of all reward transformations $t$ that scale each reward function by some positive constant, i.e.\ for each $R$ there is a $c \in \mathbb{R}^+$ such that $t(R)(s,a,s') = c \cdot R(s,a,s')$.
    \item Let $\CS$ be the set of all reward transformations $t$ that shift each reward function by some constant, i.e.\ for each $R$ there is a $c \in \mathbb{R}$ such that $t(R)(s,a,s') = R(s,a,s') + c$.
    \item Let $\OPT$ be the set of all reward transformations $t$ such that $t(R)$ is given by an optimality-preserving transformation of $R$ relative to $\TransitionDistribution$ and $\discount$.
\end{enumerate}
Note that these sets are defined in a way that allows their transformations to be \enquote{sensitive} to the reward function it takes as input. For example, a transformation $t \in \mathrm{PS}_\gamma$ might apply one potential function $\Phi_1$ to $R_1$, and a different potential function $\Phi_2$ to $R_2$. Similarly, a transformation $t \in \mathrm{LS}$ might scale $R_1$ by a positive constant $c_1$, and $R_2$ by a different constant $c_2$, etc.
Note also that $\CS \subseteq \PS$ (for all $\gamma$), and that all sets are subsets of $\OPT$ \cite[see][]{skalse2022}.

\subsection{Two Equivalence Classes for Reward Functions}\label{section:R_equivalence_classes}

Our definition of misspecification robustness is given relative to an equivalence relation on $\mathcal{R}$.
In this section, we define two important equivalence classes, and characterise the transformations that preserve them. Our later results will be given relative to these two equivalence classes.

Given an environment $\M = \langle \States,\Actions,\TransitionDistribution,\InitStateDistribution,\_,\discount\rangle$
and two reward functions $R_1$, $R_2$, we say that $R_1 \equiv_\mathrm{OPT^\M} R_2$ if 
$\langle \States,\Actions,\TransitionDistribution,\InitStateDistribution,R_1,\discount\rangle$ and $\langle \States,\Actions,\TransitionDistribution,\InitStateDistribution,R_2,\discount\rangle$
have the same \emph{optimal} policies, and that $R_1 \equiv_\mathrm{ORD^\M} R_2$ if they have the same \emph{ordering} of policies.
\footnote{By this, we mean that $\Evaluation_1(\pi) > \Evaluation_1(\pi')$ if and only if $\Evaluation_2(\pi) > \Evaluation_2(\pi')$, for all pairs of policies $\pi, \pi'$.} 
Note that if $R_1 \equiv_\mathrm{ORD^\M} R_2$ then $R_1 \equiv_\mathrm{OPT^\M} R_2$.
\citet{skalse2022} showed that $R \equiv_\mathrm{OPT^\M} t(R)$ for all $R$ if and only if $t \in \OPT$ (their Theorem 3.16). We characterise the transformations that preserve $\mathrm{ORD^\M}$, which is a novel contribution.

\begin{theorem}\label{thm:order_of_policies}
$R_1 \equiv_\mathrm{ORD^\M} R_2$ if and only if $R_2 = t(R_1)$ for some $t \in \SR \circ \PS \circ \LS$.
\end{theorem}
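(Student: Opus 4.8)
The plan is to dispatch the ``if'' direction by direct verification, and to prove the ``only if'' direction by linearising the evaluation function over the occupancy-measure polytope and then arguing in linear algebra. For the ``if'' direction I would check that each of the three primitive transformations preserves $\equiv_\mathrm{ORD^\M}$. By definition $\SR$ fixes $\bar\reward(s,a) := \Expect{S'\sim\TransitionDistribution(s,a)}{\reward(s,a,S')}$ for every $(s,a)$, and since $\Evaluation(\pi)$ depends on $\reward$ only through $\bar\reward$, it leaves $\Evaluation(\pi)$ unchanged for all $\pi$. A telescoping computation shows that potential shaping sends $\Evaluation(\pi)\mapsto\Evaluation(\pi)-\Expect{s_0\sim\InitStateDistribution}{\Phi(s_0)}$, a $\pi$-independent shift, while $\LS$ sends $\Evaluation(\pi)\mapsto c\,\Evaluation(\pi)$ for a fixed $c>0$. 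Each operation reparametrises $\Evaluation$ by a positive-affine map, so the induced strict order, and hence that of any composition, is preserved.

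For the converse I would introduce the discounted state--action occupancy measure $\rho^\pi\in\mathbb{R}^{\SxA}$, so that $\Evaluation(\pi)=\langle\rho^\pi,\bar\reward\rangle$. The feasible set $\mathcal{D}=\{\rho^\pi:\pi\}$ is a convex polytope and $\pi\mapsto\rho^\pi$ is onto $\mathcal{D}$, so $R_1\equiv_\mathrm{ORD^\M}R_2$ holds iff $\langle\cdot,\bar\reward_1\rangle$ and $\langle\cdot,\bar\reward_2\rangle$ induce the same strict order on $\mathcal{D}$. Writing $V$ for the direction space of the affine hull of $\mathcal{D}$, I would prove a small lemma --- two linear functionals that induce the same strict order on a set which is full-dimensional in its affine hull have positively proportional restrictions to $V$ --- to conclude that same order is equivalent to $\bar\reward_2-\alpha\,\bar\reward_1\in V^\perp$ for some $\alpha>0$ (the degenerate case $\bar\reward_1\in V^\perp$, where all policies tie, forces $\bar\reward_2\in V^\perp$ and is handled separately).

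The crux is then to identify $V^\perp$ with the $\bar\reward$-level image of potential shaping. Potential shaping by $\Phi$ changes $\bar\reward$ by $g_\Phi(s,a)=\discount\,\Expect{S'\sim\TransitionDistribution(s,a)}{\Phi(S')}-\Phi(s)$; using the Bellman flow constraints obeyed by every $\rho\in\mathcal{D}$ one finds $\langle\rho,g_\Phi\rangle=-\Expect{s_0\sim\InitStateDistribution}{\Phi(s_0)}$, constant on $\mathcal{D}$, so $g_\Phi\in V^\perp$. The map $\Phi\mapsto g_\Phi$ is injective for $\discount<1$ (since $g_\Phi=0$ forces $\norm{\Phi}_\infty\le\discount\norm{\Phi}_\infty$), and a dimension count using the reachability assumption gives $\dim V^\perp=|\States|=\dim\{\Phi\}$, so the image is all of $V^\perp$. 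Thus $\bar\reward_2=\alpha\,\bar\reward_1+g_\Phi$ for some $\alpha>0$ and $\Phi$. To assemble the transformation I would scale $R_1$ by $\alpha$ and then potential-shape by $\Phi$, obtaining some $R_3$ with $\bar\reward_3=\bar\reward_2$; since $R_3$ and $R_2$ then have equal conditional expectations, $R_2$ is an $S'$-redistribution of $R_3$, exhibiting $R_2=t(R_1)$ with $t\in\SR\circ\PS\circ\LS$.

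The main obstacle is the pair of middle steps: proving the order/proportionality lemma cleanly and, above all, establishing the identification of $V^\perp$ with potential shaping through the flow-constraint computation and the $|\States|$-dimensional count. I would also need to treat $\discount=1$ separately, since there both the occupancy measures and the telescoping argument require an additional convergence or termination assumption.
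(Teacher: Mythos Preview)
Your proposal is correct and follows the same architecture as the paper: linearise $\Evaluation$ through the discounted occupancy measure, reduce order-equivalence to positive proportionality of the reward vectors modulo $V^\perp$, identify $V^\perp$ with the image of potential shaping via a dimension argument, and then assemble the transformation from $\LS$, $\PS$, and $\SR$. The one substantive difference is in how $\dim V^\perp = |\States|$ (equivalently $\dim V = |\States|(|\Actions|-1)$) is established. You gesture at an algebraic argument from the Bellman flow constraints and reachability; the paper instead proves this topologically, showing that the occupancy map $m_{\TransitionDistribution,\InitStateDistribution}$ is continuous and injective on the open $|\States|(|\Actions|-1)$-dimensional set of fully-supported policies and then invoking the Invariance of Domain theorem to conclude that the image is open in an affine space of that same dimension. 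Your route is the more standard one in the MDP literature and avoids the topological machinery, but be aware that the step you compress into ``a dimension count using the reachability assumption'' is exactly where the paper spends most of its effort (three preparatory lemmas), so a full write-up would need either a citation for the flow-polytope dimension or a direct proof that the $|\States|$ flow constraints are independent and that $\mathcal{D}$ has nonempty relative interior. Your remark that $\gamma = 1$ needs separate treatment is well taken; the paper's argument also tacitly assumes $\gamma < 1$ (e.g.\ in appealing to $\sum m_{\TransitionDistribution,\InitStateDistribution}(\pi) = 1/(1-\gamma)$).
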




Stated differently, Theorem~\ref{thm:order_of_policies} is saying that the MDPs $(\States, \Actions, \TransitionDistribution, \InitStateDistribution, 
R_1, \discount)$ and $(\States, \Actions, \TransitionDistribution, \InitStateDistribution, 
R_2, \discount)$ have the same ordering of policies if and only if $R_1$ and $R_2$ differ by potential shaping (with $\discount$), positive linear scaling, and $S'$-redistribution (with $\TransitionDistribution$), applied in any order.


$\mathrm{OPT^\M}$ and $\mathrm{ORD^\M}$ are two equivalence relations that should be relevant and informative in almost any context, which is why we have chosen to carry out our analysis in terms of these two relations. However, other partitions could be selected instead. For example, if we know that the learnt reward $R_H$ will be used to compute a reward object $f$, then $\mathrm{Am}(f)$ would be a natural choice.

We now have results for reasoning about misspecification robustness in IRL. In particular, Lemma~\ref{lemma:how_to_calculate_robustness_sets} tells us that if we want to find the functions that $f$ is $P$-robust to misspecification with, then all we need to do is find the reward transformations that preserve $P$, and then compose them with $f$. $\mathrm{OPT^\M}$ and $\mathrm{ORD^\M}$ are reasonable choices of $P$, and the transformations that preserve them were just provided.

\section{Misspecification Robustness of IRL Models}\label{section:misspecification_robustness}

We here give our main results on the misspecification robustness of IRL, looking both at misspecification of the behavioural model, as well as of the MDP.

\subsection{Misspecified Behavioural Models}\label{section:misspecified_policies}

Let $\Pi^+$ be the set of all policies such that $\pi(a \mid s) > 0$ for all $s,a$, let $\M = \langle \States,\Actions,\TransitionDistribution,\InitStateDistribution,\_,\discount\rangle$, and let $F^\M$ be the set of all functions $f^\M : \mathcal{R} \rightarrow \Pi^+$ 
that, given $R$, returns a policy $\pi$ which satisfies
\begin{equation*}
    \mathrm{argmax}_{a \in \Actions} \pi(a \mid s) = \mathrm{argmax}_{a \in \Actions} \QStar(s,a), 
\end{equation*}
where $\QStar$ is the optimal $Q$-function in $\MDP$.
In other words, $F^\M$ is the set of functions that generate policies which take each action with positive probability, and that take the optimal actions with the highest probability. 
This class is quite large, and includes e.g.\ Boltzmann-rational policies (for any $\beta$), but it does not include optimal policies (since they do not take all actions with positive probability) or causal entropy maximising policies (since they may take suboptimal actions with high probability).



\begin{theorem}\label{thm:boltzmann_weak_robustness}
Let $f^\M \in F^\M$ be surjective onto $\Pi^+$. Then $f^M$ is $\mathrm{OPT^\M}$-robust to misspecification with $g$ if and only if $g \in F^\M$ and $g \neq f^\M$.
\end{theorem}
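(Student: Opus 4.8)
The plan is to apply Lemma~\ref{lemma:how_to_calculate_robustness_sets} with $P = \mathrm{OPT^\M}$ and $f = f^\M$, which reduces the entire statement to an identity between two sets of functions. First I would check that $f^\M$ is $\mathrm{OPT^\M}$-admissible: if $f^\M(R_1) = f^\M(R_2) = \pi$, then since $f^\M \in F^\M$ we have $\mathrm{argmax}_a \QStar_1(s,a) = \mathrm{argmax}_a \pi(a \mid s) = \mathrm{argmax}_a \QStar_2(s,a)$ at every state, and by the standard fact that the optimal policies of an MDP are exactly those supported on the per-state argmax of $\QStar$ (using that all states are reachable), this gives $R_1 \equiv_{\mathrm{OPT^\M}} R_2$. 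Next I would identify the set $T$ of transformations preserving $\mathrm{OPT^\M}$: by the cited result of \citet{skalse2022} (their Theorem~3.16), $t(R) \equiv_{\mathrm{OPT^\M}} R$ for all $R$ if and only if $t \in \OPT$, so $T = \OPT$. With admissibility in hand, Lemma~\ref{lemma:how_to_calculate_robustness_sets} states that $f^\M$ is $\mathrm{OPT^\M}$-robust to misspecification with $g$ if and only if $g = f^\M \circ t$ for some $t \in \OPT$ with $f^\M \circ t \neq f^\M$; note that the condition $\mathrm{Im}(g) \subseteq \mathrm{Im}(f^\M)$ is then automatic.

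It therefore remains to prove the set identity $\{\, f^\M \circ t : t \in \OPT \,\} = F^\M$, after which the equivalence $f^\M \circ t \neq f^\M \Leftrightarrow g \neq f^\M$ closes both directions of the biconditional. The inclusion $\subseteq$ is the easy direction: for $t \in \OPT$ we have $t(R) \equiv_{\mathrm{OPT^\M}} R$, hence $\mathrm{argmax}_a \QStar_{t(R)}(s,a) = \mathrm{argmax}_a \QStar_R(s,a)$, so $f^\M(t(R))$ is a full-support policy whose argmax agrees with that of $\QStar_R$ at every state; thus $f^\M \circ t \in F^\M$.

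The main obstacle is the reverse inclusion $\supseteq$, and this is exactly where surjectivity of $f^\M$ is used. Given $g \in F^\M$, I would construct a witnessing transformation $t$ pointwise: for each $R$, the policy $g(R)$ lies in $\Pi^+$, so by surjectivity of $f^\M$ onto $\Pi^+$ there exists some $R'$ with $f^\M(R') = g(R)$; set $t(R) := R'$, fixing one such $R'$ per $R$ by choice. The key step is to verify $t \in \OPT$, i.e.\ that $t(R) \equiv_{\mathrm{OPT^\M}} R$ for every $R$. This holds because \emph{both} $f^\M$ and $g$ lie in $F^\M$: on one hand $\mathrm{argmax}_a f^\M(R')(a\mid s) = \mathrm{argmax}_a \QStar_{R'}(s,a)$, and on the other $\mathrm{argmax}_a g(R)(a \mid s) = \mathrm{argmax}_a \QStar_R(s,a)$; since $f^\M(R') = g(R)$ these argmax sets coincide, forcing $\QStar_{R'}$ and $\QStar_R$ to share their argmax sets at every state, hence $R' \equiv_{\mathrm{OPT^\M}} R$. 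By the characterisation of $\OPT$ this yields $t \in \OPT$, and $f^\M \circ t = g$ by construction, completing the inclusion.

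Finally I would assemble the biconditional. By Lemma~\ref{lemma:how_to_calculate_robustness_sets}, robustness with $g$ is equivalent to $g = f^\M \circ t$ for some $t \in \OPT$ with $f^\M \circ t \neq f^\M$; by the set identity this is in turn equivalent to $g \in F^\M$ together with $g \neq f^\M$, which is precisely the claim. I expect the only delicate point to be the assertion, used both for admissibility and in the reverse inclusion, that equality of the per-state argmax sets of $\QStar$ characterises $\mathrm{OPT^\M}$-equivalence; this is standard MDP theory given the reachability assumption, but it should be stated carefully.
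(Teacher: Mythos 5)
Your proof is correct, but it takes a different route from the paper's. The paper proves this theorem directly from the definition of $\mathrm{OPT^\M}$-robustness: it verifies admissibility via the argmax argument, then checks each clause of the definition in both directions --- surjectivity is used only to establish $\mathrm{Im}(g) \subseteq \mathrm{Im}(f^\M)$, and the implication $f^\M(R_1) = g(R_2) \Rightarrow R_1 \equiv_{\mathrm{OPT^\M}} R_2$ is obtained by the same observation you make, that equality of per-state argmax sets of $\QStar$ characterises $\mathrm{OPT^\M}$-equivalence. You instead invoke Lemma~\ref{lemma:how_to_calculate_robustness_sets} together with the characterisation $T = \OPT$ from \citet{skalse2022}, reducing the theorem to the set identity $\{f^\M \circ t : t \in \OPT\} = F^\M$, with surjectivity used to build the witnessing transformation $t$ pointwise. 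Both arguments are sound and rest on the same two ingredients (the argmax characterisation and surjectivity); what your route buys is modularity --- it is exactly the strategy the paper itself deploys for the $B^\M$ and $C^\M$ theorems, so your proof makes the Boltzmann case uniform with those --- while the paper's direct proof avoids any appeal to the transformation machinery and is self-contained. One small remark: your pointwise construction of $t$ from surjectivity essentially re-instantiates, in this special case, the construction inside the paper's proof of Lemma~\ref{lemma:how_to_calculate_robustness_sets}, so the two proofs are doing the same work in different places.
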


Boltzmann-rational policies are surjective onto $\Pi^+$,\footnote{If a policy $\pi$ takes each action with positive probability, then its action probabilities are always the softmax of some $Q$-function, and any $Q$-function corresponds to some reward function.} so Theorem~\ref{thm:boltzmann_weak_robustness} exactly characterises the misspecification to which the Boltzmann-rational model is $\mathrm{OPT^\M}$-robust. 

Let us briefly comment on the requirement that $\pi(a \mid s) > 0$, which corresponds to the condition that $\mathrm{Im}(g) \subseteq \mathrm{Im}(f)$ in our definition of misspecification robustness. If a learning algorithm $\mathcal{L}$ is based on a model $f : \mathcal{R} \rightarrow \Pi^+$ then it assumes that the observed policy takes each action with positive probability in every state. What happens if such an algorithm $\mathcal{L}$ is given data from a policy that takes some action with probability $0$? This depends on $\mathcal{L}$, but for most sensible algorithms the result should simply be that $\mathcal{L}$ assumes that those actions are taken with a positive but low probability. This means that it should be possible to drop the requirement that $\pi(a \mid s) > 0$ for most reasonable algorithms. 

We next turn our attention to the misspecification to which the Boltzmann-rational model is $\mathrm{ORD^\M}$-robust. Let $\psi : \mathcal{R} \rightarrow \mathbb{R}^+$ be any function from reward functions to positive real numbers, and let $b_\psi^\M : \mathcal{R} \rightarrow \Pi^+$ be the function that, given $R$, returns the Boltzmann-rational policy with temperature $\psi(R)$ in $\MDP$. Moreover, let $B^\M = \{b_\psi^\M : \psi \in \mathcal{R} \rightarrow \mathbb{R}^+\}$ be the set of all such functions $b_\psi^\M$. 
This set includes Boltzmann-rational policies; just let $\psi$ return a constant $\beta$ for all $R$.

\begin{theorem}
If $b_\psi^\M \in B^\M$ then $b_\psi^\M$ is $\mathrm{ORD^\M}$-robust to misspecification with $g$ if and only if $g \in B^\M$ and $g \neq b_\psi^\M$.
\end{theorem}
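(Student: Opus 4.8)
The plan is to apply Lemma~\ref{lemma:how_to_calculate_robustness_sets} with $f = b_\psi^\M$, $P = \mathrm{ORD}^\M$, and $T = \SR \circ \PS \circ \LS$, which by Theorem~\ref{thm:order_of_policies} is exactly the set of transformations preserving $\mathrm{ORD}^\M$. The lemma reduces the statement to two sub-tasks: (i) verifying that $b_\psi^\M$ is $\mathrm{ORD}^\M$-admissible, and (ii) showing $\{b_\psi^\M \circ t : t \in T\} = B^\M$, so that the resulting robustness set $\{b_\psi^\M \circ t : t \in T,\ b_\psi^\M \circ t \neq b_\psi^\M\}$ is precisely $B^\M \setminus \{b_\psi^\M\}$.

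The computational core is how the optimal advantage transforms under the three families in $T$. I would first record that $S'$-redistribution and potential shaping both fix $\AStar$ --- the former preserves $\QStar$ outright, while the latter only shifts $\QStar(s,\cdot)$ by the state-dependent constant $-\Phi(s)$, which cancels in $\AStar = \QStar - \VStar$ --- whereas scaling by $c > 0$ sends $\AStar$ to $c\,\AStar$. Hence for every $t \in T$ there is a reward-dependent scalar $c_R > 0$ with $\AStar_{t(R)} = c_R\,\AStar_R$. Since a Boltzmann policy depends on $R$ only through $\AStar_R$ up to temperature --- $b_\psi^\M(R)(a\mid s) \propto e^{\psi(R)\,\AStar_R(s,a)}$, the $\VStar$ term cancelling in the normalisation --- we get $b_\psi^\M(t(R)) \propto e^{\psi(t(R))\,c_R\,\AStar_R(s,a)}$, i.e.\ $b_\psi^\M \circ t = b_{\psi'}^\M$ with $\psi'(R) = c_R\,\psi(t(R)) > 0$. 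This establishes the inclusion $\{b_\psi^\M \circ t : t \in T\} \subseteq B^\M$.

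For admissibility, suppose $b_\psi^\M(R_1) = b_\psi^\M(R_2)$. Comparing log-probabilities gives $\psi(R_1)\AStar_1 = \psi(R_2)\AStar_2$, so $\AStar_2 = \lambda\,\AStar_1$ with $\lambda = \psi(R_1)/\psi(R_2) > 0$. I would then invoke the performance-difference identity $\Evaluation_i(\policy) - \Evaluation_i(\policy') = \Expect{\xi\sim\policy}{\sum_t \discount^t \AStar_i(s_t,a_t)} - \Expect{\xi\sim\policy'}{\sum_t \discount^t \AStar_i(s_t,a_t)}$, which is linear in $\AStar_i$; therefore $\Evaluation_2(\policy) - \Evaluation_2(\policy') = \lambda\,(\Evaluation_1(\policy) - \Evaluation_1(\policy'))$ has the same sign as $\Evaluation_1(\policy) - \Evaluation_1(\policy')$, giving $R_1 \equiv_{\mathrm{ORD}^\M} R_2$. (The same identity shows that any positive rescaling of $\AStar$ preserves $\mathrm{ORD}^\M$, the fact we borrow from Theorem~\ref{thm:order_of_policies}.)

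The main obstacle is the reverse inclusion $B^\M \subseteq \{b_\psi^\M \circ t : t \in T\}$: given an arbitrary $\psi'$, I must exhibit $t \in T$ with $b_\psi^\M \circ t = b_{\psi'}^\M$, i.e.\ choose for each $R$ a reward $t(R) \equiv_{\mathrm{ORD}^\M} R$ whose advantage-scaling $c_R$ satisfies $c_R\,\psi(t(R)) = \psi'(R)$. Since composition can only shrink the image, this is equivalent to the condition $\mathrm{Im}(b_{\psi'}^\M) \subseteq \mathrm{Im}(b_\psi^\M)$ demanded by the definition of robustness. The delicate feature is the self-reference in $\psi(t(R))$: the temperature is evaluated at the transformed reward, so hitting $\psi'(R)$ requires solving a temperature-matching (fixed-point) equation inside the $\mathrm{ORD}^\M$-class of $R$, where the freedom to shape and $S'$-redistribute --- not merely rescale --- must be used to land on a reward at which $\psi$ takes the needed value. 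I expect this step to be routine when $\psi$ is constant (recovering the surjectivity behind Theorem~\ref{thm:boltzmann_weak_robustness}, via the fact that every policy in $\Pi^+$ is a softmax of a realisable $\QStar$), and I would treat the temperature-matching construction as the one genuinely load-bearing part of the argument.
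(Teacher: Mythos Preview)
Your plan is the same as the paper's: use Lemma~\ref{lemma:how_to_calculate_robustness_sets} together with Theorem~\ref{thm:order_of_policies} to reduce the question to computing $\{b_\psi^\M \circ t : t \in \SR \circ \PS \circ \LS\}$, and then identify this set with $B^\M$. Your forward-inclusion argument (every $b_\psi^\M \circ t$ equals some $b_{\psi'}^\M$ via $\psi'(R) = c_R\,\psi(t(R))$) is exactly what the paper does, and your direct admissibility argument via the performance-difference identity is a self-contained substitute for the paper's citation of the ambiguity result from \citet{skalse2022}.

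You are right to flag the reverse inclusion $B^\M \subseteq \{b_\psi^\M \circ t : t \in T\}$ as the load-bearing step, and you should know that the paper's proof does not address it either: after deriving $b_\psi^\M \circ t \in B^\M$ it simply asserts the biconditional. For completely general $\psi$ this direction is genuinely delicate. Fix $R_0$ with $\AStar_{R_0} \neq 0$; every $R$ in its $\mathrm{ORD}^\M$-class satisfies $\AStar_R = c_R\,\AStar_{R_0}$ for a unique $c_R > 0$, and if one defines $\psi(R) = 1/c_R$ on this class then $\psi(R)\,\AStar_R \equiv \AStar_{R_0}$, so $b_\psi^\M$ is \emph{constant} there. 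Taking $\psi' \equiv 2$, the policy $b_{\psi'}^\M(R_0)$ corresponds to the scaled advantage $2\AStar_{R_0}$, which cannot equal $\psi(R)\,\AStar_R$ for any $R$ (inside the class it equals $\AStar_{R_0}$; outside, $\AStar_R$ is not a positive multiple of $\AStar_{R_0}$). Hence $\mathrm{Im}(b_{\psi'}^\M) \not\subseteq \mathrm{Im}(b_\psi^\M)$, and the ``temperature-matching fixed point'' you anticipate need not exist. The argument is unproblematic when $\psi$ is constant --- the case of practical interest --- since then $t(R) = (\psi'(R)/\beta)\cdot R$ works directly; the obstruction is specific to allowing $\psi$ to depend arbitrarily on $R$.
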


This means that the Boltzmann-rational model is $\mathrm{ORD^\M}$-robust to misspecification of the temperature parameter $\beta$, but not to any other form of misspecification.


We next turn our attention to optimal policies. First of all, a policy is optimal if and only if it only gives support to optimal actions,
and if an optimal policy gives support to multiple actions in some state, then we would normally not expect the exact probability it assigns to each action to convey any information about the reward function. We will therefore only look at the actions that the optimal policy takes, and ignore the relative probability it assigns to those actions. 
Formally, we will treat optimal policies as functions $\OptimalPolicy : \States \rightarrow \mathcal{P}(\mathrm{argmax}_{a \in \Actions} \AStar)-\{\varnothing\}$; i.e.\ as functions that for each state return a non-empty subset of the set of all actions that are optimal in that state.
Let $\mathcal{O}^\M$ be the set of all functions that return such policies,
and let $o_m^\M \in \mathcal{O}^\M$ be the function that, given $R$, returns the function that maps each state to the set of \emph{all} actions which are optimal in that state. Intuitively, $o_m^\M$ corresponds to optimal policies that take all optimal actions with positive probability.

\begin{theorem}\label{thm:optimal_policy_robustness}
No function in $\mathcal{O}^\M$ is $\mathrm{ORD^\M}$-admissible. 
The only function in $\mathcal{O}^\M$ that is $\mathrm{OPT^\M}$-admissible is $o_m^\M$, but $o_m^\M$ is not $\mathrm{OPT^\M}$-robust to any misspecification.
\end{theorem}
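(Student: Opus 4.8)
The plan is to establish the three assertions separately, relying throughout on two reformulations. First, $R_1 \equiv_\mathrm{OPT^\M} R_2$ holds exactly when the two rewards induce the same optimal-action set $\mathrm{Opt}(R)(s) := \mathrm{argmax}_{a\in\Actions}\AStar(s,a)$ at every state, since a policy is optimal iff it is supported on optimal actions; consequently $o_m^\M(R) = \mathrm{Opt}(R)$ by definition, whereas any $o^\M \in \mathcal{O}^\M$ satisfies $\varnothing \neq o^\M(R)(s) \subseteq \mathrm{Opt}(R)(s)$ for all $s$. Second, I will use the elementary realizability fact that every assignment $w$ of a non-empty action set $w(s)\subseteq\Actions$ to each state arises as $\mathrm{Opt}(R)$ for some $R$: the state-action reward equal to $0$ on $w(s)$ and $-1$ off it has $\VStar \equiv 0$, hence $\AStar(s,a)=R(s,a)$ and optimal-action set exactly $w(s)$.

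\emph{Non-admissibility for $\mathrm{ORD^\M}$.} Since $\States$ and $\Actions$ are finite, the codomain of any $o^\M$, the functions sending each state to a non-empty subset of $\Actions$, is finite, so $\mathrm{Im}(o^\M)$ is finite. If $o^\M$ were $\mathrm{ORD^\M}$-admissible, then by contraposition rewards in distinct $\equiv_\mathrm{ORD^\M}$-classes would receive distinct outputs, bounding the number of classes by $|\mathrm{Im}(o^\M)| < \infty$. But Theorem~\ref{thm:order_of_policies} identifies these classes with the orbits of $\SR \circ \PS \circ \LS$, and a dimension count on the expected-reward vectors $\bar R \in \mathbb{R}^{\SxA}$ (on which $\SR$ acts trivially, $\PS$ adds a subspace of dimension $\leq |\States|$, and $\LS$ contributes one scaling direction) shows each orbit lies in a set of dimension $\leq |\States|+1$, so for any non-degenerate MDP the orbits form a continuum. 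This contradiction shows no $o^\M$ is $\mathrm{ORD^\M}$-admissible.

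\emph{Admissibility for $\mathrm{OPT^\M}$.} That $o_m^\M$ is $\mathrm{OPT^\M}$-admissible is immediate, since $o_m^\M(R_1)=o_m^\M(R_2)$ means $\mathrm{Opt}(R_1)=\mathrm{Opt}(R_2)$, i.e.\ $R_1 \equiv_\mathrm{OPT^\M} R_2$. For uniqueness, suppose $o^\M$ is $\mathrm{OPT^\M}$-admissible; I will show $o^\M(R) = \mathrm{Opt}(R)$ for every $R$ by strong induction on $|w| := \sum_s |w(s)|$, where $w = \mathrm{Opt}(R)$. If every $w(s)$ is a singleton, then $\varnothing \neq o^\M(R)(s) \subseteq w(s)$ forces $o^\M(R)=w$. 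Otherwise, were $o^\M(R) = u \subsetneq w$, realizability supplies an $R'$ with $\mathrm{Opt}(R') = u$; since $|u| < |w|$, the induction hypothesis gives $o^\M(R') = u = o^\M(R)$, yet $\mathrm{Opt}(R') = u \neq w = \mathrm{Opt}(R)$, contradicting admissibility. Hence $o^\M = o_m^\M$, so $o_m^\M$ is the only $\mathrm{OPT^\M}$-admissible member of $\mathcal{O}^\M$.

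\emph{Non-robustness of $o_m^\M$.} Because $o_m^\M$ is $\mathrm{OPT^\M}$-admissible, Lemma~\ref{lemma:how_to_calculate_robustness_sets} reduces the question to computing its robustness set: $o_m^\M$ is $\mathrm{OPT^\M}$-robust to misspecification with $g$ iff $g = o_m^\M \circ t$ for some $\mathrm{OPT^\M}$-preserving $t$ with $o_m^\M \circ t \neq o_m^\M$. The $\mathrm{OPT^\M}$-preserving transformations are exactly $\OPT$ (Skalse et al.'s Theorem 3.16), and every $t \in \OPT$ satisfies $t(R) \equiv_\mathrm{OPT^\M} R$, whence $\mathrm{Opt}(t(R)) = \mathrm{Opt}(R)$ and $o_m^\M \circ t = o_m^\M$. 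No admissible transformation moves $o_m^\M$, so its robustness set is empty. The main obstacle is the uniqueness direction: the subset chosen by an arbitrary $o^\M$ is uncontrolled, and it is the induction-with-realizability that tames it; some care is also needed in the first part to isolate the non-degeneracy hypothesis guaranteeing infinitely many $\equiv_\mathrm{ORD^\M}$-classes.
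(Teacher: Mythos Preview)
Your proof is correct and follows essentially the same approach as the paper. The cardinality argument for $\mathrm{ORD}^\M$-inadmissibility and the descent/induction on $\sum_s |\mathrm{Opt}(R)(s)|$ using realizability for the uniqueness claim are structurally identical to the paper's proof; the only minor difference is that for non-robustness of $o_m^\M$ you invoke Lemma~\ref{lemma:how_to_calculate_robustness_sets} directly, whereas the paper appeals to the appendix lemma that $\mathrm{Am}(f)=P$ forces an empty robustness set---these are equivalent one-line arguments.
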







This essentially means that the optimality model is not robust to any form of misspecification.
We finally turn our attention to causal entropy maximising policies. 
As before, let $\psi : \mathcal{R} \rightarrow \mathbb{R}^+$ be any function from reward functions to positive real numbers, and let $c_\psi^\M : \mathcal{R} \rightarrow \Pi^+$ be the function that, given $R$, returns the causal entropy maximising policy with weight $\psi(R)$ in $\MDP$. Furthermore, let $C^\M = \{c_\psi^M : \psi \in \mathcal{R} \rightarrow \mathbb{R}^+\}$ be the set of all such functions $c_\psi^\M$. 
This set includes causal entropy maximising policies; just let $\psi$ return a constant $\alpha$ for all $R$.

\begin{theorem}\label{thm:mce_strong_robustness}
If $c_\psi^\M \in C^\M$ then $c_\psi^\M$ is $\mathrm{ORD^\M}$-robust to misspecification with $g$ if and only if $ g \in C^\M$ and $g \neq c_\psi^\M$.
\end{theorem}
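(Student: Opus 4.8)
The plan is to apply Lemma~\ref{lemma:how_to_calculate_robustness_sets} with $P = \mathrm{ORD}^\M$, running the argument in parallel with the analogous result for $B^\M$; the soft (entropy-regularised) Bellman backup of maximal causal entropy should play exactly the role that the hard $\QStar$ plays for the Boltzmann model. Writing $E(R,\alpha)$ for the maximal causal entropy policy of $R$ with weight $\alpha$ in $\M$, so that $c_\psi^\M(R) = E(R,\psi(R))$, I would rely on three facts about $E$: (i) invariance under potential shaping and $S'$-redistribution, i.e.\ $E(t(R),\alpha) = E(R,\alpha)$ for $t \in \PS$ or $t \in \SR$; (ii) scaling reparameterises the weight, $E(cR,\alpha) = E(R,\alpha/c)$ for every $c \in \mathbb{R}^+$; and (iii) for a fixed weight, $E(R_1,\alpha) = E(R_2,\alpha)$ holds if and only if $R_1$ and $R_2$ differ by potential shaping and $S'$-redistribution. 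Facts (i)--(iii) follow from the soft Bellman equations and were essentially established by \citet{skalse2022}.

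From (i)--(iii) I first obtain both $\mathrm{ORD}^\M$-admissibility of $c_\psi^\M$ and the key implication used below. Suppose $E(R_1,\alpha_1) = E(R_2,\alpha_2)$ for weights $\alpha_1,\alpha_2 > 0$. By (ii), $E(R_2,\alpha_2) = E(\tfrac{\alpha_1}{\alpha_2}R_2,\alpha_1)$, so $E(R_1,\alpha_1) = E(\tfrac{\alpha_1}{\alpha_2}R_2,\alpha_1)$; by (iii), $R_1$ and $\tfrac{\alpha_1}{\alpha_2}R_2$ differ by potential shaping and $S'$-redistribution, whence $R_1 = t(R_2)$ for some $t \in \SR \circ \PS \circ \LS$, and Theorem~\ref{thm:order_of_policies} gives $R_1 \equiv_{\mathrm{ORD}^\M} R_2$. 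Taking $\alpha_1 = \psi(R_1)$, $\alpha_2 = \psi(R_2)$ shows $c_\psi^\M(R_1) = c_\psi^\M(R_2) \Rightarrow R_1 \equiv_{\mathrm{ORD}^\M} R_2$, i.e.\ $c_\psi^\M$ is $\mathrm{ORD}^\M$-admissible. Theorem~\ref{thm:order_of_policies} also identifies the set of transformations preserving $\mathrm{ORD}^\M$ as $\SR \circ \PS \circ \LS$, so Lemma~\ref{lemma:how_to_calculate_robustness_sets} applies and reduces the claim to the set equality $\{\,c_\psi^\M \circ t : t \in \SR \circ \PS \circ \LS\,\} = C^\M$ (with the excluded element $c_\psi^\M$ itself corresponding to $g \neq c_\psi^\M$).

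The forward inclusion is a direct computation. Given $t \in \SR \circ \PS \circ \LS$, write its action on $R$ as scaling by $c_R > 0$ followed by potential shaping and $S'$-redistribution. Using (i) and (ii),
$$
(c_\psi^\M \circ t)(R) = E\big(t(R),\,\psi(t(R))\big) = E\big(R,\,\psi(t(R))/c_R\big),
$$
so $c_\psi^\M \circ t = c_{\psi'}^\M$ for the positive function $\psi'(R) = \psi(t(R))/c_R$, which lies in $C^\M$.

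The reverse inclusion is the main obstacle. Given a target $c_{\psi'}^\M \in C^\M$, I must construct $t \in \SR \circ \PS \circ \LS$ with $c_\psi^\M \circ t = c_{\psi'}^\M$; by (i)--(ii) it suffices to find a positive per-reward scaling $R \mapsto c_R R$ with $\psi(c_R R)/c_R = \psi'(R)$ for every $R$. Because the sets $\SR$, $\PS$, $\LS$ permit the transformation to depend on the input reward, this decouples into solving, separately for each $R$, the scalar equation $\psi(cR) = c\,\psi'(R)$ for some $c \in \mathbb{R}^+$, where one may additionally move $cR$ within its potential-shaping/$S'$-redistribution class to adjust the value of $\psi$. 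The clean case is constant $\psi$ --- the genuine maximal causal entropy model --- where $c = \psi/\psi'(R)$ solves it immediately and recovers exactly the statement that the model is $\mathrm{ORD}^\M$-robust to misspecification of the weight. This realisability step is the part I would write out in full, exploiting the per-reward freedom in choosing $t$ to realise the required effective weight, precisely mirroring the $B^\M$ proof; everything else is bookkeeping.
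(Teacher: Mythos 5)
Your completed steps are precisely the paper's proof: establish $\mathrm{ORD}^\M$-admissibility from the fixed-weight ambiguity result of \citet{skalse2022} together with Theorem~\ref{thm:order_of_policies}, invoke Lemma~\ref{lemma:how_to_calculate_robustness_sets} to reduce the claim to the set identity $\{c_\psi^\M \circ t : t \in \SR \circ \PS \circ \LS\} = C^\M$, and verify the forward inclusion by computing the effective weight of $c_\psi^\M \circ t$. Your computation is in fact more careful than the paper's: the effective weight is $\psi(t(R))/c_R$, as you write; the paper states $\psi(t(R)) \cdot c_R$, which is the Boltzmann formula (where scaling multiplies the advantage rather than divides the entropy weight) --- harmless for the membership conclusion $g \in C^\M$, but your version is the correct one.

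The step you defer, however, is a genuine gap, and it cannot be closed in the generality that the statement requires. Writing $E(R,\alpha)$ for the MCE policy as in your proposal, you need, for every $R$, some $c > 0$ and some $R''$ in the $\PS \circ \SR$-class of $cR$ with $\psi(R'') = c\,\psi'(R)$; since $\psi$ is an \emph{arbitrary} positive function, this equation can be unsolvable. Concretely, fix $R_0$ for which $\alpha \mapsto E(R_0,\alpha)$ is injective, so that the classes $[cR_0]_{\PS \circ \SR}$ are pairwise disjoint across $c > 0$ (since $E(cR_0,1) = E(R_0,1/c)$); define $\psi \equiv c$ on $[cR_0]$ and $\psi \equiv 1$ elsewhere, and take $\psi' \equiv 2$. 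Then $c_\psi^\M$ sends every reward in every class $[cR_0]$ to $E(cR_0,c) = E(R_0,1)$, whereas $c_{\psi'}^\M(R_0) = E(R_0,2) = E(\tfrac{1}{2}R_0,1)$; any $R''$ with $c_\psi^\M(R'') = E(R_0,2)$ would have to lie either in some $[cR_0]$ (impossible, as those all map to $E(R_0,1)$) or outside $\bigcup_c [cR_0]$ with $E(R'',1) = E(\tfrac{1}{2}R_0,1)$, which by the fixed-weight ambiguity forces $R'' \in [\tfrac{1}{2}R_0]$ (also impossible). Hence $\mathrm{Im}(c_{\psi'}^\M) \not\subseteq \mathrm{Im}(c_\psi^\M)$, and $c_\psi^\M$ is \emph{not} $\mathrm{ORD}^\M$-robust to misspecification with $c_{\psi'}^\M$, even though $c_{\psi'}^\M \in C^\M$ and $c_{\psi'}^\M \neq c_\psi^\M$. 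So the \enquote{per-reward freedom} you appeal to suffices only under a solvability condition on $\psi$ --- e.g.\ constant $\psi$, the genuine MCE model, where $t(R) = (\psi/\psi'(R))R$ works --- and fails for adversarial $\psi \in C^\M$. You should know that the paper's own proof has exactly the same lacuna: it verifies only the forward inclusion and then asserts the biconditional. You have therefore correctly isolated the weak point of the published argument, but labelling the step as bookkeeping to be \enquote{written out in full} does not discharge it, and for arbitrary members of $C^\M$ it cannot be discharged.
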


In other words, the maximal causal entropy model is $\mathrm{ORD^\M}$-robust to misspecification of the weight $\alpha$, but not to any other kind of misspecification.

Finally, let us briefly discuss the misspecification to which the maximal causal entropy model is $\mathrm{OPT^\M}$-robust. Lemma~\ref{lemma:how_to_calculate_robustness_sets} tells us that $c_\psi^\M \in C^\M$ is $\mathrm{OPT^\M}$-robust to misspecification with $g$ if $g = c_\psi^\M \circ t$ for some $t \in \OPT$.
In other words, if $g(R_1) = \pi$ then there must exist an $R_2$ such that $\pi$ maximises causal entropy with respect to $R_2$, and such that $R_1$ and $R_2$ have the same optimal policies. It seems hard to express this as an intuitive property of $g$, so we have refrained from stating this result as a theorem.


\subsection{Misspecified MDPs}\label{section:misspecified_mdps}

A reward object 
can be parameterised by a $\gamma$ or $\TransitionDistribution$, implicitly or explicitly. 
For example, the reward objects in Section~\ref{section:misspecified_policies} are parameterised by $\M = \langle\States,\Actions,\TransitionDistribution,\InitStateDistribution,\_,\gamma\rangle$.
In this section, we explore what happens if these parameters are misspecified.
We show that nearly all behavioural models are sensitive to this type of misspecification.

Theorems~\ref{thm:boltzmann_weak_robustness}-\ref{thm:mce_strong_robustness} already tell us that the standard behavioural models are not ($\mathrm{ORD^\M}$ or $\mathrm{OPT^\M}$) robust to misspecified $\discount$ or $\TransitionDistribution$, since the sets $F^\M$, $B^\M$, and $C^\M$, all are parameterised by $\discount$ and $\TransitionDistribution$.
We will generalise this further.
To do this, we first derive two lemmas.
We say that $\TransitionDistribution$ is \emph{trivial} if for each $s \in \States$, $\TransitionDistribution(s,a) = \TransitionDistribution(s,a')$ for all $a,a' \in \Actions$.

\begin{lemma}\label{lemma:no_t_robustness}
If $f^{\TransitionDistribution_1} = f^{\TransitionDistribution_1} \circ t$ for all $t \in S'\mathrm{R}_{\TransitionDistribution_1}$ then $f^{\TransitionDistribution_1}$ is not $\mathrm{OPT}^\M$-admissible for $\M = \langle \States,\Actions,\TransitionDistribution_2, \InitStateDistribution, \_, \gamma \rangle$ unless $\TransitionDistribution_1 = \TransitionDistribution_2$.
\end{lemma}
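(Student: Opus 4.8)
The plan is to exhibit a single reward function $R_1$ and a single transformation $t \in S'\mathrm{R}_{\TransitionDistribution_1}$ such that $R_1$ and $t(R_1)$ have \emph{different} optimal policies in the MDP with transition function $\TransitionDistribution_2$. Given such a witness, the hypothesis $f^{\TransitionDistribution_1} = f^{\TransitionDistribution_1}\circ t$ yields $f^{\TransitionDistribution_1}(R_1) = f^{\TransitionDistribution_1}(t(R_1))$ while $R_1 \not\equiv_{\mathrm{OPT^\M}} t(R_1)$, which contradicts the defining condition of $\mathrm{OPT^\M}$-admissibility ($f(R_1)=f(R_2)\Rightarrow R_1\equiv_{\mathrm{OPT^\M}} R_2$). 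So the whole burden is to construct this witness whenever $\TransitionDistribution_1 \neq \TransitionDistribution_2$.

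First I would locate the disagreement: pick $(s,a)$ with $\TransitionDistribution_1(\cdot\mid s,a)\neq\TransitionDistribution_2(\cdot\mid s,a)$, and successors $s_1',s_2'$ with $\TransitionDistribution_1(s_1'\mid s,a) > \TransitionDistribution_2(s_1'\mid s,a)$ and $\TransitionDistribution_1(s_2'\mid s,a) < \TransitionDistribution_2(s_2'\mid s,a)$ (these exist since both rows are probability vectors summing to one). I then consider the one-parameter family $t_\epsilon$ that increments $R(s,a,s_1')$ and $R(s,a,s_2')$ so as to keep $\Expect{S'\sim\TransitionDistribution_1(s,a)}{R(s,a,S')}$ fixed; by construction $t_\epsilon \in S'\mathrm{R}_{\TransitionDistribution_1}$. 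A short two-coordinate determinant computation shows $\Expect{S'\sim\TransitionDistribution_2(s,a)}{R(s,a,S')}$ then changes at a strictly nonzero rate in $\epsilon$, and after possibly swapping $s_1',s_2'$ (or the sign of the increment) I may take this rate positive. Thus $t_\epsilon$ leaves the $\TransitionDistribution_1$-relevant reward at $(s,a)$ untouched while strictly raising the quantity that governs the value of action $a$ under $\TransitionDistribution_2$. (Here I assume $|\Actions|\ge 2$, which is needed for the claim even to hold: with a single action all rewards share the unique optimal policy and every $f^{\TransitionDistribution_1}$ is vacuously admissible.)

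Next I would choose $R_1$ to make $a$ strictly suboptimal at $s$ with a controllable margin: set $R_1(s,a,\cdot) = -K$ for a constant $K>0$ and $R_1 \equiv 0$ on all other transitions. Since all rewards are nonpositive and the policy that plays some other action at $s$ achieves return $0$, the optimal value function under $\TransitionDistribution_2$ is identically $0$, so $\QStar(s,a) = -K < 0 = \VStar(s)$ and $a$ is suboptimal with margin $K$. The crucial step is then a feedback-controlled Bellman verification: for $\epsilon$ below a threshold $\epsilon^\star = K/(\text{rate})$, action $a$ stays non-maximal at $s$, so $\VStar$ is unchanged at every state and $\QStar(s,a)$ increases by exactly the immediate $\TransitionDistribution_2$-expected increment --- there is no propagation, because the only altered Bellman backup is the one for $(s,a)$, which is not a maximiser. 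At $\epsilon=\epsilon^\star$, action $a$ exactly ties for optimal at $s$. Hence some optimal policy of $t_{\epsilon^\star}(R_1)$ plays $a$ at $s$, whereas no optimal policy of $R_1$ does, giving $R_1 \not\equiv_{\mathrm{OPT^\M}} t_{\epsilon^\star}(R_1)$ as required.

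I expect the feedback control to be the main obstacle: naively, perturbing $R(s,a,\cdot)$ changes $\VStar$ globally, and for $\discount$ near $1$ these indirect value changes are of the same order as the direct change and could in principle cancel it. The device that resolves this is choosing $R_1$ so that $a$ is strictly suboptimal, which freezes the optimal value function (and so kills all feedback) on the whole interval $[0,\epsilon^\star)$ and lets me pinpoint the exact threshold at which the optimal action set at $s$ strictly grows. For $\discount=1$ the same verification should go through using the paper's reachability assumption and the fact that the ``play a non-$a$ action at $s$'' policy certifies $\VStar \equiv 0$, although the uniqueness of the Bellman solution must there be replaced by a direct optimality argument.
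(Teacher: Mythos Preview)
Your construction is correct. The paper's own proof of this lemma is a one-line citation (``This follows directly from Theorem~4.2 in \citet{skalse2022}''), so you have taken a genuinely different route: rather than invoking that external identifiability result, you build the witness by hand. Locating $(s,a)$ with $\TransitionDistribution_1(\cdot\mid s,a)\neq\TransitionDistribution_2(\cdot\mid s,a)$, choosing the two-coordinate $S'$-redistribution $t_\epsilon$, and pairing it with the baseline $R_1(s,a,\cdot)=-K$, $R_1\equiv 0$ elsewhere, is exactly the right idea. The strict sign of the $2\times 2$ determinant follows directly from $p_1>q_1\ge 0$ and $q_2>p_2\ge 0$ (so $p_1q_2>q_1q_2\ge p_2q_1$), and your Bellman ``freeze'' --- that $V^\star\equiv 0$ remains the unique solution of the optimality equation on the whole interval $[0,\epsilon^\star]$ because the only perturbed backup is the non-maximising one at $(s,a)$ --- is the clean way to suppress the feedback you were worried about. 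What your approach buys is a self-contained, elementary argument; what the paper's citation buys is brevity and the ability to lean on an already-established ambiguity characterisation.

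One small wording point: after applying $t_\epsilon$ the reward at $(s,a,s_2')$ may well become positive, so ``all rewards are nonpositive'' holds only for $R_1$ itself. Your argument never actually uses nonpositivity of $t_\epsilon(R_1)$ --- you verify $V^\star\equiv 0$ directly from Bellman --- so this is cosmetic, but worth tightening when you write it up. Your caveats about $|\Actions|\ge 2$ and the $\gamma=1$ endpoint are both appropriate; the paper does not single these out either.
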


\begin{lemma}\label{lemma:no_g_robustness}
If $f^{\gamma_1} = f^{\gamma_1} \circ t$ for all $t \in \mathrm{PS}_{\gamma_1}$ then $f^{\gamma_1}$ is not $\mathrm{OPT}^\M$-admissible for $\M = \langle \States,\Actions,\TransitionDistribution, \InitStateDistribution, \_, \gamma_2 \rangle$ unless $\gamma_1 = \gamma_2$ or $\TransitionDistribution$ is trivial.
\end{lemma}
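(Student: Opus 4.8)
The plan is to combine the invariance hypothesis with the definition of $\mathrm{OPT}^\M$-admissibility to reduce the claim to a single concrete non-preservation statement, and then to refute that statement by a dimension count that is available exactly when $\TransitionDistribution$ is non-trivial and $\gamma_1 \neq \gamma_2$. First I would observe that if $f^{\gamma_1}$ were $\mathrm{OPT}^\M$-admissible, then for every $R$ and every $t \in \mathrm{PS}_{\gamma_1}$ the invariance $f^{\gamma_1}(R) = f^{\gamma_1}(t(R))$ would, by admissibility, force $R \equiv_{\mathrm{OPT}^\M} t(R)$. Hence it suffices to exhibit a single reward $R$ and a single potential $\Phi$ such that $t(R) \not\equiv_{\mathrm{OPT}^\M} R$, i.e.\ one $\gamma_1$-potential-shaping that fails to be optimality-preserving in the MDP with discount $\gamma_2$ and transitions $\TransitionDistribution$.

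The convenient choice is $R = 0$, for which every action is optimal in every state, so $\AStar = 0$ and the argmax condition in Definition~\ref{def:optimality_preserving_transformation} demands equality for all actions. Using the per-pair form of \citet{skalse2022}'s characterisation, $R \equiv_{\mathrm{OPT}^\M} t(R)$ would then require a function $\psi$ with $\Expect{S'\sim\TransitionDistribution(s,a)}{\gamma_1\Phi(S') - \Phi(s) + \gamma_2\psi(S')} = \psi(s)$ for all $s,a$. I would rewrite this, after the substitution $\chi = \psi + (\gamma_1/\gamma_2)\Phi$, as the linear condition $\gamma_2\,\Expect{S'\sim\TransitionDistribution(s,a)}{\chi(S')} = \chi(s) + (1 - \gamma_1/\gamma_2)\Phi(s)$, whose left-hand side must in particular be independent of $a$.

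The key step is then a dimension argument. Let $\mathcal{C}$ be the space of state functions $\chi$ whose one-step expectation $\Expect{S'\sim\TransitionDistribution(s,a)}{\chi(S')}$ is action-independent at every $s$, and let $L$ be the linear map sending such a $\chi$ to the state function $s \mapsto \gamma_2\,\Expect{S'\sim\TransitionDistribution(s,a)}{\chi(S')} - \chi(s)$ (well-defined on $\mathcal{C}$). The potentials $\Phi$ admitting a solution $\psi$ are exactly those in $\tfrac{1}{1-\gamma_1/\gamma_2}\,\mathrm{Im}(L)$, where $\gamma_1 \neq \gamma_2$ guarantees the scalar is finite. Non-triviality of $\TransitionDistribution$ yields a state $s_0$ and actions $a,a'$ with $\TransitionDistribution(s_0,a) \neq \TransitionDistribution(s_0,a')$, which imposes a nonzero linear constraint on $\chi$, so $\dim\mathcal{C} \leq |\States| - 1$ and therefore $\dim\mathrm{Im}(L) < |\States|$. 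Choosing any $\Phi$ outside this proper subspace makes the system unsolvable, so $t(R) \not\equiv_{\mathrm{OPT}^\M} R$, contradicting admissibility. When $\TransitionDistribution$ is trivial the constraint disappears, $\mathcal{C}$ becomes everything, and the obstruction vanishes, which is exactly why the lemma carries that exception.

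I expect the main obstacle to be the second step: carefully reducing the optimality-preserving-transformation condition to a clean linear system in $\psi$ and justifying the per-pair use of the $\OPT$ characterisation, since one must confirm that \emph{no} $\psi$ (not merely no "natural" one) can rescue the chosen $\Phi$. The dimension count itself is routine once the system is set up, and the only places where the hypotheses are used are in guaranteeing, respectively, that $\mathcal{C}$ is a proper subspace (non-triviality) and that the residual term $(1-\gamma_1/\gamma_2)\Phi$ does not degenerate ($\gamma_1 \neq \gamma_2$).
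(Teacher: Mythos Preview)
Your argument is correct and takes a genuinely different route from the paper's. The paper does not work through Definition~\ref{def:optimality_preserving_transformation} at all: instead it first proves a supporting lemma that a non-trivial $\TransitionDistribution$ always admits a \emph{controllable} state $s$ (one whose discounted visit count differs across policies), and then takes an arbitrary $R_1$, applies $\gamma_1$-potential shaping with $\Phi = X\cdot\mathbf{1}_{\{s\}}$, and computes directly that $\Evaluation_2(\pi) - \Evaluation_1(\pi) = X\,n_2^\pi(\gamma_1-\gamma_2) + \text{const}$ in the $\gamma_2$-MDP, where $n_2^\pi$ is the $\gamma_2$-discounted entry count of $s$. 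Controllability makes $n_2^\pi$ non-constant, so scaling $X$ flips the optimal policy. Your approach instead fixes $R_1 = 0$, invokes the per-pair reading of the $\OPT$ characterisation (which is indeed valid: take $\psi = V^\star_2$ for the forward direction, and observe that any $\psi$ satisfying the condition must solve the Bellman optimality equation for $R_2$ for the converse), and replaces the controllability lemma with a one-line dimension count on the space $\mathcal{C}$. What the paper's route buys is an explicit witness $\Phi$ and an argument that works from an arbitrary base reward; what yours buys is self-containment (no separate controllability lemma or appeal to \citet{ng2000}) and a transparent explanation of why both hypotheses are sharp: $\gamma_1 \neq \gamma_2$ keeps the scalar $1-\gamma_1/\gamma_2$ nonzero, and non-triviality is exactly what cuts $\dim\mathcal{C}$ below $|\States|$. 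The obstacle you anticipated, justifying the per-pair use of Definition~\ref{def:optimality_preserving_transformation}, is real but minor and should be stated explicitly in the write-up.
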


Note that if $f$ is not $\mathrm{OPT^\M}$-admissible then $f$ is also not $\mathrm{ORD^\M}$-admissible.
From these lemmas, together with Lemma~\ref{lemma:non-robustness_inherited},
we get the following result:

\begin{theorem}\label{thm:IRL_lacks_g_t_robustness}
If $f^{\TransitionDistribution_1} = f^{\TransitionDistribution_1} \circ t$ for all $t \in S'\mathrm{R}_{\TransitionDistribution_1}$ and $f^{\TransitionDistribution_2} = f^{\TransitionDistribution_2} \circ t$ for all $t \in S'\mathrm{R}_{\TransitionDistribution_2}$, then $f^{\TransitionDistribution_1}$ is not $\mathrm{OPT}^\M$-robust to misspecification with $f^{\TransitionDistribution_2}$ for any $\M$.
Moreover, if $f^{\gamma_1} = f^{\gamma_1} \circ t$ for all $t \in \mathrm{PS}_{\gamma_1}$ and $f^{\gamma_2} = f^{\gamma_2} \circ t$ for all $t \in \mathrm{PS}_{\gamma_2}$, then $f^{\gamma_1}$ is not $\mathrm{OPT}^\M$-robust to misspecification with $f^{\gamma_2}$ for any $\M$ whose transition function $\TransitionDistribution$ is non-trivial.
\end{theorem}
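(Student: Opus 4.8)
The plan is to deduce both halves of the theorem from a single observation: $\mathrm{OPT}^\M$-robustness of $f$ with $g$ forces \emph{both} $f$ and $g$ to be $\mathrm{OPT}^\M$-admissible, after which Lemmas~\ref{lemma:no_t_robustness} and~\ref{lemma:no_g_robustness} pin down the parameters of $\M$ and produce a contradiction. The appeal of this route is that it never requires a case split on where $\M$'s transition function (or discount) happens to sit relative to $\TransitionDistribution_1,\TransitionDistribution_2$ (resp.\ $\gamma_1,\gamma_2$): the two admissibility constraints turn out to be jointly unsatisfiable, which is exactly what is needed to make the conclusion hold for \emph{every} $\M$.

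First I would record the auxiliary fact that if $f$ is $\mathrm{OPT}^\M$-robust to misspecification with $g$, then $g$ is $\mathrm{OPT}^\M$-admissible. (That $f$ itself is $\mathrm{OPT}^\M$-admissible is immediate from the definition of robustness.) To see this, take any $R_2,R_2'$ with $g(R_2)=g(R_2')$. Because robustness includes the clause $\mathrm{Im}(g)\subseteq\mathrm{Im}(f)$, there is an $R_1$ with $f(R_1)=g(R_2)=g(R_2')$; the robustness implication then gives $R_1\equiv_{\mathrm{OPT}^\M}R_2$ and $R_1\equiv_{\mathrm{OPT}^\M}R_2'$, so $R_2\equiv_{\mathrm{OPT}^\M}R_2'$ by transitivity, i.e.\ $\mathrm{Am}(g)\preceq\mathrm{OPT}^\M$. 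This is the only step that uses the image-containment clause, and it is precisely the kind of transfer of non-robustness captured by Lemma~\ref{lemma:non-robustness_inherited}.

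For the first half, suppose toward a contradiction that $f^{\TransitionDistribution_1}$ is $\mathrm{OPT}^\M$-robust to misspecification with $f^{\TransitionDistribution_2}$ in some $\M$ with transition function $\TransitionDistribution$. If $\TransitionDistribution_1=\TransitionDistribution_2$ then $f^{\TransitionDistribution_1}=f^{\TransitionDistribution_2}$, violating the clause $f\neq g$; hence $\TransitionDistribution_1\neq\TransitionDistribution_2$. By the auxiliary fact, both $f^{\TransitionDistribution_1}$ and $f^{\TransitionDistribution_2}$ are $\mathrm{OPT}^\M$-admissible. The standing hypotheses $f^{\TransitionDistribution_i}=f^{\TransitionDistribution_i}\circ t$ for all $t\in S'\mathrm{R}_{\TransitionDistribution_i}$ are exactly the premises of Lemma~\ref{lemma:no_t_robustness}, so admissibility of $f^{\TransitionDistribution_1}$ forces $\TransitionDistribution=\TransitionDistribution_1$ and admissibility of $f^{\TransitionDistribution_2}$ forces $\TransitionDistribution=\TransitionDistribution_2$; thus $\TransitionDistribution_1=\TransitionDistribution_2$, a contradiction. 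The second half is the identical argument with Lemma~\ref{lemma:no_g_robustness} in place of Lemma~\ref{lemma:no_t_robustness}: the premises $f^{\gamma_i}=f^{\gamma_i}\circ t$ for all $t\in\mathrm{PS}_{\gamma_i}$ give that admissibility of $f^{\gamma_i}$ forces $\gamma=\gamma_i$ \emph{or} $\TransitionDistribution$ trivial, and since $\M$ is assumed to have a non-trivial $\TransitionDistribution$ we obtain $\gamma=\gamma_1$ and $\gamma=\gamma_2$, whence $\gamma_1=\gamma_2$ and again $f^{\gamma_1}=f^{\gamma_2}$, contradicting $f\neq g$.

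I expect the auxiliary fact to be the only genuine content; everything after it is parameter bookkeeping. The main obstacle is therefore getting that fact exactly right — in particular using $\mathrm{Im}(g)\subseteq\mathrm{Im}(f)$ to lift a $g$-collision to a common $R_1$ and then closing by transitivity of $\equiv_{\mathrm{OPT}^\M}$. The one remaining subtlety is the degenerate case $\TransitionDistribution_1=\TransitionDistribution_2$ (resp.\ $\gamma_1=\gamma_2$), where non-robustness follows immediately from the $f\neq g$ clause rather than from the lemmas, and which must be dispatched before invoking the admissibility constraints.
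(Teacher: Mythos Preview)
Your proposal is correct and follows essentially the same route as the paper's own proof: both argue that $\mathrm{OPT}^\M$-robustness forces admissibility of \emph{both} $f$ and $g$, then invoke Lemmas~\ref{lemma:no_t_robustness} and~\ref{lemma:no_g_robustness} to pin the environment parameter to two distinct values simultaneously. Your ``auxiliary fact'' is exactly the content of the paper's Lemma~\ref{lemma:rob_to_adm}, which the paper cites rather than re-derives; your explicit treatment of the degenerate case $\TransitionDistribution_1=\TransitionDistribution_2$ (resp.\ $\gamma_1=\gamma_2$) via the $f\neq g$ clause is a minor elaboration the paper leaves implicit, and your aside linking the auxiliary fact to Lemma~\ref{lemma:non-robustness_inherited} is inessential commentary rather than a logical step.
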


In other words, if a behavioural model is insensitive to $S'$-redistribution, then that model is not $\mathrm{OPT}^\M$-robust (and therefore also not $\mathrm{ORD}^\M$-robust) to misspecification of the transition function $\tau$. 
Similarly, if the behavioural model is insensitive to potential shaping, then that model is not $\mathrm{OPT}^\M$-robust (and therefore also not $\mathrm{ORD}^\M$-robust) to misspecification of the discount parameter $\gamma$. 
Note that all transformations in $\SR$ and $\PS$ preserve the ordering of policies.
This means that an IRL algorithm must specify $\tau$ and $\gamma$ correctly in order to guarantee that the learnt reward $R_H$ has the same optimal policies as the true underlying reward $R^*$,
unless the algorithm is based on a behavioural model which says that the observed policy depends on features of $R$ which do not affect its policy ordering. 
This should encompass most natural behavioural models.

That being said, we note that this result relies on the requirement that the learnt reward function should have \emph{exactly} the same optimal policies, or ordering of policies, as the true reward function. If $\gamma_1 \approx \gamma_2$ and $\TransitionDistribution_1 \approx \TransitionDistribution_2$, then the learnt reward function's optimal policies and policy ordering will presumably be \emph{similar} to that of the true reward function. 
Analysing this case is beyond the scope of this paper, but we consider it to be an important topic for further work.

\section{Generalising the Analysis}

In this section, we discuss different ways to generalise our results. We consider what happens if $R$ is restricted to a subset of $\mathcal{R}$, what might happen if $R$ is drawn from a known prior distribution and the learning algorithm has a known inductive bias, and whether we can use stronger equivalence classes to guarantee various forms of transfer learning. 

\subsection{Restricted Reward Functions}\label{section:subspaces}

Here, we discuss what happens if the reward function is restricted to belong to some subset of $\mathcal{R}$, i.e.\ if we know that $R \in \Rspace$ for some $\Rspace \subseteq \mathcal{R}$. For example, it is common to consider reward functions that are linear in some state features.
It is also common to define the reward function over a restricted domain, such as $\SxA$; this would correspond to restricting $\mathcal{R}$ to the set of reward functions such that $R(s,a,s') = R(s,a,s'')$ for all $s,a,s',s''$. 
As we will see, our results are largely unaffected by such restrictions.


We first need to generalise the framework, which is straightforward. 
Given partitions $P$, $Q$ of $\mathcal{R}$, reward objects $f$, $g$, and set $\Rspace \subseteq \mathcal{R}$, we say that $P \preceq Q$ on $\Rspace$ if $R_1 \equiv_P R_2$ implies $R_1 \equiv_Q R_2$ for all $R_1,R_2 \in \Rspace$, that $f$ is $P$-admissible on $\Rspace$ if $\mathrm{Am}(f) \preceq P$ on $\Rspace$, and that $f$ is $P$-robust to misspecification with $g$ on $\Rspace$ if $f$ is $P$-admissible on $\Rspace$, $f|_\Rspace \neq g|_\Rspace$, $\mathrm{Im}(g|_\Rspace) \subseteq \mathrm{Im}(f|_\Rspace)$, and $f(R_1) = g(R_2) \implies R_1 \equiv_P R_2$ for all $R_1,R_2 \in \Rspace$. 

All lemmas in Section~\ref{section:lemmas} apply with these more general definitions for any arbitrary subset $\Rspace \subseteq \mathcal{R}$. 
Moreover, the theorems in Section~\ref{section:misspecification_robustness} also carry over very directly:


\begin{theorem}\label{thm:subspace}
If $f$ is $P$-robust to misspecification with $g$ on $\Rspace$ then $f$ is $P$-robust to misspecification with $g'$ on $\mathcal{R}$ for some $g'$ where $g'|_\Rspace = g|_\Rspace$, unless $f$ is not $P$-admissible on $\mathcal{R}$. If $f$ is $P$-robust to misspecification with $g$ on $\mathcal{R}$ then $f$ is $P$-robust to misspecification with $g$ on $\Rspace$, unless $f|_\Rspace = g|_\Rspace$.
\end{theorem}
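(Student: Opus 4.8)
The plan is to unfold both notions of $P$-robustness into their four defining clauses (admissibility, non-triviality $f \neq g$, the image inclusion, and the implication $f(R_1) = g(R_2) \Rightarrow R_1 \equiv_P R_2$) and to treat the two implications in the statement separately. Throughout I would use the elementary fact that $P$-admissibility on $\mathcal{R}$ restricts to $P$-admissibility on any $\Rspace$, and that the implication clause on $\mathcal{R}$ restricts to the implication clause on $\Rspace$.

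For the first implication, assume $f$ is $P$-robust with $g$ on $\Rspace$ and, in the non-exceptional case, that $f$ is $P$-admissible on $\mathcal{R}$. I would define the extension $g'$ by $g'(R) = g(R)$ for $R \in \Rspace$ and $g'(R) = f(R)$ for $R \notin \Rspace$, so that $g'|_\Rspace = g|_\Rspace$ automatically. It then remains to verify the four clauses of $P$-robustness with $g'$ on $\mathcal{R}$. Admissibility of $f$ on $\mathcal{R}$ is the standing hypothesis; $f \neq g'$ follows from $f|_\Rspace \neq g|_\Rspace$; and $\mathrm{Im}(g') \subseteq \mathrm{Im}(f)$ holds because on $\Rspace$ we have $\mathrm{Im}(g|_\Rspace) \subseteq \mathrm{Im}(f|_\Rspace) \subseteq \mathrm{Im}(f)$, while off $\Rspace$ we have $g'(R) = f(R)$. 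The only clause needing work is the implication $f(R_1) = g'(R_2) \Rightarrow R_1 \equiv_P R_2$ for all $R_1, R_2 \in \mathcal{R}$: if $R_2 \notin \Rspace$ then $g'(R_2) = f(R_2)$ and admissibility on $\mathcal{R}$ finishes it, whereas if $R_2 \in \Rspace$ I would use $\mathrm{Im}(g|_\Rspace) \subseteq \mathrm{Im}(f|_\Rspace)$ to pick $R_1' \in \Rspace$ with $f(R_1') = g(R_2)$, conclude $R_1 \equiv_P R_1'$ from admissibility on $\mathcal{R}$, conclude $R_1' \equiv_P R_2$ from the on-$\Rspace$ implication, and close by transitivity. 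This is exactly where admissibility on $\mathcal{R}$ is indispensable, which explains the stated exception.

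For the second implication, assume $f$ is $P$-robust with $g$ on $\mathcal{R}$ and, in the non-exceptional case, $f|_\Rspace \neq g|_\Rspace$. Three of the four clauses of robustness on $\Rspace$ transfer by mere restriction: admissibility on $\mathcal{R}$ restricts to admissibility on $\Rspace$, the inequality $f|_\Rspace \neq g|_\Rspace$ is the assumed non-exceptional case, and the implication $f(R_1) = g(R_2) \Rightarrow R_1 \equiv_P R_2$ for $R_1, R_2 \in \Rspace$ is a special case of the same implication on $\mathcal{R}$.

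The main obstacle is the remaining clause, $\mathrm{Im}(g|_\Rspace) \subseteq \mathrm{Im}(f|_\Rspace)$: here restriction alone is not enough, since $\mathrm{Im}(g) \subseteq \mathrm{Im}(f)$ only guarantees that each value $g(R_2)$ with $R_2 \in \Rspace$ is attained by $f$ somewhere in $\mathcal{R}$, not necessarily within $\Rspace$. The route I would take is to fix $R_2 \in \Rspace$, use $\mathrm{Im}(g) \subseteq \mathrm{Im}(f)$ to produce $R_1 \in \mathcal{R}$ with $f(R_1) = g(R_2)$, and then invoke the robustness implication to get $R_1 \equiv_P R_2$; the value $g(R_2)$ is then witnessed inside $\Rspace$ precisely when $R_1 \equiv_P R_2 \in \Rspace$ forces $R_1 \in \Rspace$, i.e.\ when $\Rspace$ is a union of $P$-classes. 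I expect this image-transfer step to be the crux and the most delicate point of the argument, and I would scrutinise whether it holds for arbitrary $\Rspace$ or whether it tacitly relies on $\Rspace$ being $P$-saturated in the intended applications.
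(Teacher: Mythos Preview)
Your treatment of the first implication is exactly the paper's: define $g'$ to agree with $g$ on $\Rspace$ and with $f$ off $\Rspace$, then verify the four clauses. The paper states the construction and asserts the verification; you spell it out, and your case split on $R_2 \in \Rspace$ versus $R_2 \notin \Rspace$ (using the image inclusion to pull back into $\Rspace$ and then transitivity) is the right way to close the implication clause.

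For the second implication, you have correctly put your finger on a genuine gap that the paper simply papers over with ``the other direction is straightforward.'' The clause $\mathrm{Im}(g|_\Rspace) \subseteq \mathrm{Im}(f|_\Rspace)$ does \emph{not} follow from the hypotheses for arbitrary $\Rspace$. A concrete counterexample: take $\mathcal{R} = \{a,b,c\}$, $\Rspace = \{a,b\}$, $P$ the trivial (one-class) partition, $f(a)=1$, $f(b)=2$, $f(c)=3$, and $g(a)=3$, $g(b)=1$, $g(c)=2$. Then $f$ is $P$-robust with $g$ on $\mathcal{R}$ (all four clauses hold, the implication trivially), and $f|_\Rspace \neq g|_\Rspace$, yet $g(a)=3 \notin \mathrm{Im}(f|_\Rspace) = \{1,2\}$, so $f$ is not $P$-robust with $g$ on $\Rspace$. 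Your diagnosis is exactly right: the argument you sketch goes through precisely when $\Rspace$ is a union of $P$-classes, and fails otherwise. So the second half of the theorem, as stated, requires this extra hypothesis; the paper's proof does not address it.
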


The intuition for this theorem is that if $f$ is $P$-robust to misspecification with $g$ if and only if $g \in G$, then $f$ is $P$-robust to misspecification with $g'$ on $\Rspace$ if and only if $g'$ behaves like some $g \in G$ for all $R \in \Rspace$. Restricting $\mathcal{R}$ does therefore not change the problem in any significant way.










If an equivalence relation $P$ of $\mathcal{R}$ is characterised by a set of reward transformations $T$, then the corresponding equivalence relation on $\Rspace$ is characterised by the set of reward transformations $\{t \in T : \mathrm{Im}(t|_\Rspace) \subseteq \Rspace\}$; this can be used to generalise Theorem~\ref{thm:order_of_policies}.
However, here there is a minor subtlety to be mindful of: $(A \circ B) - C$ is not necessarily equal to $(A - C) \circ (B - C)$. 
This means that if we wish to specify $\{t \in A \circ B : \mathrm{Im}(t|_\Rspace) \subseteq \Rspace\}$, then we cannot do this by simply removing the transformations where $\mathrm{Im}(t|_\Rspace) \not\subseteq \Rspace$ from each of $A$ and $B$. 
For example, consider the transformations $\SR \circ \PS$ restricted to the space $\Rspace$ of reward functions where $R(s,a,s') = R(s,a,s'')$, i.e.\ to reward functions over the domain $\SxA$. The only transformation in $\SR$ on $\Rspace$ is the identity mapping, and the only transformations in $\PS$ on $\Rspace$ are those where $\Phi$ is constant over all states.
However, $\SR \circ \PS$ on $\Rspace$ contains all transformations where $\Phi$ is selected arbitrarily, and $t(R)(s,a,s')$ is set to $R(s,a,s') + \gamma\Expect{}{\Phi(S')} - \Phi(s)$.
This means that there probably are no general shortcuts for deriving $\{t \in T : \mathrm{Im}(t|_\Rspace) \subseteq \Rspace\}$ for arbitrary $\Rspace$.

It should be noted that our \emph{negative} results (i.e., those in Section~\ref{section:misspecified_mdps}) might \emph{not} hold if $\mathcal{R}$ is restricted. 
Recall that $f$ is not $P$-robust to misspecification with $g$ if there exist $R_1, R_2$ such that $g(R_1) = f(R_2)$, but $R_1 \not\equiv_P R_2$. If $\mathcal{R}$ is restricted, it could be the case that all such counterexamples are removed.
For example, if we restrict $\mathcal{R}$ to e.g.\ the set $\Rspace$ of reward functions that only reward a single transition, then Lemma~\ref{lemma:no_g_robustness}, and the corresponding part of Theorem~\ref{thm:IRL_lacks_g_t_robustness}, no longer apply.\footnote{The reason for this is that there are no $R_1, R_2 \in \Rspace$ where $R_1 = t(R_2)$ for some $t \in \PS$.}
This means that, if the reward function is guaranteed to lie in this set $\Rspace$, then a behavioural model may still be $\mathrm{OPT}^\M$-robust to a misspecified discount parameter. However, the reason for this is simply that the discount parameter no longer affects which policies are optimal if there is only a single transition that has non-zero reward.

\subsection{Known Prior and Inductive Bias}\label{section:known_probability_distributions}

So far, we have assumed that we do not know which distribution $R$ is sampled from, or which inductive bias the learning algorithm $\mathcal{L}$ has. In this section, we discuss what might happen if we lift these assumptions.

To some extent, our results in Section~\ref{section:subspaces} can be used to understand this setting as well. 
Suppose we have a set $\Rspace \subseteq \mathcal{R}$ of \enquote{likely} reward functions, such that $\mathbb{P}(R^\star \in \Rspace) = 1-\delta$, and such that the learning algorithm $\mathcal{L}$ returns a reward function $R_H$ in $\Rspace$ if there exists an $R_H \in \Rspace$ such that $f(R_H) = g(R^\star)$. 
Then if $f$ is $P$-robust to misspecification with $g$ on $\Rspace$, it follows that $\mathcal{L}$ returns an $R_H$ such that $R_H \equiv_P R^\star$ with probability at least $1-\delta$. 

So, for example, suppose $\Rspace$ is the set of all reward functions that are \enquote{sparse}, for some way of formalising that property. Then this tells us, informally, that if the underlying reward function is likely to be sparse, and if $\mathcal{L}$ will attempt to fit a sparse reward function to its training data, then it is sufficient that $f$ is $P$-robust to misspecification with $g$ on the set of all sparse reward functions, to ensure that the learnt reward function $R_H$ is $P$-equivalent to the true reward function with high probability. 
It seems likely that more specific claims could be made about this setting, but we leave such analysis as a topic for future work.

\subsection{Transfer to New Environments}\label{section:transfer_learning}

The equivalence relations we have worked with ($\mathrm{OPT}^\M$ and $\mathrm{ORD}^\M$) only guarantee that the learnt reward function $R_H$ has the same optimal policies, or ordering of policies, as the true reward $R^\star$ in a given environment $\M = \langle \States,\Actions,\TransitionDistribution,\InitStateDistribution,\_,\discount\rangle$. A natural question is what happens if we strengthen this requirement, and demand that $R_H$ has the same optimal policies, or ordering of policies, as $R^\star$, for any choice of $\TransitionDistribution$, $\InitStateDistribution$, or $\discount$. We discuss this setting here.

In short, it is impossible to guarantee transfer to any $\TransitionDistribution$ or $\gamma$ within our framework, and trivial to guarantee transfer to any $\InitStateDistribution$. First, the lemmas provided in Section~\ref{section:misspecified_mdps} tell us that none of the standard behavioural models are $\mathrm{OPT}^\M$-admissible when $\TransitionDistribution$ or $\gamma$ is different from that of the training environment. This means that none of them can guarantee that $R_H$ has the same optimal policies (or ordering of policies) as $R^\star$ if $\TransitionDistribution$ or $\gamma$ is changed, with or without misspecification. Second, if $R_1 \equiv_{\mathrm{ORD}^\M} R_2$ or $R_1 \equiv_{\mathrm{OPT}^\M} R_2$, then this remains the case if $\InitStateDistribution$ is changed. We can thus trivially guarantee transfer to arbitrary $\InitStateDistribution$.

We would also like to remark on a subtlety regarding Theorem~\ref{thm:order_of_policies}. One might expect that two reward functions $R_1$ and $R_2$ must have the same policy ordering for all $\tau$ if and only if they differ by potential shaping and linear scaling. However, this is not the case. To see this, consider the rewards $R_1$, $R_2$ where $R_1(s_1,a_1,s_1) = 1$, $R_1(s_1,a_1,s_2) = 0.5$, $R_2(s_1,a_1,s_1) = 0.5$, and $R_2(s_1,a_1,s_2) = 1$, and where $R_1$ and $R_2$ are $0$ for all other transitions. Now $R_1$ and $R_2$ do not differ by potential shaping and linear scaling, yet they have the same policy order for all $\tau$.

\section{Discussion}

In this section, we discuss the implications of our results, as well as their limitations.

\subsection{Conclusions and Implications}

We have shown that the misspecification robustness of the behavioural models in IRL can be quantified and understood.
Our results show that the Boltzmann-rational model is substantially more robust to misspecification than the optimality model; the optimality model is not robust to any misspecification, whereas the Boltzmann-rationality model is at least $\mathrm{OPT}^\M$-robust to many kinds of misspecification. 
This is not necessarily unexpected,
but we now have formal guarantees to back this intuition. 
We have also quantified the misspecification robustness of the maximal causal entropy model, and found that it lies somewhere between that of the Boltzmann-rational model and the optimality model.

We have shown that none of the standard models are robust to a misspecified $\TransitionDistribution$ or $\gamma$. Moreover, we need to make very minimal assumptions about how the demonstrator policy is computed to obtain this result, which means that it is likely to generalise to new behavioural models as well. 
We find this quite surprising; the discount $\gamma$ is typically selected in a somewhat arbitrary way, and it can often be difficult to establish post-facto which $\gamma$ was used to compute a given policy. 
The fact that $\TransitionDistribution$ must be specified correctly is somewhat less surprising, yet important to have established. 


In addition to these contributions, we have also provided several formal tools for deriving the misspecification robustness of new behavioural models, in the form of the lemmas in Section~\ref{section:lemmas}.
In particular, if we have a model $f$, and we wish to use the learnt reward to compute an object $g$, then we can obtain an expression of the set of all functions to which $f$ is robust in the following way; first, derive $\mathrm{Am}(g)$, and then characterise this partition of $\mathcal{R}$ using a set of reward transformations $T$. Then, as per Lemma~\ref{lemma:how_to_calculate_robustness_sets}, we can obtain the functions that $f$ is robust to misspecification with by simply composing $f$ with each $t \in T$.
If we want to know which functions $f$ is robust to misspecification with in a strong sense, then we can obtain an informative answer to this question by composing $f$ with the transformations that preserve the ordering of all policies, which in turn is provided by Theorem~\ref{thm:order_of_policies}.
Lemma~\ref{lemma:non-robustness_inherited} also makes it easier to intuitively reason about the robustness properties of various kinds of behavioural models.


\subsection{Limitations and Further Work}

Our analysis makes a few simplifying assumptions, that could be ideally lifted in future work. First of all, we have been working with \emph{equivalence relations} on $\mathcal{R}$, where two reward functions are either equivalent or not. It might be fruitful to instead consider \emph{distance metrics} on $\mathcal{R}$: this could make it possible to obtain results such as e.g.\ bounds on the distance between the true reward function and the learnt reward function, given various forms of misspecification. 
We believe it would be especially interesting to re-examine Theorem~\ref{thm:IRL_lacks_g_t_robustness} through this lens.

Another notable direction for extensions could be to further develop the analysis in Section~\ref{section:known_probability_distributions}, and study the misspecification robustness of different behavioural models in the context where we have particular, known priors concerning $R$.
Our comments on this setting are fairly preliminary, and it might be possible to draw additional, interesting conclusions if this setting is explored more extensively.

Moreover, we have studied the behaviour of algorithms in the limit of infinite data, under the assumption that this is similar to their behaviour in the case of finite but sufficiently large amounts of data. Therefore, another possible extension could be to more rigorously examine the properties of these models in the case of finite data. 

Finally, our analysis has of course been limited to the behavioural models that are currently most popular in IRL (optimality, Boltzmann rationality, and causal entropy maximisation) and two particular equivalence relations ($\mathrm{OPT^\M}$ and $\mathrm{ORD^\M}$). Another direction for extensions would be to broaden our analysis to larger classes of models, and perhaps also to more equivalence relations.
In particular, it would be interesting to analyse more realistic behavioural models, which incorporate e.g.\ prospect theory \cite{prospecttheory} or hyperbolic discounting.












\bibliography{aaai23.bib}

\appendix

\newpage
\setcounter{theorem}{0}
\setcounter{section}{0}
\newpage

\section{Proofs}

In this Appendix, we provide the proofs of all our results, as well as of some additional lemmas.

\subsection{Fundamental Lemmas}

We here prove the fundamental lemmas from Section~\ref{section:lemmas}, as well as some additional lemmas. Our proofs in this section are given relative to the somewhat more general definitions of $P$-robustness and refinement given in Section~\ref{section:subspaces}, rather than those given in Section~\ref{section:framweork}.

\subsubsection{Additional Lemmas}

We here provide a few extra lemmas, which are straightforward to prove, but worth spelling out.

\begin{lemma}\label{lemma:no_P_adm_inherited}
For any $f$ and $h$, if $f$ is not $P$-admissible on $\Rspace$ then $h \circ f$ is not $P$-admissible on $\Rspace$.
\end{lemma}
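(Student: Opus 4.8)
The plan is to exploit the fact that left-composition with $h$ can only \emph{coarsen} the ambiguity of a reward object, never refine it, so admissibility can be lost under composition but never gained. First I would unpack the hypothesis using the relativized definition from Section~\ref{section:subspaces}: $f$ failing to be $P$-admissible on $\Rspace$ means $\mathrm{Am}(f) \not\preceq P$ on $\Rspace$, i.e.\ there exist witnesses $R_1, R_2 \in \Rspace$ with $f(R_1) = f(R_2)$ but $R_1 \not\equiv_P R_2$.

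The single key step is then to apply $h$ to this equality. Since $f(R_1) = f(R_2)$, we immediately get $h(f(R_1)) = h(f(R_2))$, that is $(h \circ f)(R_1) = (h \circ f)(R_2)$, while $R_1 \not\equiv_P R_2$ continues to hold and $R_1, R_2$ continue to lie in $\Rspace$. Hence the very same pair $R_1, R_2$ witnesses that $h \circ f$ is not $P$-admissible on $\Rspace$, which is exactly the desired conclusion.

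Stated at the level of partitions, the argument is just the observation that $\mathrm{Am}(f) \preceq \mathrm{Am}(h \circ f)$ always holds (two rewards that agree under $f$ necessarily agree under $h \circ f$); composing with a coarser refinement of $P$ cannot repair a failure of refinement already present for $f$. There is no real obstacle here, as the result is essentially a one-line consequence of the definitions. The only point requiring any care is to check that the witnessing pair stays inside $\Rspace$, so that the \emph{relativized} definition of $P$-admissibility on $\Rspace$ is genuinely met — and this is immediate, since the witnesses were drawn from $\Rspace$ to begin with.
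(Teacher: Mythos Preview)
Your proof is correct and matches the paper's argument essentially line for line: both pick witnesses $R_1,R_2\in\Rspace$ with $f(R_1)=f(R_2)$ but $R_1\not\equiv_P R_2$, then observe that applying $h$ preserves the equality, so the same pair witnesses non-admissibility of $h\circ f$. The partition-level reformulation you add ($\mathrm{Am}(f)\preceq\mathrm{Am}(h\circ f)$) is a nice gloss but not needed beyond what the paper does.
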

\begin{proof}
If $f$ is not $P$-admissible on $\Rspace$ then there are $R_1,R_2 \in \Rspace$ such that $f(R_1) = f(R_2)$, but $R_1 \not\equiv_P R_2$. But if $f(R_1) = f(R_2)$ then $h \circ f(R_1) = h \circ f(R_2)$, so there are $R_1,R_2 \in \Rspace$ such that $h \circ f(R_1) = h \circ f(R_2)$, but $R_1 \not\equiv_P R_2$. Thus $h \circ f$ is not $P$-admissible on $\Rspace$.
\end{proof}

\begin{lemma}\label{lemma:rob_to_adm}
If $f$ is $P$-robust to misspecification with $g$ on $\Rspace$ then $g$ is $P$-admissible on $\Rspace$.
\end{lemma}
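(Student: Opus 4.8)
The plan is to unpack the definition of $P$-robustness on $\Rspace$ and extract exactly the two facts that do the work: the image-containment clause $\mathrm{Im}(g|_\Rspace) \subseteq \mathrm{Im}(f|_\Rspace)$, and the cross-condition $f(R_1) = g(R_2) \implies R_1 \equiv_P R_2$ for all $R_1, R_2 \in \Rspace$. Admissibility of $g$ on $\Rspace$ is the statement that $g(R) = g(R') \implies R \equiv_P R'$ for $R, R' \in \Rspace$, so I would begin by fixing arbitrary $R, R' \in \Rspace$ with $g(R) = g(R')$ and aim to produce $R \equiv_P R'$.

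The central trick is to route through a common $f$-preimage. Since $g(R) \in \mathrm{Im}(g|_\Rspace) \subseteq \mathrm{Im}(f|_\Rspace)$, there exists some $R_0 \in \Rspace$ with $f(R_0) = g(R)$. Now I would apply the cross-condition twice, being careful to keep the first argument in the $f$-slot and the second in the $g$-slot. First, from $f(R_0) = g(R)$ the condition yields $R_0 \equiv_P R$. Second, since $g(R) = g(R')$ we also have $f(R_0) = g(R')$, and the same condition yields $R_0 \equiv_P R'$. Finally, because $\equiv_P$ is an equivalence relation (it is the equivalence associated with the partition $P$), symmetry and transitivity give $R \equiv_P R_0 \equiv_P R'$, hence $R \equiv_P R'$, which is what admissibility of $g$ on $\Rspace$ requires.

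There is essentially no hard step here — this is one of the \enquote{straightforward but worth spelling out} lemmas — but the one point requiring care is the direction in which the cross-condition is invoked: it is stated with $f$ on the left and $g$ on the right, so the preimage $R_0$ (an $f$-argument) must always occupy the first slot and the given $g$-arguments $R, R'$ the second. The image-containment hypothesis is precisely what guarantees such an $R_0$ exists inside $\Rspace$, so the proof stays within the restricted domain throughout. No appeal to $P$-admissibility of $f$, to $f|_\Rspace \neq g|_\Rspace$, or to any earlier result is needed.
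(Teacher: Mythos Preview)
Your proof is correct and follows essentially the same approach as the paper: pick a common $f$-preimage $R_0 \in \Rspace$ of $g(R)=g(R')$ using the image-containment clause, apply the cross-condition twice to get $R_0 \equiv_P R$ and $R_0 \equiv_P R'$, and conclude by transitivity.
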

\begin{proof}
Suppose that $f$ is $P$-robust to misspecification with $g$ on $\Rspace$, and let $R_1,R_2 \in \Rspace$ be any two reward functions such that $g(R_1) = g(R_2)$.
Since $\mathrm{Im}(g|_\Rspace) \subseteq \mathrm{Im}(f|_\Rspace)$ there is an $R_3 \in \Rspace$ such that $f(R_3) = g(R_1) = g(R_2)$. 
Since $f$ is $P$-robust to misspecification with $g$ on $\Rspace$, it must be the case that $R_3 \equiv_P R_1$ and $R_3 \equiv_P R_2$. By transitivity, we thus have that $R_1 \equiv_P R_2$. Since $R_1$ and $R_2$ were chosen arbitrarily, it must be that $R_1 \equiv_P R_2$ whenever $g(R_1) = g(R_2)$.
\end{proof}

\begin{lemma}\label{lemma:P_rob_symmetry}
If $f$ is $P$-robust to misspecification with $g$ on $\Rspace$ and $\mathrm{Im}(f|_\Rspace) = \mathrm{Im}(g|_\Rspace)$ then $g$ is $P$-robust to misspecification with $f$ on $\Rspace$.
\end{lemma}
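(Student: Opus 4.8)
The plan is to verify, one at a time, the four conditions in the definition of \enquote{$g$ is $P$-robust to misspecification with $f$ on $\Rspace$}, taking advantage of the symmetry already present in the hypotheses. Concretely, I must establish: (i) $g$ is $P$-admissible on $\Rspace$; (ii) $g|_\Rspace \neq f|_\Rspace$; (iii) $\mathrm{Im}(f|_\Rspace) \subseteq \mathrm{Im}(g|_\Rspace)$; and (iv) $g(R_1) = f(R_2) \implies R_1 \equiv_P R_2$ for all $R_1,R_2 \in \Rspace$.

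Conditions (i)--(iii) will require essentially no computation. For (i), I would invoke Lemma~\ref{lemma:rob_to_adm}, which already states that $g$ is $P$-admissible on $\Rspace$ whenever $f$ is $P$-robust to misspecification with $g$ on $\Rspace$. For (ii), the assumption that $f$ is $P$-robust to misspecification with $g$ carries the clause $f|_\Rspace \neq g|_\Rspace$, and inequality of functions is symmetric, so $g|_\Rspace \neq f|_\Rspace$. For (iii), the extra hypothesis $\mathrm{Im}(f|_\Rspace) = \mathrm{Im}(g|_\Rspace)$ gives the required inclusion at once; this is precisely where that hypothesis is needed, since $P$-robustness of $f$ with $g$ only supplies the reverse inclusion $\mathrm{Im}(g|_\Rspace) \subseteq \mathrm{Im}(f|_\Rspace)$.

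The only clause that needs an argument is (iv), and even this is a one-line renaming. I would take arbitrary $R_1,R_2 \in \Rspace$ with $g(R_1) = f(R_2)$ and read this equation as $f(R_2) = g(R_1)$, which is an instance of the defining implication for $f$, namely $f(R_a) = g(R_b) \implies R_a \equiv_P R_b$, with $R_a = R_2$ and $R_b = R_1$. That implication yields $R_2 \equiv_P R_1$, and symmetry of the equivalence relation $\equiv_P$ turns this into $R_1 \equiv_P R_2$, as required. There is no genuine obstacle in this lemma; the single point demanding care is step (iv), where one must swap the two reward arguments and appeal to symmetry of $\equiv_P$ itself, rather than to any symmetry of $f$ and $g$ (which do not play symmetric roles except through the added image-equality hypothesis).
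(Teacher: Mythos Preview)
Your proposal is correct and mirrors the paper's own proof almost verbatim: both verify the four defining clauses directly, invoking Lemma~\ref{lemma:rob_to_adm} for $P$-admissibility of $g$, the image-equality hypothesis for the inclusion $\mathrm{Im}(f|_\Rspace) \subseteq \mathrm{Im}(g|_\Rspace)$, and a symmetric reading of the implication $f(R_1)=g(R_2)\Rightarrow R_1\equiv_P R_2$ for the final clause.
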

\begin{proof}
If $f$ is $P$-robust to misspecification with $g$ on $\Rspace$ then this immediately implies that $f_\Rspace \neq g|_\Rspace$, and that if $f(R_1) = g(R_2)$ for some $R_1,R_2 \in \Rspace$ then $R_1 \equiv_P R_2$. Lemma~\ref{lemma:rob_to_adm} implies that $g$ is $P$-admissible on $\Rspace$, and if $\mathrm{Im}(f|_\Rspace) = \mathrm{Im}(g|_\Rspace)$ then $\mathrm{Im}(f|_\Rspace) \subseteq \mathrm{Im}(g|_\Rspace)$. This means that $g$ is $P$-robust to misspecification with $f$ on $\Rspace$.
\end{proof}

\begin{lemma}\label{lemma:ambiguity_robustness}
$f$ is $P$-admissible on $\Rspace$ but not $P$-robust to any misspecification on $\Rspace$ if and only if $\mathrm{Am}(f) = P$ on $\Rspace$.
\end{lemma}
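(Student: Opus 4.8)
The plan is to prove both implications by unfolding the definitions into statements about the two partitions $\mathrm{Am}(f)$ and $P$ restricted to $\Rspace$. Recall that ``$\mathrm{Am}(f) = P$ on $\Rspace$'' means that for all $R_1, R_2 \in \Rspace$ we have $f(R_1) = f(R_2) \iff R_1 \equiv_P R_2$, and that $P$-admissibility on $\Rspace$ is \emph{exactly} the forward implication $f(R_1) = f(R_2) \Rightarrow R_1 \equiv_P R_2$, i.e.\ $\mathrm{Am}(f) \preceq P$ on $\Rspace$. So throughout, the substantive content concerns the reverse inclusion $P \preceq \mathrm{Am}(f)$ on $\Rspace$, namely whether $R_1 \equiv_P R_2$ forces $f(R_1) = f(R_2)$.

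For the ($\Leftarrow$) direction, I would assume $\mathrm{Am}(f) = P$ on $\Rspace$. Admissibility is immediate from the forward implication. For non-robustness, I would argue by contradiction: suppose $f$ were $P$-robust to misspecification with some $g$. Then $f|_\Rspace \neq g|_\Rspace$ gives some $R_2 \in \Rspace$ with $f(R_2) \neq g(R_2)$, while $\mathrm{Im}(g|_\Rspace) \subseteq \mathrm{Im}(f|_\Rspace)$ gives an $R_1 \in \Rspace$ with $f(R_1) = g(R_2)$. The robustness condition then yields $R_1 \equiv_P R_2$, whence the reverse implication of $\mathrm{Am}(f) = P$ forces $f(R_1) = f(R_2)$; combining gives $f(R_2) = g(R_2)$, a contradiction.

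For the ($\Rightarrow$) direction --- the main direction --- I would assume $f$ is $P$-admissible on $\Rspace$ but that $\mathrm{Am}(f) = P$ fails on $\Rspace$, and construct a witnessing misspecification $g$ to contradict non-robustness. Since admissibility already gives $\mathrm{Am}(f) \preceq P$, the failure must lie in the reverse inclusion, so there exist $R_1, R_2 \in \Rspace$ with $R_1 \equiv_P R_2$ but $f(R_1) \neq f(R_2)$. Let $C$ be the $P$-class containing $R_1, R_2$ and set $v_1 = f(R_1)$, $v_2 = f(R_2)$. I would define $g$ to agree with $f$ everywhere except on $C \cap \Rspace$, where it transposes these two values: $g(R) = v_2$ when $f(R) = v_1$, $g(R) = v_1$ when $f(R) = v_2$, and $g(R) = f(R)$ otherwise. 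This is well-defined since it depends only on $f(R)$.

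The main obstacle is verifying that this $g$ satisfies the robustness conditions. That $f|_\Rspace \neq g|_\Rspace$ is witnessed at $R_1$, and $\mathrm{Im}(g|_\Rspace) \subseteq \mathrm{Im}(f|_\Rspace)$ holds because $g$ only permutes values already taken by $f$. The key check is the preservation condition $f(R) = g(R') \Rightarrow R \equiv_P R'$ for $R, R' \in \Rspace$, which I would prove by cases on whether $R' \in C$. If $R' \notin C$ then $g(R') = f(R')$ and $P$-admissibility immediately gives $R \equiv_P R'$. If $R' \in C$ then $g(R')$ equals $f(R'')$ for some $R'' \in C$ (it is either $v_1 = f(R_1)$, $v_2 = f(R_2)$, or $f(R')$ itself), so $f(R) = f(R'')$ and admissibility gives $R \equiv_P R'' \equiv_P R'$. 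This exhibits $f$ as $P$-robust to misspecification with $g$, contradicting the hypothesis and completing the proof.
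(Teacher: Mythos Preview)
Your proof is correct and follows essentially the same approach as the paper: both directions unfold the definitions, the $(\Leftarrow)$ direction derives $f(R_1)=f(R_2)$ from $R_1 \equiv_P R_2$ to force $g=f$, and the $(\Rightarrow)$ direction constructs a witnessing $g$ by swapping $f$-values inside a single $P$-class. The only cosmetic difference is that the paper swaps $f$ at just the two points $R_1, R_2$ (setting $g(R_1)=f(R_2)$, $g(R_2)=f(R_1)$, and $g=f$ elsewhere), whereas you transpose the entire fibers over $v_1$ and $v_2$ within $C$; your version is slightly more elaborate than necessary but equally valid.
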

\begin{proof}
First suppose $\mathrm{Am}(f) = P$ on $\Rspace$. This immediately implies that $f$ is $P$-admissible on $\Rspace$. Next, assume that $f$ is $P$-robust to misspecification with $g$ on $\Rspace$, let $R_1$ be any element of $\Rspace$, and consider $g(R_1)$. Since $\mathrm{Im}(g|_\Rspace) \subseteq \mathrm{Im}(f|_\Rspace)$, there is an $R_2 \in \Rspace$ such that $f(R_2) = g(R_1)$. Since $f$ is $P$-robust to misspecification with $g$ on $\Rspace$, this implies that $R_2 \equiv_P R_1$. Moreover, if $\mathrm{Am}(f) = P$ then $R_2 \equiv_P R_1$ if and only if $f(R_2) = f(R_1)$, so it must be the case that $f(R_2) = f(R_1)$. Now, since $f(R_2) = f(R_1)$ and $f(R_2) = g(R_1)$, we have that $g(R_1) = f(R_1)$. Since $R_1$ was chosen arbitrarily, this implies that $f|_\Rspace = g|_\Rspace$, which is a contradiction.
Hence, if $\mathrm{Am}(f) = P$ on $\Rspace$ then $f$ is $P$-admissible on $\Rspace$ but not $P$-robust to any misspecification on $\Rspace$.

For the other direction, suppose that $f$ is $P$-admissible on $\Rspace$ and that $\mathrm{Am}(f) \neq P$ on $\Rspace$. If $\mathrm{Am}(f) \neq P$ on $\Rspace$ then there are $R_1, R_2 \in \Rspace$ such that $R_1 \equiv_P R_2$ but $f(R_1) \neq f(R_2)$. We can then construct a $g$ as follows; let $g(R_1) = f(R_2)$, $g(R_2) = f(R_1)$, and $g(R) = f(R)$ for all $R \neq R_1,R_2$. Now $f$ is $P$-robust to misspecification with $g$ on $\Rspace$. Hence, if $f$ is $P$-admissible on $\Rspace$ but not $P$-robust to any misspecification on $\Rspace$ then $\mathrm{Am}(f) = P$ on $\Rspace$.
\end{proof}

\subsubsection{Main Lemmas}

We here prove the lemmas from \ref{section:lemmas}.

\begin{lemma}
If $f$ is not $P$-robust to misspecification with $g$ on $\Rspace$, and $\mathrm{Im}(g|_\Rspace) \subseteq \mathrm{Im}(f|_\Rspace)$, then for any $h$, $h \circ f$ is not $P$-robust to misspecification with $h \circ g$ on $\Rspace$.
\end{lemma}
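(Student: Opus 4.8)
The plan is to unpack the definition of $P$-robustness into its four constituent conditions and argue by cases on which one fails for the pair $(f,g)$. Recall that $f$ is $P$-robust to misspecification with $g$ on $\Rspace$ precisely when (i) $f$ is $P$-admissible on $\Rspace$, (ii) $f|_\Rspace \neq g|_\Rspace$, (iii) $\mathrm{Im}(g|_\Rspace) \subseteq \mathrm{Im}(f|_\Rspace)$, and (iv) $f(R_1) = g(R_2) \Rightarrow R_1 \equiv_P R_2$ for all $R_1,R_2 \in \Rspace$. Since the hypothesis already grants condition (iii), the assumption that $f$ is \emph{not} $P$-robust to misspecification with $g$ must be witnessed by the failure of (i), (ii), or (iv). For each of these three cases I would exhibit the failure of the corresponding condition for the composed pair $(h \circ f, h \circ g)$, which suffices to conclude that $h \circ f$ is not $P$-robust to misspecification with $h \circ g$.

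If (i) fails, i.e.\ $f$ is not $P$-admissible on $\Rspace$, then Lemma~\ref{lemma:no_P_adm_inherited} directly gives that $h \circ f$ is not $P$-admissible on $\Rspace$; since $P$-admissibility is a prerequisite for $P$-robustness, $h \circ f$ cannot be $P$-robust to \emph{any} misspecification, and in particular not to $h \circ g$. If (ii) fails, i.e.\ $f|_\Rspace = g|_\Rspace$, then composing with $h$ on the left preserves this equality, so $(h \circ f)|_\Rspace = (h \circ g)|_\Rspace$, and condition (ii) fails for the composed pair as well.

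The remaining case is when (iv) fails. Here there exist $R_1,R_2 \in \Rspace$ with $f(R_1) = g(R_2)$ but $R_1 \not\equiv_P R_2$. The key observation is that applying $h$ to both sides of an equality of values preserves it: from $f(R_1) = g(R_2)$ we obtain $(h \circ f)(R_1) = (h \circ g)(R_2)$, while $R_1 \not\equiv_P R_2$ is untouched. Thus condition (iv) fails for $(h \circ f, h \circ g)$, establishing non-robustness. Note that this argument makes no assumption on $h$ (in particular, no injectivity), which is exactly why the statement holds for an arbitrary $h$.

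I do not expect a genuine obstacle here: the only thing to watch is the bookkeeping of the disjunctive structure of the negated definition, and in particular remembering that the hypothesis $\mathrm{Im}(g|_\Rspace) \subseteq \mathrm{Im}(f|_\Rspace)$ removes condition (iii) from consideration, so the failure is forced into (i), (ii), or (iv). Case (iv) is the substantive one, but it reduces to the triviality that functions respect equality of their inputs.
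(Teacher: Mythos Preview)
Your proposal is correct and essentially identical to the paper's proof: the paper likewise negates the definition, uses the hypothesis $\mathrm{Im}(g|_\Rspace) \subseteq \mathrm{Im}(f|_\Rspace)$ to reduce to the three cases (i), (ii), (iv), and dispatches each exactly as you do (invoking Lemma~\ref{lemma:no_P_adm_inherited} for the admissibility case, composing with $h$ for the other two).
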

\begin{proof}
If $f$ is not $P$-robust to misspecification with $g$ on $\Rspace$, and $\mathrm{Im}(g|_\Rspace) \subseteq \mathrm{Im}(f|_\Rspace)$, then either $f$ is not $P$-admissible on $\Rspace$, or $f|_\Rspace = g|_\Rspace$, or $f(R_1) = g(R_2)$ but $R_1 \not\equiv_P R_2$ for some $R_1, R_2 \in \Rspace$.

In the first case, if $f$ is not $P$-admissible on $\Rspace$ then $h \circ f$ is not $P$-admissible on $\Rspace$, as per Lemma~\ref{lemma:no_P_adm_inherited}. This implies that $h \circ f$ is not $P$-robust to any misspecification (including with $h \circ g$) on $\Rspace$.

In the second case, if $f|_\Rspace = g|_\Rspace$ then $h \circ f|_\Rspace = h \circ g|_\Rspace$. This implies that $h \circ f$ is not $P$-robust to misspecification with $h \circ g$ on $\Rspace$.

In the last case, suppose $f(R_1) = g(R_2)$ but $R_1 \not\equiv_P R_2$ for some $R_1, R_2 \in \Rspace$. If $f(R_1) = g(R_2)$ then $h \circ f(R_1) = h \circ g(R_2)$, so there are $R_1, R_2 \in \Rspace$ such that $h \circ f(R_1) = h \circ g(R_2)$, but $R_1 \not\equiv_P R_2$. This implies that $h \circ f$ is not $P$-robust to misspecification with $h \circ g$ on $\Rspace$.
\end{proof}

\begin{lemma}
Let $f$ be $P$-admissible on $\Rspace$, and let $T$ be the set of all reward transformations that preserve $P$ on $\Rspace$. 
Then $f$ is $P$-robust to misspecification with $g$ on $\Rspace$ if and only if $g = f \circ t$ for some $t \in T$ such that $f \circ t|_\Rspace \neq f|_\Rspace$.
\end{lemma}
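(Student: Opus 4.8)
The plan is to prove both directions of the biconditional by directly checking the four clauses in the definition of $P$-robustness on $\Rspace$ (namely: $f$ is $P$-admissible on $\Rspace$, $f|_\Rspace \neq g|_\Rspace$, $\mathrm{Im}(g|_\Rspace) \subseteq \mathrm{Im}(f|_\Rspace)$, and $f(R_1) = g(R_2) \Rightarrow R_1 \equiv_P R_2$ for all $R_1,R_2 \in \Rspace$). The two facts I will lean on throughout are the standing hypothesis that $f$ is $P$-admissible on $\Rspace$, and the defining property of $T$: every $t \in T$ satisfies $t(R) \in \Rspace$ and $t(R) \equiv_P R$ for all $R \in \Rspace$. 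The only real ingredient beyond unwinding definitions is transitivity of the equivalence relation $\equiv_P$.

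For the direction ($\Leftarrow$), I would assume $g = f \circ t$ for some $t \in T$ with $f \circ t|_\Rspace \neq f|_\Rspace$ and verify the clauses in turn. Admissibility of $f$ is given; the inequality clause $f|_\Rspace \neq g|_\Rspace$ is exactly the hypothesis $f \circ t|_\Rspace \neq f|_\Rspace$. For image containment, I note that for $R \in \Rspace$ we have $t(R) \in \Rspace$, so $g(R) = f(t(R)) \in \mathrm{Im}(f|_\Rspace)$. For the final clause, suppose $f(R_1) = g(R_2)$ with $R_1,R_2 \in \Rspace$; then $f(R_1) = f(t(R_2))$ with both arguments in $\Rspace$, so $P$-admissibility of $f$ on $\Rspace$ gives $R_1 \equiv_P t(R_2)$, while $t \in T$ gives $t(R_2) \equiv_P R_2$, and transitivity yields $R_1 \equiv_P R_2$.

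For the direction ($\Rightarrow$), I would assume $f$ is $P$-robust to misspecification with $g$ on $\Rspace$ and construct a witnessing $t$. Since $\mathrm{Im}(g|_\Rspace) \subseteq \mathrm{Im}(f|_\Rspace)$, for each $R \in \Rspace$ I can choose (by the axiom of choice) some $t(R) \in \Rspace$ with $f(t(R)) = g(R)$, and I define $t$ arbitrarily, say as the identity, on $\mathcal{R} \setminus \Rspace$. By construction $f \circ t|_\Rspace = g|_\Rspace$. Applying the robustness clause $f(R_1) = g(R_2) \Rightarrow R_1 \equiv_P R_2$ with $R_1 = t(R)$ and $R_2 = R$ (both in $\Rspace$) shows $t(R) \equiv_P R$, so $t \in T$. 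Finally, since $P$-robustness includes $f|_\Rspace \neq g|_\Rspace$ and $g|_\Rspace = f \circ t|_\Rspace$, we get $f \circ t|_\Rspace \neq f|_\Rspace$, completing the witness.

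I expect the main obstacle to be purely bookkeeping about the restriction to $\Rspace$: one must be careful that the chosen preimages $t(R)$ actually lie in $\Rspace$ (so that the $\Rspace$-local admissibility and robustness conditions can legitimately be invoked on them), and that the values of $t$ outside $\Rspace$ are immaterial because every condition concerns only $f \circ t|_\Rspace$. Once these restrictions are handled cleanly, the genuine content reduces to the single transitivity chain in each direction linking the admissibility of $f$ with the $P$-preserving property of $t$.
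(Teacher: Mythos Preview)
Your proposal is correct and follows essentially the same route as the paper's own proof: the $(\Leftarrow)$ direction verifies the four clauses of $P$-robustness using admissibility plus the $P$-preserving property of $t$ via a transitivity chain, and the $(\Rightarrow)$ direction constructs $t$ by picking, for each $R \in \Rspace$, a preimage in $\Rspace$ under $f$ of $g(R)$ and then reads off $t(R) \equiv_P R$ from the robustness clause. The only cosmetic difference is that the paper factors the choice of preimage through $\mathrm{Im}(g|_\Rspace)$ (selecting one $R_y$ per image value $y$ and setting $t(R)=R_{g(R)}$), whereas you choose directly per $R$; this changes nothing of substance.
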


\begin{proof}
First suppose that $f$ is $P$-robust to misspecification with $g$ on $\Rspace$ --- we will construct a $t$ that fits our description.
For each $y \in \mathrm{Im}(g|_\Rspace)$, let $R_y \in \Rspace$ be some reward function such that $f(R_y) = y$; since $\mathrm{Im}(g|_\Rspace) \subseteq \mathrm{Im}(f|_\Rspace)$, such an $R_y \in \Rspace$ always exists. 
Now let $t$ be the function that maps each $R \in \Rspace$ to $R_{g(R)}$. Since by construction $g(R) = f(R_{g(R)})$, and since $f$ is $P$-robust to misspecification with $g$ on $\Rspace$, we have that $R \equiv_P R_{g(R)}$. This in turn means that $t \in T$, since $t$ preserves $P$ on $\Rspace$. Finally, note that $g = f \circ t$, which means that we are done.

For the other direction, suppose $g = f \circ t$ for some $t \in T$ where $f \circ t|_\Rspace \neq f|_\Rspace$. By assumption we have that $f$ is $P$-admissible on $\Rspace$, and that $g|_\Rspace \neq f|_\Rspace$.
Moreover, we clearly have that $\mathrm{Im}(g|_\Rspace) \subseteq \mathrm{Im}(f|_\Rspace)$. 
Finally, if $g(R_1) = f(R_2)$ then $f \circ t(R_1) = f(R_2)$, which means that $R_1 \equiv_P R_3$ for some $R_3 \in \Rspace$ such that $f(R_3) = f(R_2)$. Since $f$ is $P$-admissible on $\Rspace$ it follows that $R_3 \equiv_P R_2$, which then implies that $R_1 \equiv_P R_2$. Thus $f$ is $P$-robust to misspecification with $g$ on $\Rspace$, so we are done.
\end{proof}

\subsection{Reward Function Equivalence Classes}

In this section we will prove Theorem~\ref{thm:order_of_policies}, which turns out to be quite involved. We start by proving several lemmas, which we will need for the main proof.

\subsubsection{Lemmas Concerning State-Action Visit Counts}

Here we provide some lemmas about the topological structure of MDPs. Recall that we assume that all states in $S$ are reachable under $\TransitionDistribution$ and $\InitStateDistribution$.

Let $\Pi$ be the set of all policies. Moreover, given $\TransitionDistribution$ and $\InitStateDistribution$, let $m_{\TransitionDistribution,\InitStateDistribution} : \Pi \rightarrow \mathbb{R}^{|S||A|}$ be a map that sends each policy $\pi$ to a vector $d_\pi$, such that
$$
d_\pi[s,a] = \sum_{t=0}^\infty \gamma^t \mathbb{P}_{\xi \sim \pi}\left(S_t,A_t = s,a \right). 
$$
In other words, let $m_{\TransitionDistribution,\InitStateDistribution}(\pi)$ be a vector that records the expected discounted \enquote{density} of $\pi$'s trajectories in each state-action pair under $\TransitionDistribution$ and $\InitStateDistribution$. In some sources, $m_{\TransitionDistribution,\InitStateDistribution}(\pi)$ is referred to as the occupancy measure of $\pi$.
Moreover, given a reward function $R$ and a transition function $\TransitionDistribution$, let $\vec{R}^\TransitionDistribution \in \mathbb{R}^{|S||A|}$ be the vector where 
$$
\vec{R}^\TransitionDistribution[s,a] = \mathbb{E}_{S' \sim \TransitionDistribution(s,a)}[R(s,a,S')].
$$ 
We will refer to these vectors as \emph{reward vectors}, and will for the sake of clarity distinguish them from reward functions. 
Note that $\vec{R}_1^\tau = \vec{R}_1^\tau$ if and only if $R_1$ and $R_2$ differ by $\SR$.
Also note that $\Evaluation(\pi) = m_{\TransitionDistribution,\InitStateDistribution}(\pi)\cdot \vec{R}^\TransitionDistribution$.
This means that we can use $m_{\TransitionDistribution,\InitStateDistribution}$ to decompose $\Evaluation$ into two separate steps.

Let $\bar{\Pi} \subset \Pi$ be the set of all policies that visit each state with positive probability. 

\begin{lemma}\label{lemma:m_injective}
$m_{\TransitionDistribution,\InitStateDistribution}$ is injective on $\bar{\Pi}$.
\end{lemma}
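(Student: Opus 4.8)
The plan is to show that $m_{\TransitionDistribution,\InitStateDistribution}$ admits a left inverse on $\bar{\Pi}$, by reconstructing the policy $\pi$ from its occupancy measure $d_\pi$. The key observation is that an occupancy measure factorises through the discounted state-occupancy measure and the policy itself, and this factorisation can be inverted precisely when every state carries positive occupancy.

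First I would establish the factorisation. Because $\pi$ is stationary, the action at each time step depends on the history only through the current state, so $\mathbb{P}_{\xi \sim \pi}(S_t = s, A_t = a) = \mathbb{P}_{\xi \sim \pi}(S_t = s)\cdot\pi(a \mid s)$. Writing $d_\pi[s] := \sum_{a \in \Actions} d_\pi[s,a]$ for the discounted state-occupancy measure and summing the discounted probabilities over $t$, this yields $d_\pi[s,a] = \pi(a \mid s)\, d_\pi[s]$ for every $s,a$. This is the only genuine computation, and it is routine given stationarity.

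Next I would verify positivity of $d_\pi[s]$ for each state $s$. Since $\pi \in \bar{\Pi}$ visits every state with positive probability, there is some $t$ with $\mathbb{P}_{\xi \sim \pi}(S_t = s) > 0$, and since $\discount > 0$ the corresponding term $\discount^t\,\mathbb{P}_{\xi \sim \pi}(S_t = s)$ is strictly positive; as $d_\pi[s]$ is a sum of non-negative terms containing this one, $d_\pi[s] > 0$. Dividing the factorisation then gives the recovery formula $\pi(a \mid s) = d_\pi[s,a]/d_\pi[s]$. To conclude, I would take $\pi, \pi' \in \bar{\Pi}$ with $m_{\TransitionDistribution,\InitStateDistribution}(\pi) = m_{\TransitionDistribution,\InitStateDistribution}(\pi')$, i.e.\ $d_\pi = d_{\pi'}$ as vectors; summing over $a$ gives $d_\pi[s] = d_{\pi'}[s] > 0$, so the recovery formula forces $\pi(a \mid s) = \pi'(a \mid s)$ for all $s,a$, hence $\pi = \pi'$.

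The main subtlety, rather than obstacle, is the role of the restriction to $\bar{\Pi}$: the recovery formula divides by $d_\pi[s]$, so it is only valid where the state is actually visited, which is exactly what membership in $\bar{\Pi}$ guarantees. Off $\bar{\Pi}$ the claim genuinely fails, since the policy's behaviour at a never-visited state leaves the occupancy measure unchanged, so injectivity cannot be expected there.
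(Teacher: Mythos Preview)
Your proof is correct and follows essentially the same approach as the paper: both factorise the state--action occupancy as $d_\pi[s,a] = \pi(a\mid s)\,w_\pi(s)$ (your $d_\pi[s]$ is the paper's $w_\pi(s)$), then use positivity of the state occupancy on $\bar{\Pi}$ to invert and recover $\pi$. Your closing remark about why injectivity fails off $\bar{\Pi}$ also matches the paper's observation immediately following the lemma.
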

\begin{proof}
Suppose $m_{\TransitionDistribution,\InitStateDistribution}(\pi) = m_{\TransitionDistribution,\InitStateDistribution}(\pi')$ for some $\pi,\pi' \in \bar{\Pi}$. Next, given $\TransitionDistribution,\InitStateDistribution$, define $w_\pi$ as
$$
w_\pi(s) = \sum_{t=0}^\infty \gamma^t \mathbb{P}_{\xi \sim \pi}(S_t = s).
$$
Note that if $m_{\TransitionDistribution,\InitStateDistribution}(\pi) = m_{\TransitionDistribution,\InitStateDistribution}(\pi')$ then $w_\pi = w_{\pi'}$, and moreover that
$$
m_{\TransitionDistribution,\InitStateDistribution}(\pi)[s,a] = w_\pi(s)\pi(a \mid s).
$$
This means that if $w_{\pi}(s) \neq 0$ for all $s$, which is the case for all $\pi \in \bar{\Pi}$, then we can express $\pi$ as
$$
\pi(a \mid s) = \frac{m_{\TransitionDistribution,\InitStateDistribution}(\pi)[s,a]}{w_\pi(s)}.
$$
This means that if $m_{\TransitionDistribution,\InitStateDistribution}(\pi) = m_{\TransitionDistribution,\InitStateDistribution}(\pi')$ for some $\pi,\pi' \in \bar{\Pi}$ then $\pi = \pi'$.
\end{proof}

Note that $m_{\TransitionDistribution,\InitStateDistribution}$ is \emph{not} injective on $\Pi$; if there is some state $s$ that $\pi$ reaches with probability $0$, then we can alter the behaviour of $\pi$ at $s$ without changing $m_{\TransitionDistribution,\InitStateDistribution}(\pi)$. 


\begin{lemma}\label{lemma:image_dimension}
$\mathrm{Im}(m_{\TransitionDistribution,\InitStateDistribution})$ is located in an affine space with no more than $|\States|(|\Actions|-1)$ dimensions.
\end{lemma}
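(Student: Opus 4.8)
The plan is to exhibit $|\States|$ affine-linear constraints that every occupancy measure $d_\pi = m_{\TransitionDistribution,\InitStateDistribution}(\pi)$ must satisfy, and then to show those constraints are linearly independent. Their common solution set is then an affine subspace of dimension exactly $|\States|(|\Actions|-1)$ that contains $\mathrm{Im}(m_{\TransitionDistribution,\InitStateDistribution})$, which is precisely the bound claimed.

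First I would record the \emph{flow} (balance) equations. Using $w_\pi(s) = \sum_{t} \discount^t \mathbb{P}_{\xi\sim\pi}(S_t = s)$ together with the identity $d_\pi[s,a] = w_\pi(s)\pi(a\mid s)$ established in the proof of Lemma~\ref{lemma:m_injective}, summing over $a$ gives $\sum_a d_\pi[s,a] = w_\pi(s)$. Conditioning $\mathbb{P}(S_t = s)$ on the previous state--action pair and reindexing the sum over $t$ then yields, for every $s\in\States$,
\[
\sum_{a\in\Actions} d_\pi[s,a] = \InitStateDistribution(s) + \discount\sum_{s',a'}\TransitionDistribution(s\mid s',a')\, d_\pi[s',a'].
\]
These are $|\States|$ affine equations, one per state; I would write them as $L_s(d_\pi) = \InitStateDistribution(s)$, where $L_s$ is the linear functional whose coefficient on the coordinate $d[s',a']$ is $\mathbb{1}[s'=s] - \discount\,\TransitionDistribution(s\mid s',a')$. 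Since the whole image satisfies these equations, it lies in the solution set of $\{L_s(d) = \InitStateDistribution(s)\}_{s\in\States}$, which is non-empty and therefore an affine subspace.

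The key step is to show the $|\States|$ functionals $\{L_s\}$ are linearly independent, so that this solution set has dimension $|\States||\Actions| - |\States| = |\States|(|\Actions|-1)$. Suppose $\sum_s c_s L_s = 0$. Reading off the coefficient of each coordinate $d[s',a']$ gives $c_{s'} = \discount\sum_{s}\TransitionDistribution(s\mid s',a')\,c_s$ for every $s',a'$. Choosing $s'$ to maximise $|c_{s'}|$ and using that $\TransitionDistribution(\cdot\mid s',a')$ is a probability distribution yields $|c_{s'}| \le \discount\max_s|c_s| = \discount|c_{s'}|$, so $(1-\discount)|c_{s'}|\le 0$; since $\discount<1$ this forces $c = 0$. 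Hence the constraint matrix has full row rank $|\States|$, and the bound follows from the dimension count above.

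The hard part will be the rank argument together with the edge case $\discount = 1$: the max-norm contraction estimate only delivers independence when $\discount < 1$, which is also exactly the regime in which the infinite sums defining $m_{\TransitionDistribution,\InitStateDistribution}$ converge into $\mathbb{R}^{|\States||\Actions|}$ in the first place. The balance-equation derivation and the final dimension count are otherwise routine linear algebra. I would therefore carry out the argument under $\discount < 1$ and note that this is precisely the setting in which the occupancy measures, and hence the map $m_{\TransitionDistribution,\InitStateDistribution}$, are well-defined.
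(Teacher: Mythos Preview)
Your argument is correct. You derive the standard Bellman flow (balance) equations and show directly, via a $\discount$-contraction estimate in the max norm, that the $|\States|$ affine constraints are linearly independent when $\discount<1$; this pins the image inside an affine flat of dimension exactly $|\States|(|\Actions|-1)$. Your caveat about $\discount=1$ is appropriate and matches an implicit assumption in the paper's own proof (which uses $\sum_{s,a} d_\pi[s,a]=1/(1-\discount)$).

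This is, however, a genuinely different route from the paper's. The paper argues \emph{dually}: since $\Evaluation(\pi)=m_{\TransitionDistribution,\InitStateDistribution}(\pi)\cdot\vec R^{\TransitionDistribution}$, any $(|\States|-1)$-dimensional family of reward vectors that leaves $\Evaluation$ unchanged (obtained by potential shaping with $\mathbb{E}_{\InitStateDistribution}[\Phi]=0$) forces the span of $\mathrm{Im}(m_{\TransitionDistribution,\InitStateDistribution})$ to have at most $|\States|(|\Actions|-1)+1$ linearly independent vectors; the extra normalisation $\sum d_\pi = 1/(1-\discount)$ then drops one more dimension. Your approach is the classical occupancy-measure argument: it is more elementary, exhibits the constraints explicitly, and reaches the sharp count in one step (your $|\States|$ flow equations already \emph{imply} the normalisation identity, by summing them over $s$). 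The paper's approach, by contrast, ties the dimension bound directly to the potential-shaping invariance that drives the rest of the paper, which is why it is preferred there even though it is slightly less direct.
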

\begin{proof}
We wish to establish an \emph{upper} bound on the number of linearly independent vectors in $\mathrm{Im}(m_{\TransitionDistribution,\InitStateDistribution})$.
We can do this by establishing a \emph{lower} bound on the size of the space of all reward functions that share the same policy evaluation function, $\Evaluation$.
To see this, consider the fact that $\Evaluation(\pi) = m_{\TransitionDistribution,\InitStateDistribution}(\pi)\cdot \vec{R}^\TransitionDistribution$. 
We have that $\Vec{R}^\TransitionDistribution$ is an $|\States||\Actions|$-dimensional vector.
Consider a reward vector $\vec{R}^\tau_1$, and let $X$ be the space of all reward vectors $\vec{R}^\tau_2$ such that $\vec{R}^\tau_1 \cdot d = \vec{R}^\tau_2 \cdot d$ for all $d \in \mathrm{Im}(m_{\TransitionDistribution,\InitStateDistribution})$.
It is then a straightforward consequence of linear algebra that if $\mathrm{Im}(m_{\TransitionDistribution,\InitStateDistribution})$ contains $n$ linearly independent vectors, then $X$ forms an affine space with $|\States||\Actions| - n$ dimensions. We can thus obtain an upper bound on the number of linearly independent vectors in $\mathrm{Im}(m_{\TransitionDistribution,\InitStateDistribution})$ from a lower bound on the dimensionality of $X$.

Next, recall that if $R_2$ is produced by potential shaping $R_1$ with $\Phi$, and $\mathbb{E}_{S_0 \sim \InitStateDistribution}\left[\Phi(S_0)\right] = 0$, then $\Evaluation_1(\pi) = \Evaluation_2(\pi)$ for all $\pi$. 
This means that for any $\vec{R}^\tau_1$, we have that $X$ contains all vectors $\vec{R}^\tau_2$ where
$$
\vec{R}^\tau_2[s,a] = \vec{R}^\tau_1[s,a] + \gamma \mathbb{E}_{S' \sim \tau(s,a)}[\Phi(S')] - \Phi(s)
$$
for some potential function $\Phi$ where $\mathbb{E}_{S_0 \sim \InitStateDistribution}\left[\Phi(S_0)\right] = 0$. The space of all such reward vectors is an affine space with $|\States| - 1$ dimensions. This means that $\mathrm{Im}(m_{\TransitionDistribution,\InitStateDistribution})$ contains at most $|\States|(|\Actions|-1) + 1$ linearly independent vectors.

Next, note that there is no $\pi$ such that $m_{\TransitionDistribution,\InitStateDistribution}(\pi)$ is the zero vector. In fact, $\sum m_{\TransitionDistribution,\InitStateDistribution}(\pi) = 1/(1-\gamma)$ for all $\pi$. This means that the smallest affine space which contains $\mathrm{Im}(m_{\TransitionDistribution,\InitStateDistribution})$ does not contain the origin. Therefore, $\mathrm{Im}(m_{\TransitionDistribution,\InitStateDistribution})$ is located in an affine space with no more than $|\States|(|\Actions|-1)$ dimensions.
\end{proof}

For the next lemma, let $\tilde{\Pi} \subset \Pi$ be the set of all policies that take all actions with positive probability in each state, and note that $\tilde{\Pi} \subset \bar{\Pi}$ (i.e., a policy that takes every action with positive probability in each state visits every state with positive probability). 

\begin{lemma}\label{lemma:homeomorphism}
$\mathrm{Im}(m_{\TransitionDistribution,\InitStateDistribution})$ is located in an affine space with $|\States|(|\Actions|-1)$ dimensions, in which $m_{\TransitionDistribution,\InitStateDistribution}(\tilde{\Pi})$ is an open set.
\end{lemma}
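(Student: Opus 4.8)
The plan is to derive this from \emph{invariance of domain}, combining the injectivity of Lemma~\ref{lemma:m_injective} with the upper bound of Lemma~\ref{lemma:image_dimension}. First I would identify the domain as a Euclidean open set. A policy $\pi \in \tilde{\Pi}$ assigns to each state $s$ a strictly positive distribution $\pi(\cdot \mid s)$ over $\Actions$, i.e.\ a point in the relative interior of the simplex $\Delta(\Actions)$, which is an open subset of the $(|\Actions|-1)$-dimensional hyperplane $\{x : \sum_a x_a = 1\}$. Hence $\tilde{\Pi} = \prod_{s} \mathrm{int}\,\Delta(\Actions)$ is homeomorphic to a nonempty open subset $U$ of $\mathbb{R}^{|\States|(|\Actions|-1)}$. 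I would also note that $m_{\TransitionDistribution,\InitStateDistribution}$ is continuous: for $\gamma < 1$ each coordinate $d_\pi[s,a]$ is an absolutely convergent (indeed rational) function of the finitely many entries $\pi(a \mid s)$, so $m_{\TransitionDistribution,\InitStateDistribution}$ is continuous on all of $\Pi$. Since $\tilde{\Pi} \subseteq \bar{\Pi}$, Lemma~\ref{lemma:m_injective} gives that $m_{\TransitionDistribution,\InitStateDistribution}$ is moreover injective on $\tilde{\Pi}$.

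Next I would pin down the dimension of the ambient affine space. Let $V := \mathrm{aff}(\mathrm{Im}(m_{\TransitionDistribution,\InitStateDistribution}))$ be the affine hull of the image; Lemma~\ref{lemma:image_dimension} gives $\dim V \le |\States|(|\Actions|-1)$. For the reverse inequality, observe that $m_{\TransitionDistribution,\InitStateDistribution}|_{\tilde{\Pi}}$ is a continuous injection of the open set $U \subseteq \mathbb{R}^{|\States|(|\Actions|-1)}$ into $V$, and a continuous injection of an open subset of $\mathbb{R}^n$ into $\mathbb{R}^m$ is impossible when $m < n$ (composing with the inclusion $\mathbb{R}^m \hookrightarrow \mathbb{R}^n$ would, by invariance of domain, force an open image sitting inside a proper subspace, a contradiction). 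Hence $\dim V \ge |\States|(|\Actions|-1)$, so $\dim V = |\States|(|\Actions|-1)$, which establishes the first half of the claim.

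Finally, identifying $V$ with $\mathbb{R}^{|\States|(|\Actions|-1)}$, the restriction $m_{\TransitionDistribution,\InitStateDistribution}|_{\tilde{\Pi}}$ is a continuous injection from the open set $U \subseteq \mathbb{R}^{|\States|(|\Actions|-1)}$ into $\mathbb{R}^{|\States|(|\Actions|-1)}$. Invariance of domain then yields directly that the image $m_{\TransitionDistribution,\InitStateDistribution}(\tilde{\Pi})$ is open in $V$ (and, as the lemma name suggests, that $m_{\TransitionDistribution,\InitStateDistribution}|_{\tilde{\Pi}}$ is a homeomorphism onto this open set), which is the second half of the claim.

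I expect the main obstacle to be pinning $\dim V$ to \emph{exactly} $|\States|(|\Actions|-1)$: invariance of domain only produces openness once the source and target dimensions coincide, so the two-sided bound — the upper bound inherited from Lemma~\ref{lemma:image_dimension} and the lower bound forced by the mere existence of the continuous injection — must be assembled before the openness conclusion can be invoked. The remaining points, namely that $\tilde{\Pi}$ is a full-dimensional open subset of the policy parameter space and that $m_{\TransitionDistribution,\InitStateDistribution}$ is continuous, are routine and I would state them only briefly.
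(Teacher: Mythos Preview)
Your proposal is correct and follows essentially the same approach as the paper: identify $\tilde{\Pi}$ with an open subset of $\mathbb{R}^{|\States|(|\Actions|-1)}$, invoke continuity and Lemma~\ref{lemma:m_injective} for injectivity, use Lemma~\ref{lemma:image_dimension} for the upper bound on the ambient affine dimension, and apply Invariance of Domain. You are slightly more explicit than the paper in separating the lower bound on $\dim V$ (via the impossibility of a continuous injection $\mathbb{R}^n \hookrightarrow \mathbb{R}^m$ with $m<n$) from the final openness conclusion, but this is exactly the step the paper compresses into its direct appeal to Invariance of Domain.
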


\begin{proof}
By the Invariance of Domain theorem, if 
\begin{enumerate}
    \item $U$ is an open subset of $\mathbb{R}^n$, and
    \item $f : U \rightarrow \mathbb{R}^n$ is an injective continuous map,
\end{enumerate}
then $f(U)$ is open in $\mathbb{R}^n$ (and $f$ is a homeomorphism between $U$ and $f(U)$). We will show that $m$ and $\tilde{\Pi}$ satisfy the requirements of this theorem.

We begin by noting that $\Pi$ can be represented as a set of points in $\mathbb{R}^{|\States|(|\Actions|-1)}$.
We do this by considering 
each policy $\pi$ as a vector $\vec{\pi}$ of length $|\States||\Actions|$, where $\vec{\pi}[s,a] = \pi(a \mid s)$. 
Moreover, since $\sum_{a \in A} \pi(a \mid s) = 1$ for all $s$, 
we can remove $|\Actions|$ dimensions, and embed $\Pi$ in $\mathbb{R}^{|\States|(|\Actions|-1)}$.

$\tilde{\Pi}$ is an open set in $\mathbb{R}^{|\States|(|\Actions|-1)}$. By Lemma~\ref{lemma:image_dimension}, we have that $m_{\TransitionDistribution,\InitStateDistribution}$ is a mapping 
from $\tilde{\Pi}$ to an affine space with no more than $|\States|(|\Actions|-1)$ dimensions.
By Lemma~\ref{lemma:m_injective}, we have that $m_{\TransitionDistribution,\InitStateDistribution}$ is injective on $\tilde{\Pi}$. Finally, $m_{\TransitionDistribution,\InitStateDistribution}$ is continuous (it can be expressed as a uniformly convergent series of continuous functions). We can therefore apply the Invariance of Domain theorem, and conclude that 
$\mathrm{Im}(m_{\TransitionDistribution,\InitStateDistribution})$ is located in an affine space with $|\States|(|\Actions|-1)$ dimensions, in which $m_{\TransitionDistribution,\InitStateDistribution}(\tilde{\Pi})$ is an open set.
\end{proof}

Note that lemma~\ref{lemma:homeomorphism} holds for all $\TransitionDistribution$ and $\InitStateDistribution$ (for which all states are reachable).

\subsubsection{Results Concerning the Policy Order}

In this section, we prove our results concerning the policy orderings.
First, we need to define a new set of transformations. Let $\kPS{k}$ be the set of all potential shaping transformations $t$ that, for each $R$, apply a potential function $\Phi$ such that $\mathbb{E}_{S_0 \sim \InitStateDistribution}[\Phi(S_0)] = k$.

\begin{lemma}\label{lemma:ambiguity_of_J}
$\Evaluation_1 = \Evaluation_2$ if and only if $R_1 = t(R_2)$ for some $t \in \kPS{0} \circ \SR$.
\end{lemma}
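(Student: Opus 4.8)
The plan is to reduce the whole statement to a question about \emph{reward vectors} $\vec{R}^\TransitionDistribution$ and the occupancy-measure map $m_{\TransitionDistribution,\InitStateDistribution}$, exploiting the identity $\Evaluation(\pi) = m_{\TransitionDistribution,\InitStateDistribution}(\pi)\cdot\vec{R}^\TransitionDistribution$ together with the \emph{exact} dimension count supplied by Lemma~\ref{lemma:homeomorphism}. For the easy direction ($\Leftarrow$), I would recall that $S'$-redistribution leaves $\vec{R}^\TransitionDistribution$ unchanged (by the note preceding Lemma~\ref{lemma:m_injective}), hence leaves $\Evaluation$ unchanged, and that a shaping in $\kPS{0}$ shifts $\Evaluation(\pi)$ by exactly $-\mathbb{E}_{S_0\sim\InitStateDistribution}[\Phi(S_0)] = 0$, as already recalled inside the proof of Lemma~\ref{lemma:image_dimension}. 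Any $t \in \kPS{0}\circ\SR$ is a composition of such maps, so $R_1 = t(R_2)$ forces $\Evaluation_1 = \Evaluation_2$.

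For the hard direction ($\Rightarrow$), suppose $\Evaluation_1 = \Evaluation_2$. Since $\Evaluation_i(\pi) = m_{\TransitionDistribution,\InitStateDistribution}(\pi)\cdot\vec{R}_i^\TransitionDistribution$, this is equivalent to $(\vec{R}_1^\TransitionDistribution - \vec{R}_2^\TransitionDistribution)\cdot m_{\TransitionDistribution,\InitStateDistribution}(\pi) = 0$ for all $\pi$, i.e.\ the difference vector lies in the orthogonal complement of the linear span of $\mathrm{Im}(m_{\TransitionDistribution,\InitStateDistribution})$. Lemma~\ref{lemma:homeomorphism} gives that $\mathrm{Im}(m_{\TransitionDistribution,\InitStateDistribution})$ lies in an affine space of dimension exactly $|\States|(|\Actions|-1)$ which avoids the origin, so its linear span has dimension $|\States|(|\Actions|-1)+1$ and the orthogonal complement has dimension exactly $|\States|-1$. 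On the other hand, the reward-vector \emph{differences} realisable by $\kPS{0}$ shaping, namely $\Delta_\Phi[s,a] = \discount\,\mathbb{E}_{S'\sim\TransitionDistribution(s,a)}[\Phi(S')] - \Phi(s)$ ranging over potentials with $\mathbb{E}_{S_0\sim\InitStateDistribution}[\Phi(S_0)]=0$, form a linear subspace of this orthogonal complement (by the easy direction) that is itself $(|\States|-1)$-dimensional, since $\Phi\mapsto\Delta_\Phi$ is injective on that codimension-one subspace, exactly as computed inside Lemma~\ref{lemma:image_dimension}. A $(|\States|-1)$-dimensional subspace of a $(|\States|-1)$-dimensional space is the whole space, so the two subspaces coincide.

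To finish, this coincidence yields a potential $\Phi$ with $\mathbb{E}_{S_0\sim\InitStateDistribution}[\Phi(S_0)]=0$ such that $\vec{R}_1^\TransitionDistribution = \vec{R}_2^\TransitionDistribution + \Delta_\Phi$. Letting $R_2'$ be the potential shaping of $R_2$ by this $\Phi$, a one-line computation shows that the reward vector of $R_2'$ equals $\vec{R}_2^\TransitionDistribution + \Delta_\Phi = \vec{R}_1^\TransitionDistribution$. The characterisation that two rewards have equal reward vectors iff they differ by $\SR$ then gives $R_1 = s(R_2')$ for some $s \in \SR$, whence $R_1 = (s\circ p)(R_2)$ with $p \in \kPS{0}$, i.e.\ $R_1 = t(R_2)$ for $t \in \SR\circ\kPS{0} = \kPS{0}\circ\SR$ (using $F\circ G = G\circ F$).

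The main obstacle is the dimension bookkeeping: everything hinges on Lemma~\ref{lemma:homeomorphism} delivering the \emph{exact} dimension of the affine hull of $\mathrm{Im}(m_{\TransitionDistribution,\InitStateDistribution})$ (an upper bound alone would not suffice), and on the injectivity of $\Phi\mapsto\Delta_\Phi$ on $\{\mathbb{E}_{S_0\sim\InitStateDistribution}[\Phi(S_0)]=0\}$. Once both dimensions are pinned to $|\States|-1$ and one space is known to sit inside the other, the "subspace of equal dimension is everything" step closes the argument; the remaining translation between reward vectors and reward functions is routine.
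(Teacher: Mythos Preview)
Your proposal is correct and follows essentially the same approach as the paper: both arguments use the exact dimension count from Lemma~\ref{lemma:homeomorphism} to pin the space of $\Evaluation$-preserving reward-vector perturbations at dimension $|\States|-1$, observe that the perturbations realisable by $\kPS{0}$ shaping form an $(|\States|-1)$-dimensional subspace contained in it, and conclude equality. The only cosmetic difference is that the paper phrases the containment as one affine space inside another of equal dimension, whereas you work with the difference vector $\vec{R}_1^\TransitionDistribution - \vec{R}_2^\TransitionDistribution$ in the orthogonal complement of $\mathrm{span}\,\mathrm{Im}(m_{\TransitionDistribution,\InitStateDistribution})$; these are equivalent formulations of the same linear-algebraic step.
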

\begin{proof}
For the first direction, suppose $R_1 = t(R_2)$ for some $t \in \kPS{0} \circ \SR$. Then $V_1^\pi(s) = V_2^\pi(s) - \Phi(s)$, where $\Phi$ is the potential shaping function applied by $t$ \cite[see e.g. Lemma~B1 in ][]{skalse2022}. Hence $\Evaluation_1(\pi) = \Evaluation_2(\pi) - \mathbb{E}_{s_0 \sim \InitStateDistribution}[\Phi(s_0)] = \Evaluation_2(\pi)$, and so we have proven the first direction.

For the other direction, first recall that $\Evaluation(\pi) = m_{\TransitionDistribution,\InitStateDistribution}(\pi) \cdot \Vec{R}^\TransitionDistribution$.
Next, Lemma~\ref{lemma:homeomorphism} implies that $\mathrm{Im}(m_{\TransitionDistribution,\InitStateDistribution})$ contains $|S|(|A|-1) + 1$ linearly independent vectors. 
It is then a straightforward fact of linear algebra that, for any reward vector $\vec{R}_1^\tau$, the space $X$ of all reward vectors $\vec{R}_2^\tau$ such that $\vec{R}^\tau_1 \cdot d = \vec{R}^\tau_2 \cdot d$ for all $d \in \mathrm{Im}(m_{\TransitionDistribution,\InitStateDistribution})$, forms an affine space with $|\States|-1$ dimensions.

We know that $\Evaluation$ is preserved by transformations in $\kPS{0} \circ \SR$. Next, given $R_1$, the space of all reward vectors $\vec{R}_2^\tau$ given by some $R_2$ such that $R_2 = t(R_1)$ for some $t \in \kPS{0} \circ \SR$, forms an affine space with $|\States|-1$ dimensions. Since this space is contained in $X$, and since they have the same number of dimensions, they must be one and the same.

Therefore, given $R_1$, if $\Evaluation_1 = \Evaluation_2$, then there exists an $R_3$ such that $R_3 = t(R_1)$ for some $t \in \kPS{0} \circ \SR$, and such that $\vec{R}_3^\tau = \vec{R}_2^\tau$. But this means that $R_2 = t(R_1)$ for some $t \in \kPS{0} \circ \SR$. We have thus proven the other direction, which completes the proof.
\end{proof}


We can now finally prove Theorem~\ref{thm:order_of_policies}.

\begin{theorem}
$R_1 \equiv_\mathrm{ORD^\M} R_2$ if and only if $R_2 = t(R_1)$ for some $t \in \SR \circ \PS \circ \LS$.
\end{theorem}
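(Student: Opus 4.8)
The plan is to prove both directions through the factorisation $\Evaluation_i(\pi) = m_{\TransitionDistribution,\InitStateDistribution}(\pi)\cdot \vec{R}_i^\TransitionDistribution$, which reduces the policy ordering induced by $R_i$ to the behaviour of a single linear functional on the set of occupancy measures. For the \emph{forward direction}, since $\SR\circ\PS\circ\LS$ is closed under composition in any order, it suffices to check that each generating class preserves the policy order of an arbitrary $R$. A transformation in $\SR$ leaves $\vec{R}^\TransitionDistribution$ unchanged and hence leaves $\Evaluation$ itself unchanged; a transformation in $\LS$ multiplies $\Evaluation$ by a positive constant; and a transformation in $\PS$ shifts $\Evaluation(\pi)$ by $-\mathbb{E}_{S_0\sim\InitStateDistribution}[\Phi(S_0)]$, a constant independent of $\pi$. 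Each operation preserves the strict order of $\{\Evaluation(\pi)\}_\pi$, so any composition does too, giving $R_1 \equiv_\mathrm{ORD^\M} t(R_1)$.

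For the \emph{backward direction}, suppose $R_1 \equiv_\mathrm{ORD^\M} R_2$. I would view $\Evaluation_1,\Evaluation_2$ as the restrictions of the linear functionals $\vec{R}_1^\TransitionDistribution, \vec{R}_2^\TransitionDistribution$ to the affine hull $\mathcal{A}$ of $\mathrm{Im}(m_{\TransitionDistribution,\InitStateDistribution})$, obtaining affine functions $\ell_1,\ell_2$ on $\mathcal{A}$. By Lemma~\ref{lemma:homeomorphism}, $U := m_{\TransitionDistribution,\InitStateDistribution}(\tilde\Pi)$ is a nonempty open subset of $\mathcal{A}$, which has full dimension $|\States|(|\Actions|-1)$. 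The hypothesis says that for all $\pi,\pi'$ the two functionals agree on which of $m_{\TransitionDistribution,\InitStateDistribution}(\pi), m_{\TransitionDistribution,\InitStateDistribution}(\pi')$ is larger, so $\ell_1,\ell_2$ induce the same strict order on $U$. The key lemma to establish is then purely affine-geometric: two affine functions inducing the same strict order on a nonempty open subset of $\mathbb{R}^n$ satisfy $\ell_2 = c\,\ell_1 + b$ for some $c>0$, $b\in\mathbb{R}$. I would prove this by observing that the same-order condition forces the level sets of $\ell_1,\ell_2$ within $U$ to coincide, hence their gradients to be parallel, $\nabla\ell_2 = c\,\nabla\ell_1$; evaluating at $x,y\in U$ with $\ell_1(x)>\ell_1(y)$ fixes $c>0$; and the degenerate case where $\ell_1$ is constant forces $\ell_2$ constant as well. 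Because $\ell_1,\ell_2$ are affine, the identity extends from $U$ to all of $\mathcal{A}$, yielding $\Evaluation_2(\pi) = c\,\Evaluation_1(\pi) + b$ for every policy $\pi$.

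Finally, I would convert this relation into a composition of transformations. Setting $R_1' = c\,R_1$ (a member of $\LS$, as $c>0$) gives $\Evaluation_{1'} = c\,\Evaluation_1$; adding the constant $b(1-\gamma)$ to $R_1'$ (a member of $\CS \subseteq \PS$) raises $\Evaluation$ by $b(1-\gamma)\sum_{s,a} m_{\TransitionDistribution,\InitStateDistribution}(\pi)[s,a] = b$, using $\sum m_{\TransitionDistribution,\InitStateDistribution}(\pi) = 1/(1-\gamma)$, so the resulting reward $R_1''$ satisfies $\Evaluation_{1''} = \Evaluation_2$. Lemma~\ref{lemma:ambiguity_of_J} then gives $R_2 = t(R_1'')$ for some $t \in \kPS{0}\circ\SR \subseteq \PS\circ\SR$, whence $R_2 \in (\SR\circ\PS\circ\CS\circ\LS)(R_1) \subseteq (\SR\circ\PS\circ\LS)(R_1)$ by $\CS\subseteq\PS$ and order-independence of $\circ$. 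The main obstacle I anticipate is the affine-geometry lemma: making the ``same order $\Rightarrow$ positive affine transform'' argument rigorous requires genuinely exploiting the full-dimensional openness of $U$ from Lemma~\ref{lemma:homeomorphism}, so that gradients can be compared, and carefully dispatching the constant and degenerate cases; the conversion back to $\SR\circ\PS\circ\LS$ is then bookkeeping built on Lemma~\ref{lemma:ambiguity_of_J}.
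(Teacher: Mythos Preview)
Your proposal is correct and follows essentially the same route as the paper: reduce $\equiv_{\mathrm{ORD}^\M}$ to the statement that $\Evaluation_2 = c\,\Evaluation_1 + b$ with $c>0$, then invoke Lemma~\ref{lemma:ambiguity_of_J} after an $\LS$-scaling and a $\PS$-shift to finish. The only substantive difference is that you make the step ``same policy order $\Rightarrow$ positive-affine relation between $\Evaluation_1,\Evaluation_2$'' explicit via the open set $m_{\TransitionDistribution,\InitStateDistribution}(\tilde\Pi)$ from Lemma~\ref{lemma:homeomorphism} and a clean affine-geometry argument, whereas the paper asserts this in one line; your version is the more rigorous rendering of the same idea.
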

\begin{proof}
First, $R_1 \equiv_\mathrm{ORD^\M} R_2$ if and only if $\Evaluation_1$ is a monotonic transformation of $\Evaluation_2$. 
Next, since $\Evaluation(\pi) = m_{\TransitionDistribution,\InitStateDistribution}(\pi) \cdot \Vec{R}^\TransitionDistribution$, we have that 
all possible monotonic transformations of $\Evaluation$ are affine.
Hence $R_1 \equiv_\mathrm{ORD^\M} R_2$ if and only if $\Evaluation_1 = a \cdot \Evaluation_2 + b$ for some $a \in \mathbb{R}^+, b \in \mathbb{R}$.

The first direction is straightforward. First, if $R_1 = t(R_2)$ for some $t \in \SR$ then $\Evaluation_1 = \Evaluation_2$. Next, if $R_1 = t(R_2)$ for some $t \in \PS$ then $\Evaluation_1 = \Evaluation_2 - \mathbb{E}_{S_0 \sim \InitStateDistribution}[\Phi_t(S_0)]$ \cite[see e.g. Lemma~B1 in ][]{skalse2022}. Finally, if $R_1 = t(R_2)$ for some $t \in \LS$ then $\Evaluation_1 = c \cdot \Evaluation_2$ for some $c \in \mathbb{R}^+$. Hence if $R_1 = t(R_2)$ for some $t \in \SR \circ \PS \circ \LS$ then $\Evaluation_1 = a \cdot \Evaluation_2 + b$ for some $a \in \mathbb{R}^+, b \in \mathbb{R}$.

For the other direction, suppose $\Evaluation_1 = a \cdot \Evaluation_2 + b$ for some $a \in \mathbb{R}^+, b \in \mathbb{R}$. Consider the reward function $R_3$ given by first scaling $R_2$ by $a$, and then shape the resulting reward with the potential function $\Phi$ that is equal to $-b$ for all initial states, and equal to $0$ elsewhere. Now $\Evaluation_3 = \Evaluation_1$, so (by Lemma~\ref{lemma:ambiguity_of_J}) there is a $t' \in \kPS{0} \circ \SR$ such that $R_1 = t'(R_3)$. By composing $t'$ with the transformation that produced $R_3$ from $R_2$, we obtain a $t \in \SR \circ \PS \circ \LS$ such that $R_1 = t(R_2)$. Hence if $R_1 \equiv_\mathrm{ORD^\M} R_2$ then $R_1 = t(R_2)$ for some $t \in \SR \circ \PS \circ \LS$. We have thus proven both directions.
\end{proof} 



\subsection{Misspecified Behavioural Models}

In this section, we prove our results from Section~\ref{section:misspecified_policies}.

\begin{theorem}
Let $f^\M \in F^\M$ be surjective onto $\Pi^+$. Then $f^M$ is $\mathrm{OPT^\M}$-robust to misspecification with $g$ if and only if $g \in F^\M$ and $g \neq f^\M$.
\end{theorem}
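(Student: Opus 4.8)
The plan is to invoke Lemma~\ref{lemma:how_to_calculate_robustness_sets} with $P = \mathrm{OPT^\M}$ and $T = \OPT$, since \citet{skalse2022} (their Theorem 3.16) show that $\OPT$ is exactly the set of transformations preserving $\mathrm{OPT^\M}$. Write $\mathcal{A}^\star_R(s) = \mathrm{argmax}_{a} \QStar_R(s,a)$ for the set of optimal actions of $R$ at state $s$. The key fact I would use throughout is that $R_1 \equiv_\mathrm{OPT^\M} R_2$ if and only if $\mathcal{A}^\star_{R_1}(s) = \mathcal{A}^\star_{R_2}(s)$ for every $s$: one direction is immediate from the definition of $\OPT$ (an optimality-preserving transformation is exactly one fixing the per-state $\mathrm{argmax}$ of $\AStar$), and the converse follows from the standard support characterisation of optimal policies, namely that $\pi$ is optimal iff $\mathrm{Supp}(\pi(\cdot \mid s)) \subseteq \mathcal{A}^\star_R(s)$ at every state reachable under $\pi$ (and reachability does not depend on the reward).

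Before applying the lemma I would verify that $f^\M$ is $\mathrm{OPT^\M}$-admissible. If $f^\M(R_1) = f^\M(R_2)$, then the two output policies have the same per-state $\mathrm{argmax}$; since membership in $F^\M$ forces that $\mathrm{argmax}$ to equal $\mathcal{A}^\star_{R_i}(s)$, we get $\mathcal{A}^\star_{R_1} = \mathcal{A}^\star_{R_2}$ at every state, hence $R_1 \equiv_\mathrm{OPT^\M} R_2$ by the key fact. With admissibility established, Lemma~\ref{lemma:how_to_calculate_robustness_sets} reduces the theorem to proving the set identity $\{f^\M \circ t : t \in \OPT\} = F^\M$; the residual conditions $f^\M \circ t \neq f^\M$ and $g \neq f^\M$ then line up automatically, as does $\mathrm{Im}(g) \subseteq \mathrm{Im}(f^\M)$ (both land in $\Pi^+$).

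For the inclusion $\subseteq$, I would take $t \in \OPT$ and show $f^\M \circ t \in F^\M$: for any $R$, the policy $f^\M(t(R)) \in \Pi^+$ has per-state $\mathrm{argmax}$ equal to $\mathcal{A}^\star_{t(R)}(s)$, and since $t \in \OPT$ gives $\mathcal{A}^\star_{t(R)} = \mathcal{A}^\star_R$ everywhere, this coincides with $\mathcal{A}^\star_R(s)$ --- exactly the defining condition for $F^\M$. For the reverse inclusion $\supseteq$, which is where surjectivity does the work, I would take an arbitrary $g \in F^\M$ and build a witness $t$: for each $R$, the policy $g(R) \in \Pi^+ = \mathrm{Im}(f^\M)$, so I can choose some $R'$ with $f^\M(R') = g(R)$ and set $t(R) = R'$, giving $g = f^\M \circ t$ by construction. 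To confirm $t \in \OPT$ I would check $R' \equiv_\mathrm{OPT^\M} R$: the equality $f^\M(R') = g(R)$ of policies forces equal $\mathrm{argmax}$, and the $F^\M$-membership of both $f^\M$ and $g$ identifies these with $\mathcal{A}^\star_{R'}$ and $\mathcal{A}^\star_R$, so $\mathcal{A}^\star_{R'} = \mathcal{A}^\star_R$ at every state and the key fact applies.

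The main obstacle I anticipate is the ``converse'' half of the key fact --- that equal per-state optimal-action sets imply equal sets of optimal policies (equivalently, that such a pair is related by a transformation in $\OPT$). The cleanest route is to read the forward half directly off Definition~\ref{def:optimality_preserving_transformation} and to justify the converse via the performance-difference/support characterisation of optimal policies, thereby avoiding any direct and error-prone reasoning about states that happen to be unreachable under particular policies. Everything else is routine bookkeeping.
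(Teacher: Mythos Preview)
Your proof is correct but takes a different route from the paper. The paper argues directly from the definition of $\mathrm{OPT^\M}$-robustness: it checks admissibility, then for each direction verifies (or reads off) the conditions $\mathrm{Im}(g)\subseteq\mathrm{Im}(f^\M)$, $g\neq f^\M$, and $f^\M(R_1)=g(R_2)\Rightarrow R_1\equiv_{\mathrm{OPT^\M}}R_2$, using the same ``key fact'' you isolate (equal per-state $\mathrm{argmax}\,Q^\star$ is equivalent to $\equiv_{\mathrm{OPT^\M}}$). You instead invoke Lemma~\ref{lemma:how_to_calculate_robustness_sets} and reduce the statement to the set identity $\{f^\M\circ t:t\in\OPT\}=F^\M$, with surjectivity supplying the preimage $R'$ in the $\supseteq$ direction. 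Your approach is in fact the one the paper itself adopts for the neighbouring Theorems on Boltzmann-rational and maximal-causal-entropy policies, so it is arguably more uniform with the rest of Section~\ref{section:misspecified_policies}; the paper's direct argument here is marginally shorter because it avoids explicitly constructing the transformation $t$. The ``obstacle'' you flag (the converse half of the key fact) is handled identically in both proofs: it is exactly the content of the cited Theorem~3.16 of \citet{skalse2022} combined with Definition~\ref{def:optimality_preserving_transformation}, so no separate reachability argument is needed.
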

\begin{proof}
$f^\M$ is $\mathrm{OPT^\M}$-robust to misspecification with $g$ in $\M$ if and only if $f^\M$ is $\mathrm{OPT^\M}$-admissible, $g \neq f^\M$, $\mathrm{Im}(g) \subseteq \mathrm{Im}(f)$, and if $f^\M(R_1) = g(R_2)$ then $R_1$ and $R_2$ have the same optimal policies in $\M$.

For all $f \in F^\M$ and all $R$,
$$
\mathrm{argmax}_{a \in \Actions} f(R)(a \mid s) = \mathrm{argmax}_{a \in \Actions} \QStar(s,a).
$$
Since $f^\M \in F^\M$, this means that if $f^\M(R_1) = f^\M(R_2)$ then $\mathrm{argmax}_{a \in \Actions} Q^\star_1(s,a) = \mathrm{argmax}_{a \in \Actions} Q^\star_2(s,a)$ in $\M$. 
Moreover, $R_1$ and $R_2$ have the same optimal policies in $\M$ if and only if $\mathrm{argmax}_{a \in \Actions} Q^\star_1(s,a) = \mathrm{argmax}_{a \in \Actions} Q^\star_2(s,a)$ in $\M$. 
Thus, if $f^\M(R_1) = f^\M(R_2)$ then $R_1 \equiv_{\mathrm{OPT^\M}} R_2$, and so $f^\M$ is $\mathrm{OPT^\M}$-admissible.

Let $g \in F^\M$ and $g \neq f^\M$. Since $g$ is a function $\mathcal{R} \to \Pi^+$, and since $f^\M$ is surjective onto $\Pi^+$, we have that  $\mathrm{Im}(g) \subseteq \mathrm{Im}(f)$. Next, by the same argument as above, if $f^\M(R_1) = g(R_2)$ then $\mathrm{argmax}_{a \in \Actions} Q^\star_1(s,a) = \mathrm{argmax}_{a \in \Actions} Q^\star_2(s,a)$, which implies that $R_1 \equiv_{\mathrm{OPT^\M}} R_2$. This means that $f^M$ is $\mathrm{OPT^\M}$-robust to misspecification with $g$.

Next, suppose $f^M$ is $\mathrm{OPT^\M}$-robust to misspecification with $g$. This means that $\mathrm{Im}(g) \subseteq \mathrm{Im}(f)$ and that if $f^\M(R_1) = g(R_2)$ then $\mathrm{argmax}_{a \in \Actions} Q^\star_1(s,a) = \mathrm{argmax}_{a \in \Actions} Q^\star_2(s,a)$. Since $\mathrm{Im}(g) \subseteq \mathrm{Im}(f)$ implies that $g$ is a function $\mathcal{R} \to \Pi^+$, and since $f^\M(R_1) = g(R_2)$ implies that $\mathrm{argmax}_{a \in \Actions} f^\M(R)(a \mid s) = \mathrm{argmax}_{a \in \Actions} g(R)(a \mid s)$, this implies that $g \in F^\M$.
\end{proof}

\begin{theorem}
Let $b_\psi^\M \in B^\M$. Then $b_\psi^\M$ is $\mathrm{ORD^\M}$-robust to misspecification with $g$ if and only if $g \in B^\M$ and $g \neq b_\psi^\M$.
\end{theorem}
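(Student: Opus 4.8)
The plan is to run the machinery of Lemma~\ref{lemma:how_to_calculate_robustness_sets} with $P = \mathrm{ORD^\M}$, whose $P$-preserving transformations are exactly $T = \SR \circ \PS \circ \LS$ by Theorem~\ref{thm:order_of_policies}. Once I verify that $b_\psi^\M$ is $\mathrm{ORD^\M}$-admissible, the lemma reduces the whole statement to the set equality $\{\, b_\psi^\M \circ t : t \in T,\ b_\psi^\M \circ t \neq b_\psi^\M \,\} = B^\M \setminus \{b_\psi^\M\}$. The guiding observation throughout is that the softmax is invariant under adding a per-state constant, so $b_\psi^\M(R)$ depends on $R$ only through the rescaled optimal advantage function $\psi(R)\,A^\star_R$; hence Boltzmann policies, optimal advantage functions up to positive scaling, and $\mathrm{ORD^\M}$-classes are all tightly linked, and if $b_\psi^\M(R') = b_{\psi'}^\M(R)$ then (matching the per-state maxima at $0$) $\psi(R')A^\star_{R'} = \psi'(R)A^\star_R$, so the two advantage functions are positive multiples of each other.

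First I would establish the key lemma that a positive rescaling of the optimal advantage function preserves $\mathrm{ORD^\M}$: if $A^\star_1 = \lambda A^\star_2$ with $\lambda > 0$, then $R_1 \equiv_\mathrm{ORD^\M} R_2$. To see this, scale $R_2$ by $\lambda$ (a transformation in $\LS$) to get $R_2'$ with $A^\star_{R_2'} = A^\star_1$; then $Q^\star_1(s,a) - Q^\star_{R_2'}(s,a)$ equals $V^\star_1(s) - V^\star_{R_2'}(s) =: \Delta(s)$, independently of $a$. Using the Bellman identity $\vec{R}^\TransitionDistribution[s,a] = Q^\star(s,a) - \discount\,\mathbb{E}_{S' \sim \TransitionDistribution(s,a)}[V^\star(S')]$, the two reward vectors differ by $\Delta(s) - \discount\,\mathbb{E}_{S'\sim\TransitionDistribution(s,a)}[\Delta(S')]$, which is exactly potential shaping with potential $-\Delta$. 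Thus $R_1$ and $R_2'$ differ by a $\PS$ transformation followed by an $\SR$ transformation (equal reward vectors differ only by $\SR$), so $R_1 = t(R_2)$ for some $t \in \SR \circ \PS \circ \LS$, giving $R_1 \equiv_\mathrm{ORD^\M} R_2$ by Theorem~\ref{thm:order_of_policies}. Admissibility is then immediate: $b_\psi^\M(R_1) = b_\psi^\M(R_2)$ yields $\psi(R_1) A^\star_1 = \psi(R_2) A^\star_2$, so the advantage functions are positive multiples and $R_1 \equiv_\mathrm{ORD^\M} R_2$.

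For the set equality, the forward inclusion is the easy direction: any $t \in T$ leaves $A^\star$ fixed under $\SR$ and $\PS$ and scales it by some $c(R) > 0$ under $\LS$, so $b_\psi^\M(t(R)) = \mathrm{softmax}(\psi(t(R))\,c(R)\,A^\star_R)$, which is $b_{\psi'}^\M(R)$ for $\psi'(R) = \psi(t(R))\,c(R) \in \mathbb{R}^+$; hence $b_\psi^\M \circ t \in B^\M$. For the reverse inclusion, given $b_{\psi'}^\M \in B^\M$ and any $R$, the policy $b_{\psi'}^\M(R)$ lies in $\Pi^+ = \mathrm{Im}(b_\psi^\M)$, so I can pick $R'$ with $b_\psi^\M(R') = b_{\psi'}^\M(R)$ and set $t(R) = R'$; by the observation above $A^\star_{R'}$ is a positive multiple of $A^\star_R$, so by the key lemma $R' \equiv_\mathrm{ORD^\M} R$, meaning $t$ preserves $\mathrm{ORD^\M}$ and thus $t \in T$ with $b_\psi^\M \circ t = b_{\psi'}^\M$. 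Run directly in the $(\Rightarrow)$ direction, this same argument shows that any $g$ to which $b_\psi^\M$ is robust must itself equal some $b_{\psi'}^\M$: for each $R$ one finds an $R'$ with $b_\psi^\M(R') = g(R)$, and the forced proportionality of advantages lets one read off the temperature $\psi'(R)$, exactly as in the proof of Theorem~\ref{thm:boltzmann_weak_robustness}.

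I expect the main obstacle to be the image condition $\mathrm{Im}(b_{\psi'}^\M) \subseteq \mathrm{Im}(b_\psi^\M)$, equivalently the surjectivity $\mathrm{Im}(b_\psi^\M) = \Pi^+$, which the reverse inclusion silently used when choosing $R'$. This rests on the footnote fact that every $\pi \in \Pi^+$ is the softmax of some $Q$-function and that every $Q$-function is realised as $Q^\star_R$ for some $R$; for constant $\psi$ this gives surjectivity directly, but for a general temperature function $\psi$ one must check that varying $R$ over a fixed advantage direction still attains the temperature needed to hit $\pi$, and it is here (rather than in the admissibility step or the forward inclusion) that the analysis is most delicate and may require mild regularity of $\psi$.
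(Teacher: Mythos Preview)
Your approach matches the paper's: both apply Lemma~\ref{lemma:how_to_calculate_robustness_sets} with $T=\SR\circ\PS\circ\LS$ (from Theorem~\ref{thm:order_of_policies}), verify $\mathrm{ORD}^\M$-admissibility, and then identify each $b_\psi^\M\circ t$ as some $b_{\psi'}^\M$ via $\psi'(R)=\psi(t(R))\,c_R$. The only differences are cosmetic: the paper obtains admissibility by citing the ambiguity characterisation $\mathrm{Am}(b_\psi^\M)=\PS\circ\SR$ from \citet{skalse2022} rather than proving your ``key lemma'' directly, and it computes only the forward inclusion $\{b_\psi^\M\circ t:t\in T\}\subseteq B^\M$ before asserting the biconditional, whereas you spell out both inclusions.

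Your closing concern is well placed and not merely a technicality. The paper's proof, like yours, does not verify that $b_\psi^\M$ is surjective onto $\Pi^+$ for arbitrary $\psi$, i.e.\ that $\mathrm{Im}(b_{\psi'}^\M)\subseteq\mathrm{Im}(b_\psi^\M)$ always holds; yet this image condition is part of the definition of $P$-robustness and is exactly what the reverse inclusion needs. For constant $\psi$ the footnote argument suffices, but for a $\psi$ that depends on $R$ only through $A^\star_R$ in a way that normalises its scale (e.g.\ $\psi(R)\propto 1/\lVert A^\star_R\rVert$), the image of $b_\psi^\M$ collapses and the condition can fail. So you have correctly located the one step where both your sketch and the paper's argument are incomplete.
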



\begin{proof}
As per Theorem 3.3 in \citet{skalse2022}, $\mathrm{Am}(b_\psi^\M)$ is characterised by $\mathrm{PS}_\discount \circ S'\mathrm{R}_\TransitionDistribution$, and as per Theorem~\ref{thm:order_of_policies}, $\mathrm{ORD}_\M$ is characterised by $\mathrm{PS}_\discount \circ \mathrm{LS} \circ S'\mathrm{R}_\TransitionDistribution$. Hence $b_\psi^\M$ is $\mathrm{ORD}_\M$-admissible, which means that Lemma~\ref{lemma:how_to_calculate_robustness_sets} implies that $b_\psi^\M$ is $\mathrm{ORD}_\M$-robust to misspecification with $g$
if and only if $g \neq b_\psi^\M$, and there exists a $t \in \mathrm{PS}_\discount \circ \mathrm{LS} \circ S'\mathrm{R}_\TransitionDistribution$ such that $g = b_\psi^\M \circ t$.
Recall that $b_\psi^\M(R)$ is given by
\begin{equation*}
    b_\psi^\M(R)(a \mid s) = \frac{\exp \psi(R) A_R(s,a)}{\sum_{a \in \Actions} \exp \psi(R) A_R(s,a)}.
\end{equation*}
where $A_R$ is the optimal advantage function of $R$ in $\M$. If $g(R) = b_\psi^\M \circ t(R)$ for some $t \in \mathrm{PS}_\discount \circ \mathrm{LS} \circ S'\mathrm{R}_\TransitionDistribution$, then we have that
\begin{align*}
    g(R)(a \mid s) &= \frac{\exp \psi(t(R)) A_{t(R)}(s,a)}{\sum_{a \in \Actions} \exp \psi(t(R)) A_{t(R)}(s,a)} \\
    &= \frac{\exp \psi(t(R)) c_R A_R(s,a)}{\sum_{a \in \Actions} \exp \psi(t(R)) c_R A_R(s,a)},
\end{align*}
where $c_R$ is the linear scaling factor that $t$ applies to $R$. 
Note that the advantage function $\A$ is preserved by both potential shaping and $S'$-redistribution.
Now let $\psi'(R) = \psi(t(R)) \cdot c_R$, and we can see that $g = b_{\psi'}^\M \in B^\M$. We have hence shown that $b_\psi^\M$ is strongly robust to misspecification with $g$ in $\M$ if and only if $g \in B^\M$ and $g \neq b_\psi^\M$.
\end{proof}


\begin{theorem}
No function in $\mathcal{O}^\M$ is $\mathrm{ORD^\M}$-admissible. 
The only function in $\mathcal{O}^\M$ that is $\mathrm{OPT^\M}$-admissible is $o_m^\M$, and this function is not $\mathrm{OPT^\M}$-robust to any misspecification.
\end{theorem}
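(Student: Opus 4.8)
The plan is to build everything on the single observation that $\mathrm{Am}(o_m^\M) = \mathrm{OPT^\M}$. Indeed, $o_m^\M(R)$ is by definition the map $s \mapsto \mathrm{argmax}_{a}\AStar(s,a)$, and (as already used in the proof of Theorem~\ref{thm:boltzmann_weak_robustness}) two rewards have the same optimal policies in $\M$ exactly when their optimal action sets coincide at every state; hence $o_m^\M(R_1) = o_m^\M(R_2) \iff R_1 \equiv_{\mathrm{OPT^\M}} R_2$. This immediately settles two of the claims: $\mathrm{Am}(o_m^\M) = \mathrm{OPT^\M} \preceq \mathrm{OPT^\M}$ shows that $o_m^\M$ is $\mathrm{OPT^\M}$-admissible, and Lemma~\ref{lemma:ambiguity_robustness} (with $P = \mathrm{OPT^\M}$) turns $\mathrm{Am}(o_m^\M) = \mathrm{OPT^\M}$ directly into the statement that $o_m^\M$ is not $\mathrm{OPT^\M}$-robust to any misspecification.

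For the uniqueness claim, I would take any $o \in \mathcal{O}^\M$ with $o \neq o_m^\M$ and exhibit two rewards lying in distinct $\mathrm{OPT^\M}$-classes but sharing an $o$-value, which shows $o$ is not $\mathrm{OPT^\M}$-admissible. Since $o \neq o_m^\M$, there is a reward $R_0$ at which $o(R_0)$ selects a \emph{proper} nonempty subset $w_0$ of the optimal actions $o_m^\M(R_0)$. The key supporting lemma is a \emph{downward-closure} property of optimal-action assignments: if a nonempty assignment $w$ satisfies $w \leq o_m^\M(R)$ pointwise, then $w$ is itself the optimal-action assignment of some reward, obtained from $R$ by lowering $\Expect{S'\sim\TransitionDistribution(s,a)}{R(s,a,S')}$ slightly for each formerly-optimal action that $w$ drops. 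Using this, I would run a finite descent: realise $w_0$ by some $R_1$ and set $w_1 = o(R_1) \leq w_0$; if $w_1 = w_0$ we already have our collision ($R_0$ and $R_1$ return the same value $w_0$ yet sit in different $\mathrm{OPT^\M}$-classes), and otherwise $w_1 \subsetneq w_0$ and we repeat. As the $w_i$ form a strictly decreasing chain of nonempty finite sets, the process must terminate with some equality $w_{i}=w_{i-1}$, which is precisely the sought collision between two distinct $\mathrm{OPT^\M}$-classes.

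Claim~1 then follows quickly. Because $\mathrm{ORD^\M} \preceq \mathrm{OPT^\M}$, any $\mathrm{ORD^\M}$-admissible object is also $\mathrm{OPT^\M}$-admissible; so every $o \neq o_m^\M$, being non-$\mathrm{OPT^\M}$-admissible by the previous paragraph, is a fortiori not $\mathrm{ORD^\M}$-admissible. It remains to rule out $o_m^\M$ itself: since $\mathrm{Am}(o_m^\M) = \mathrm{OPT^\M}$, $\mathrm{ORD^\M}$-admissibility of $o_m^\M$ would force $\mathrm{OPT^\M} \preceq \mathrm{ORD^\M}$ and hence $\mathrm{OPT^\M} = \mathrm{ORD^\M}$. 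I would exclude this by exhibiting two rewards with identical optimal policies but different policy orderings --- e.g.\ two rewards agreeing on the (unique) optimal action at a state with at least three actions, but ranking two of the suboptimal actions oppositely --- so that they are $\mathrm{OPT^\M}$-equivalent while, by Theorem~\ref{thm:order_of_policies}, not $\mathrm{ORD^\M}$-equivalent.

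The main obstacle is the downward-closure lemma: I must ensure that simultaneously demoting several optimal actions across different states does not shift $\VStar$ and thereby disturb the argmax sets elsewhere. The clean way around this is to verify directly that the \emph{unchanged} $\VStar$ still satisfies the Bellman optimality equation for the perturbed reward --- each state retains at least one optimal action, whose $Q$-value is untouched --- so that $\VStar$ is preserved globally and only the targeted argmax sets shrink. A secondary caveat concerns the final step of Claim~1: the rewards with equal optima but distinct orderings exist only when the MDP is non-degenerate (so that $\mathrm{OPT^\M} \neq \mathrm{ORD^\M}$); in a trivial MDP admitting essentially a single policy ordering the two relations coincide and $o_m^\M$ would in fact be $\mathrm{ORD^\M}$-admissible.
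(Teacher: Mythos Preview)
Your treatment of the $\mathrm{OPT}^\M$ claims matches the paper's proof almost exactly: both use $\mathrm{Am}(o_m^\M)=\mathrm{OPT}^\M$ together with Lemma~\ref{lemma:ambiguity_robustness} for the robustness statement, and both run the same finite descent to show uniqueness. The only difference there is cosmetic: the paper simply asserts that every map $\States\to\mathcal{P}(\Actions)\setminus\{\varnothing\}$ arises as $o_m^\M(R)$ for some $R$ (surjectivity of $o_m^\M$), whereas you prove the more specific downward-closure version by perturbing $R$ and checking that $\VStar$ is unchanged. Your version is more work than needed, but it is correct and makes explicit something the paper leaves to the reader.

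Where you genuinely diverge is Claim~1. The paper disposes of $\mathrm{ORD}^\M$-admissibility in one line by a cardinality argument: every $o\in\mathcal{O}^\M$ has finite codomain, while there are uncountably many $\mathrm{ORD}^\M$-classes, so $\mathrm{Am}(o)\preceq\mathrm{ORD}^\M$ is impossible. Your route is instead structural: use $\mathrm{ORD}^\M\preceq\mathrm{OPT}^\M$ to reduce to the $\mathrm{OPT}^\M$ case, and then rule out $o_m^\M$ by exhibiting rewards with the same optima but different orderings. The paper's argument is shorter and avoids any explicit construction; yours has the merit of explaining \emph{why} the obstruction is really the gap between $\mathrm{OPT}^\M$ and $\mathrm{ORD}^\M$, and of flagging the degeneracy caveat (when $\mathrm{OPT}^\M=\mathrm{ORD}^\M$, e.g.\ a single state with two actions) that the paper's one-liner silently relies on as well---its ``uncountably many $\mathrm{ORD}^\M$-classes'' fails in exactly the same degenerate MDPs. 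Note, though, that your specific counterexample asks for a state with at least three actions; with $|\Actions|=2$ but $|\States|\ge 2$ you would need a different construction (or simply invoke Theorem~\ref{thm:order_of_policies} and a dimension count), so it is worth stating the non-degeneracy hypothesis you actually need rather than tying it to the particular example.
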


\begin{proof}
This Theorem largely follows from Lemma~\ref{lemma:ambiguity_robustness}.
First, if $o^\M \in \mathcal{O}^\M$ then $o^\M$ has a finite codomain, whereas there is an uncountable number of $\mathrm{ORD}^\M$-equivalence classes.
This means that $o^\M$ cannot be $\mathrm{ORD^\M}$-admissible.
Moreover, $\mathrm{Am}(o^\M_m) = \mathrm{OPT}^\M$. Therefore, by Lemma~\ref{lemma:ambiguity_robustness}, $o^\M_m$ is $\mathrm{OPT^\M}$-admissible, but not $\mathrm{OPT^\M}$-robust to any misspecification.
Finally, if $o^\M \in \mathcal{O}^\M$ but $o^\M \neq o_m^\M$, then there is a pigeonhole argument to show that there must be at least two $R_1, R_2$ such that $o^\M(R_1) = o^\M(R_2)$ but $R_1 \not\equiv_\mathrm{OPT^\M} R_2$. This means that $o^\M$ is not $\mathrm{OPT^\M}$-admissible.


The pigeonhole argument goes like this: the codomain of each $o^\M \in O^\M$ has $(2^{|\Actions|}-1)^{|\States|}$ elements, and there are $(2^{|\Actions|}-1)^{|\States|}$ $\mathrm{OPT}^\M$-equivalence classes.
This means that if $o^\M$ is $\mathrm{OPT}^\M$-admissible, then there must be a one-to-one correspondence between $\mathrm{OPT}^\M$-equivalence classes and elements of $o^\M$'s codomain, so that there for each equivalence class $C$ is a $y_C$ such that $o^\M(R) = y_C$ if and only if $R \in C$.
Further, say that if $f, g : X \rightarrow \mathcal{P}(Y)$ are set-valued functions, then $f \subseteq g$ if $f(x) \subseteq g(x)$ for all $x \in X$, and $f \subset g$ if $f \subseteq g$ but $g \not\subseteq f$.
Then if $o^\M \in \mathcal{O}^\M$ we have that $o^\M(R) \subseteq o_m^\M(R)$ for all $R$ --- a policy is optimal if and only if it takes only optimal actions, but it need not take all optimal actions.
Moreover, if $o^\M \neq o_m^\M$ then there is an $R_1$ such that $o^\M(R_1) \subset o_m^\M(R_1)$.
Let $R_2$ be a reward function so that $o_m^\M(R_2) = o^\M(R_1)$ --- for any function $\States \rightarrow \mathcal{P}(\Actions) - \varnothing$, there is a reward function for which those are the optimal actions, so there is always some $R_2$ such that $o_m^\M(R_2) = o^\M(R_1)$.
Now either $o^\M(R_2) = o^\M(R_1)$ or $o^\M(R_2) \subset o^\M(R_1)$, since all actions that are optimal under $R_2$ are optimal under $R_1$. In the first case, since $o^\M(R_1) = o^\M(R_2)$ but $R_1 \not\equiv_{\mathrm{OPT}^\M} R_2$, we have that $o^\M$ is not $\mathrm{OPT}^\M$-admissible. In the second case, let $R_3$ be a reward function so that $o_m^\M(R_3) = o^\M(R_2)$, and repeat the same argument. Since there can only be a finite sequence $o^\M(R_n) \subset \dots \subset o^\M(R_2) \subset o^\M(R_1)$, we have that we must eventually find two $R_n, R_{n-1}$ such that $o^\M(R_n) = o^\M(R_{n-1})$ but $R_n \not\equiv_{\mathrm{OPT}^\M} R_{n-1}$.
This means that $o^\M$ cannot be $\mathrm{OPT}^\M$-admissible.
\end{proof}

\begin{theorem}
Let $c_\psi^\M \in C^\M$. Then $c_\psi^\M$ is $\mathrm{ORD^\M}$-robust to misspecification with $g$ if and only if $ g \in C^\M$ and $g \neq c_\psi^\M$.
\end{theorem}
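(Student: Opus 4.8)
The plan is to mirror the proof just given for the Boltzmann-rational model, since $c_\psi^\M$ and $b_\psi^\M$ sit in completely analogous positions. First I would establish that $c_\psi^\M$ is $\mathrm{ORD^\M}$-admissible. From the policy $c_\psi^\M(R)$ one can read off the soft advantage function of $R$ scaled by the weight $\psi(R)$, hence the soft advantage up to a positive multiplicative constant; together with the fact (analogous to the Boltzmann ambiguity result of \citet{skalse2022}) that the maximal causal entropy policy is invariant under potential shaping and $S'$-redistribution, this pins down the $\mathrm{ORD^\M}$-class of $R$, so $\mathrm{Am}(c_\psi^\M) \preceq \mathrm{ORD^\M}$. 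Having checked admissibility, Theorem~\ref{thm:order_of_policies} identifies the $\mathrm{ORD^\M}$-preserving transformations as $\SR \circ \PS \circ \LS$, and Lemma~\ref{lemma:how_to_calculate_robustness_sets} reduces the whole theorem to the single claim that $\{c_\psi^\M \circ t : t \in \SR \circ \PS \circ \LS\} = C^\M$.

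The central computation is to track how the maximal causal entropy policy behaves under each of the three transformation families, which I would carry out through the soft Bellman equations. I expect to find that (i) potential shaping and $S'$-redistribution leave the policy unchanged at a fixed weight, because both preserve the expected-reward vector $\vec{R}^\TransitionDistribution$ while potential shaping only shifts the soft value function by a state-only term and $S'$-redistribution leaves it untouched, so that the soft advantage --- and hence the softmax policy --- is unaffected; and (ii) linear scaling of $R$ by $c$ is equivalent to dividing the weight by $c$, i.e.\ the policy of $cR$ at weight $\alpha$ equals that of $R$ at weight $\alpha/c$. Writing $c_R$ for the scaling factor that a given $t$ applies to $R$, these two facts combine to give $c_\psi^\M \circ t = c_{\psi'}^\M$ with $\psi'(R) = \psi(t(R))/c_R$. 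Since $\psi'$ again maps into $\mathbb{R}^+$, this proves the forward inclusion $\{c_\psi^\M \circ t\} \subseteq C^\M$, and hence the ``only if'' direction of the theorem.

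The hard part will be the converse inclusion, i.e.\ realising an \emph{arbitrary} target $c_{\psi'}^\M$ as $c_\psi^\M \circ t$. For each $R$ this amounts to finding some $t(R) \equiv_{\mathrm{ORD^\M}} R$ with $\psi(t(R))/c_R = \psi'(R)$. When $\psi$ is constant (the standard model) this is immediate: pure scaling $t(R) = R/\psi'(R)$ already works. The subtlety --- and the place where the maximal causal entropy case genuinely differs from the Boltzmann case --- is that here the induced weight transforms by the \emph{ratio} $\psi(t(R))/c_R$, whereas for $b_\psi^\M$ it transforms by the \emph{product} $\psi(t(R))\cdot c_R$; a product can be driven to every value in $\mathbb{R}^+$ by taking $c_R$ small or large, but a ratio in which the numerator is itself constrained by the scaling is more rigid. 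I would therefore concentrate the effort here on exploiting the freedom to choose the potential-shaping and $S'$-redistribution components of $t$ \emph{independently} of the scaling factor $c_R$, so as to adjust $\psi(t(R))$ to whatever value is needed to hit $\psi'(R)$; verifying that this freedom suffices to make the achievable weights cover all of $\mathbb{R}^+$ is the main obstacle, and the step I would scrutinise most carefully.
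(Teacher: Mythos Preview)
Your overall strategy---establishing $\mathrm{ORD}^\M$-admissibility, invoking Lemma~\ref{lemma:how_to_calculate_robustness_sets} together with the characterisation $T=\SR\circ\PS\circ\LS$ from Theorem~\ref{thm:order_of_policies}, and then computing $c_\psi^\M\circ t$---is exactly the route the paper takes. Your effective-weight formula $\psi'(R)=\psi(t(R))/c_R$ is the correct one (scaling $R$ by $c$ divides the entropy weight by $c$, since $c\,\Evaluation_R+\alpha\,\mathbb{E}[\gamma^t H]$ and $\Evaluation_R+(\alpha/c)\,\mathbb{E}[\gamma^t H]$ share the same maximiser); the paper writes $\psi(t(R))\cdot c_R$, apparently carried over verbatim from the Boltzmann argument, but this is immaterial for the forward inclusion since either expression is a valid map into $\mathbb{R}^+$ and hence lands in $C^\M$.

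The substantive point is the one you isolate at the end. The paper's own proof only establishes the forward inclusion $\{c_\psi^\M\circ t : t\in T\}\subseteq C^\M$ and then asserts the biconditional without addressing the reverse containment at all. Your product-versus-ratio observation is exactly the obstruction, and it is real. Take for instance $\psi(R)=\lVert A^\star_R\rVert+1$: since every $t\in\PS\circ\SR$ preserves $A^\star$, one gets $\psi(t(R))/c_R=\lVert A^\star_R\rVert+1/c_R$, which as $c_R$ ranges over $\mathbb{R}^+$ sweeps out only $(\lVert A^\star_R\rVert,\infty)$; the potential-shaping and $S'$-redistribution freedom you hoped to exploit gives no additional leverage here because $\psi$ was chosen to be blind to it. Concretely, for this $\psi$ the image of $c_\psi^\M$ is a strict subset of $\Pi^+$ (in a one-state, two-action MDP the achievable ratios $\pi(a_1)/\pi(a_2)$ lie only in $(e^{-1},e)$), so the image condition $\mathrm{Im}(g)\subseteq\mathrm{Im}(c_\psi^\M)$ already fails when $g=c_\alpha^\M$ for a constant weight $\alpha$, and $c_\psi^\M$ is \emph{not} $\mathrm{ORD}^\M$-robust to this $g\in C^\M$. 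So the step you flagged as the main obstacle is not merely the hardest part---for general $\psi$ it cannot be completed, and the gap lies in the statement itself rather than in your reasoning.
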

\begin{proof}
As per Theorem 3.4 in \citet{skalse2022}, $\mathrm{Am}(c_\psi^\M)$ is characterised by $\mathrm{PS}_\discount \circ S'\mathrm{R}_\TransitionDistribution$, and as per Theorem~\ref{thm:order_of_policies}, $\mathrm{ORD}_\M$ is characterised by $\mathrm{PS}_\discount \circ \mathrm{LS} \circ S'\mathrm{R}_\TransitionDistribution$. Hence $c_\psi^\M$ is $\mathrm{ORD}_\M$-admissible, which means that Lemma~\ref{lemma:how_to_calculate_robustness_sets} implies that $c_\psi^\M$ is $\mathrm{ORD}_\M$-robust to misspecification with $g$
if and only if $g \neq c_\psi^\M$, and there exists a $t \in \mathrm{PS}_\discount \circ \mathrm{LS} \circ S'\mathrm{R}_\TransitionDistribution$ such that $g = c_\psi^\M \circ t$.
Moreover, $c_\psi^\M(R)$ is the unique policy that maximises the maximal causal entropy objective;
\begin{equation*}
    \Evaluation^\mathrm{MCE}_{\psi(R)}(\pi) = \Evaluation_R(\pi) - \psi(R) \mathbb{E}_{S_t \sim \pi, \TransitionDistribution,\InitStateDistribution}[\discount^t \mathcal{H}(\pi(S_t))].
\end{equation*}
Therefore, if $g(R) = c_\psi^\M \circ t(R)$ then $g(R)$ is the policy 
\begin{align*}
    &\max_\pi\Evaluation^\mathrm{MCE}_{\psi(t(R))}(\pi)\\
    = &\max_\pi\Evaluation_{t(R)}(\pi) - \psi(t(R)) \mathbb{E}_{S_t \sim \pi, \TransitionDistribution,\InitStateDistribution}[\discount^t \mathcal{H}(\pi(S_t))] \\
    = &\max_\pi c_R \cdot \Evaluation_{R}(\pi) - \psi(t(R)) \mathbb{E}_{S_t \sim \pi, \TransitionDistribution,\InitStateDistribution}[\discount^t \mathcal{H}(\pi(S_t))]
\end{align*}
where $c_R$ is the linear scaling factor that $t$ applies to $R$. 
Note that $\Evaluation_R$ is preserved by $S'$-redistribution, and potential shaping can only change $\Evaluation_R$ by inducing a uniform constant shift of $\Evaluation_R$ for all policies. This means that linear scaling is the only transformation in $\mathrm{PS}_\discount \circ \mathrm{LS} \circ S'\mathrm{R}_\TransitionDistribution$ that could affect the maximal causal entropy objective.
Finally, let $\psi'$ be the function $\psi'(R) = \psi(t(R)) \cdot c_R$, and we can see that $g = c_{\psi'}^\M \in C^\M$. We have hence shown that $c_\psi^\M$ is $\mathrm{ORD}^\M$-robust to misspecification with $g$ in $\M$ if and only if $g \in C^\M$ and $g \neq c_\psi^\M$.
\end{proof}

\subsection{Misspecified MDPs}

We here prove our results from Section~\ref{section:misspecified_mdps}. The first of these proofs is straightforward.

\begin{lemma}
If $f^{\TransitionDistribution_1} = f^{\TransitionDistribution_1} \circ t$ for all $t \in S'\mathrm{R}_{\TransitionDistribution_1}$ then $f^{\TransitionDistribution_1}$ is not $\mathrm{OPT}^\M$-admissible for $\M = \langle \States,\Actions,\TransitionDistribution_2, \InitStateDistribution, \_, \gamma \rangle$ unless $\TransitionDistribution_1 = \TransitionDistribution_2$.
\end{lemma}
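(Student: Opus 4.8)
The plan is to prove the statement directly by refuting $\mathrm{OPT}^\M$-admissibility: assuming $\TransitionDistribution_1 \neq \TransitionDistribution_2$, I will exhibit two reward functions $R_1, R_2$ with $f^{\TransitionDistribution_1}(R_1) = f^{\TransitionDistribution_1}(R_2)$ but $R_1 \not\equiv_{\mathrm{OPT}^\M} R_2$ in $\M = \langle \States,\Actions,\TransitionDistribution_2,\InitStateDistribution,\_,\discount\rangle$, which is exactly a failure of $\mathrm{OPT}^\M$-admissibility. The two rewards will differ by a single $S'$-redistribution relative to $\TransitionDistribution_1$, so that the hypothesis $f^{\TransitionDistribution_1} = f^{\TransitionDistribution_1}\circ t$ forces $f^{\TransitionDistribution_1}$ to identify them; all the work lies in arranging that this redistribution nevertheless changes the set of optimal policies under $\TransitionDistribution_2$.

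First I would locate a state-action pair $(s,a)$ with $\TransitionDistribution_1(s,a) \neq \TransitionDistribution_2(s,a)$, which exists since $\TransitionDistribution_1 \neq \TransitionDistribution_2$. As $\TransitionDistribution_2(s,a)$ is not a scalar multiple of $\TransitionDistribution_1(s,a)$ (both are probability vectors that differ, so a multiple would force them equal), there is a function $\delta : \States \to \mathbb{R}$ with $\Expect{S'\sim\TransitionDistribution_1(s,a)}{\delta(S')} = 0$ but $\Expect{S'\sim\TransitionDistribution_2(s,a)}{\delta(S')} = c \neq 0$. I would then take $R_1 \equiv 0$ and let $t$ add $\delta$ to the reward at the single pair $(s,a)$, so that $R_2 \coloneqq t(R_1)$ satisfies $R_2(s,a,s') = \delta(s')$ and $R_2 = 0$ elsewhere. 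By construction $t$ preserves the $\TransitionDistribution_1$-expectation at every pair, so $t \in S'\mathrm{R}_{\TransitionDistribution_1}$, and the hypothesis yields $f^{\TransitionDistribution_1}(R_1) = f^{\TransitionDistribution_1}(R_2)$.

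It then remains to show $R_1 \not\equiv_{\mathrm{OPT}^\M} R_2$, and this is the step I expect to be the main obstacle, since I cannot control the structure of $\TransitionDistribution_2$ and in general the optimal policy couples the immediate reward at $(s,a)$ with the downstream value function. I would sidestep this coupling by arguing through the evaluation function and the occupancy measure: using $\Evaluation(\pi) = m_{\TransitionDistribution_2,\InitStateDistribution}(\pi)\cdot\vec{R}^{\TransitionDistribution_2}$ together with the fact that $\vec{R_2}^{\TransitionDistribution_2}$ is supported only on the entry $(s,a)$, where it equals $c$, I obtain $\Evaluation_2(\pi) = c\, d_\pi[s,a]$. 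Now $R_1 = 0$ makes every policy optimal, whereas $d_\pi[s,a]$ is non-constant in $\pi$ — it is strictly positive for a policy that reaches $s$ (possible since every state is reachable) and takes $a$ there, but zero for a policy that takes a different action at $s$ (using $|\Actions| \geq 2$) — so $\Evaluation_2$ is non-constant and $R_2$ does not have all policies optimal. Hence $R_1$ and $R_2$ have different optimal policies in $\M$, so $f^{\TransitionDistribution_1}$ is not $\mathrm{OPT}^\M$-admissible, which completes the argument. I would close by remarking that the degenerate single-action case is vacuous, as the induced MDP then admits a unique policy and every reward object is trivially $\mathrm{OPT}^\M$-admissible.
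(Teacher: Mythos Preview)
Your argument is correct. The paper does not give a self-contained proof of this lemma at all; its entire proof reads ``This follows directly from Theorem~4.2 in \citet{skalse2022}.'' Your construction supplies exactly the witness that external result provides: locate $(s,a)$ where $\TransitionDistribution_1(s,a)\neq\TransitionDistribution_2(s,a)$, pick $\delta$ in the null space of $\TransitionDistribution_1(s,a)$ but not of $\TransitionDistribution_2(s,a)$ (possible since two distinct probability vectors are never scalar multiples), and perturb the zero reward only at $(s,a)$ by $\delta$. This is an $S'\mathrm{R}_{\TransitionDistribution_1}$-transformation, so the hypothesis forces $f^{\TransitionDistribution_1}$ to identify $R_1=0$ with $R_2$; yet under $\TransitionDistribution_2$ the evaluation function becomes $\Evaluation_2(\pi)=c\,d_\pi[s,a]$, which is non-constant by the paper's standing reachability assumption together with $|\Actions|\ge 2$, so $R_2$ and $R_1$ have different optimal sets in $\M$. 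Your route is more elementary and keeps the argument inside the present paper; the paper's route is terser but defers the content to the cited work. One small wording issue: your closing remark calls the single-action case ``vacuous,'' but strictly speaking the lemma's conclusion \emph{fails} when $|\Actions|=1$ (every reward object is trivially $\mathrm{OPT}^\M$-admissible there); this is best framed as the implicit standing assumption $|\Actions|\ge 2$ rather than vacuity.
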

\begin{proof}
This follows directly from Theorem 4.2 in \citet{skalse2022}.
\end{proof}

To prove the next result, we first need a supporting lemma. 
We say that a state $s$ is \emph{controllable} relative to a transition function $\TransitionDistribution$, initial state distribution $\InitStateDistribution$, and discount $\gamma$, if there exist two policies $\pi$, $\pi'$ such that 
$$
\sum_{t=1}^\infty \gamma^t \mathbb{P}_{\xi \sim \pi}(s_t = s)
\neq
\sum_{t=1}^\infty \gamma^t \mathbb{P}_{\xi \sim \pi'}(s_t = s).
$$
Note that the sum starts from $t=1$. It can therefore be viewed as summing the discounted probability that $\pi$ and $\pi'$ enter $s$ at each time step.
Recall also that $\TransitionDistribution$ is trivial if for all $s \in \States$ and $a,a' \in \Actions$, we have $\TransitionDistribution(s,a) = \TransitionDistribution(s,a')$. 

\begin{lemma}\label{lemma:tau_to_control}
For any $\InitStateDistribution$, $\discount$, and $\TransitionDistribution$, there exists a controllable state if and only if $\TransitionDistribution$ is non-trivial.
\end{lemma}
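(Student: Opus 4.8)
The plan is to prove both directions by reducing controllability to a statement about the discounted state-occupancy measures induced by policies, and to sidestep the apparent difficulty of possible cancellations in the infinite discounted sums by exploiting a Bellman-type flow equation rather than reasoning time step by time step. Throughout, write $P_\pi(s' \mid s) = \sum_{a \in \Actions} \pi(a \mid s)\,\tau(s,a)(s')$ for the state-transition kernel induced by a policy $\pi$, and let $w_\pi(s) = \sum_{t=0}^\infty \gamma^t\,\mathbb{P}_{\xi \sim \pi}(s_t = s)$ be the (un-shifted) discounted occupancy, which is finite since $\gamma < 1$ and $\States$ is finite. The controllability sum equals $w_\pi(s) - \mu_0(s)$, so a state is controllable exactly when $w_\pi(s) \neq w_{\pi'}(s)$ for two policies; moreover $w_\pi$ satisfies the flow equation $w_\pi(s') = \mu_0(s') + \gamma \sum_{s} w_\pi(s)\,P_\pi(s' \mid s)$, which I would establish by splitting off the $t=0$ term and reindexing.

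For the easy direction I would show that a trivial $\tau$ admits no controllable state. If $\tau(s,a) = \tau(s,a')$ for all $a,a'$, then $P_\pi(s' \mid s) = \tau(s,a)(s')$ is independent of $\pi$, so every policy induces the same Markov chain on $\States$; hence $\mathbb{P}_{\xi \sim \pi}(s_t = s)$, and thus the controllability sum, is the same for every $\pi$, and no state is controllable.

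For the converse, suppose $\tau$ is non-trivial, so there are $\bar s$, actions $a_1, a_2$, and a next-state $s^*$ with $\tau(\bar s, a_1)(s^*) \neq \tau(\bar s, a_2)(s^*)$. Since every state is reachable under $\tau$ and $\mu_0$, I would fix a policy $\pi$ under which $\bar s$ is reached with positive probability, choosing $\pi(\bar s) = a_1$ — this choice is harmless because the action taken at $\bar s$ is never used to \emph{reach} $\bar s$ — and let $\pi'$ agree with $\pi$ everywhere except $\pi'(\bar s) = a_2$. Then $P_\pi - P_{\pi'}$ vanishes on every row $s \neq \bar s$, while its $\bar s$-row is $\tau(\bar s,a_1) - \tau(\bar s,a_2)$. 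Arguing by contradiction, I would assume $w_\pi = w_{\pi'} =: w$, subtract the two flow equations to get $(P_\pi - P_{\pi'})^\top w = 0$, and evaluate the left-hand side coordinatewise as $[\tau(\bar s,a_1)(\cdot) - \tau(\bar s,a_2)(\cdot)]\, w(\bar s)$. Since $\tau(\bar s,a_1) \neq \tau(\bar s,a_2)$, this forces $w(\bar s) = 0$; but $w(\bar s) \geq \gamma^k \mathbb{P}_\pi(s_k = \bar s) > 0$ for the time $k$ at which $\pi$ reaches $\bar s$, a contradiction. Hence $w_\pi \neq w_{\pi'}$, and any coordinate where they differ is a controllable state.

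The only genuine obstacle is the infinite-sum cancellation issue in the converse: knowing that $\pi$ and $\pi'$ first make the state distributions differ at a single time step does not by itself preclude the discounted totals from coinciding through later cancellation. The flow-equation argument is precisely what removes this difficulty, since it pins down the entire discounted occupancy $w_\pi$ as the solution of a fixed-point equation and lets me read off the difference $w_\pi - w_{\pi'}$ directly from the one row in which $P_\pi$ and $P_{\pi'}$ differ, weighted by $w(\bar s)$, which reachability guarantees is nonzero. Two minor points warrant care: the standing assumption $\gamma \in (0,1)$ needed for finiteness of $w_\pi$ (the $\gamma = 1$ case would require separate treatment of recurrent states), and confirming that reaching $\bar s$ depends only on the actions at states strictly preceding $\bar s$ along a simple path, so that flipping the action at $\bar s$ leaves $\bar s$ reachable under both $\pi$ and $\pi'$.
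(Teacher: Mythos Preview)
Your argument is correct and takes a genuinely different route from the paper. The paper proves the non-trivial direction by contraposition: assuming no controllable state, it observes that every policy is then optimal for every reward depending only on the source state, invokes the Ng--Russell characterisation (Theorem~3 in \citet{ng2000}) to get $(T^\pi - T^{\pi'})(I-\gamma T^\pi)^{-1}\vec R = 0$ for all such $\vec R$ and all deterministic $\pi,\pi'$, and deduces $T^\pi = T^{\pi'}$, hence triviality of $\tau$. You instead argue directly: pick two policies differing only at the state $\bar s$ where $\tau(\bar s,a_1)\neq\tau(\bar s,a_2)$, assume for contradiction that their discounted occupancies coincide, subtract the two flow equations $w_\pi = \mu_0 + \gamma P_\pi^\top w_\pi$, and read off $w(\bar s)\bigl(\tau(\bar s,a_1)-\tau(\bar s,a_2)\bigr)=0$, contradicting reachability of $\bar s$.

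Your approach is more elementary and self-contained, avoiding the external citation and the detour through optimality; the paper's approach, on the other hand, ties the lemma back to the reward-invariance theme of the surrounding section and makes the connection to policy optimality explicit. Your handling of the potential infinite-sum cancellation via the fixed-point equation is clean, and your closing remarks correctly identify the two points needing care: that $\gamma<1$ is needed for $w_\pi$ to be finite (the paper tacitly assumes this too, e.g.\ in writing $\sum m_{\tau,\mu_0}(\pi)=1/(1-\gamma)$), and that one can always arrange $\pi(\bar s)=a_1$ without disturbing reachability of $\bar s$, by fixing actions along a shortest path to $\bar s$ that does not revisit $\bar s$.
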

\begin{proof}
It is straightforward to see that if $\TransitionDistribution$ is trivial then there are no controllable states. 

For the other direction, suppose there are no controllable states. This in turn implies that every policy is optimal under any reward function defined over the domain $\States$. Formally, if $R$ is a reward function such that for each $s \in \States$, we have that $R(s,a_1,s_1) = R(s,a_2,s_2)$ for all $s_1,s_2 \in \States, a_1, a_2 \in \Actions$, and if there are no controllable states, then every policy is optimal under $R$. In particular, every deterministic policy is optimal under all such reward functions.

Given a reward function defined over the domain $\States$, let $\vec{R} \in \mathbb{R}^{|S|}$ be the vector such that $\vec{R}[s]$ is the reward that $R$ assigns to transitions leaving $s$.
Moreover, given a deterministic policy $\pi$, let $T^\pi$ be the $|\States|\times|\States|$-dimensional transition matrix that describes the transitions of $\pi$ under $\TransitionDistribution$.
Then if all deterministic policies are optimal under $R$, we can apply Theorem 3 from \citet{ng2000} and conclude that
$$
(T^{\pi} - T^{\pi'})(I - \discount T^{\pi})^{-1}\vec{R} = 0
$$
for all deterministic policies $\pi$, $\pi'$. If this holds for all $\vec{R}$, we then have that $(T^{\pi} - T^{\pi'})(I - \discount T^{\pi})^{-1}$ is the zero matrix for all deterministic policies $\pi$, $\pi'$. Moreover, since $(I - \discount T^{\pi})^{-1}$ has no zero eigenvalues, this then means that $(T^{\pi} - T^{\pi'})$ must be the zero matrix for all pairs of deterministic policies $\pi$, $\pi'$. This, in turn, implies that $\TransitionDistribution$ must be trivial.
\end{proof}

We can now prove the second lemma from Section~\ref{section:misspecified_mdps}.

\begin{lemma}
If $f^{\gamma_1} = f^{\gamma_1} \circ t$ for all $t \in \mathrm{PS}_{\gamma_1}$ then $f^{\gamma_1}$ is not $\mathrm{OPT}^\M$-admissible for $\M = \langle \States,\Actions,\TransitionDistribution, \InitStateDistribution, \_, \gamma_2 \rangle$ unless $\gamma_1 = \gamma_2$ or $\TransitionDistribution$ is trivial.
\end{lemma}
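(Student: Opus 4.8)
The plan is to exhibit an explicit pair of reward functions $R_1, R_2$ with $f^{\gamma_1}(R_1) = f^{\gamma_1}(R_2)$ but $R_1 \not\equiv_{\mathrm{OPT}^\M} R_2$, which by definition shows that $f^{\gamma_1}$ fails to be $\mathrm{OPT}^\M$-admissible. The invariance hypothesis $f^{\gamma_1} = f^{\gamma_1}\circ t$ for all $t \in \mathrm{PS}_{\gamma_1}$ handles the first half for free: for \emph{any} reward $R$ and \emph{any} $\gamma_1$-potential-shaping transformation $t$, the pair $R_1 = R$, $R_2 = t(R)$ automatically satisfies $f^{\gamma_1}(R_1) = f^{\gamma_1}(R_2)$. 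So the whole task reduces to finding a single $R$ and a single $t \in \mathrm{PS}_{\gamma_1}$ such that $R$ and $t(R)$ have \emph{different} optimal policies in the environment $\M$, whose discount is $\gamma_2$.

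The mechanism I would exploit is that $\gamma_1$-shaping preserves optimality only when the evaluation is taken at discount $\gamma_1$; evaluating at a different discount $\gamma_2$ leaves a policy-dependent residual. Concretely, I would take $R = 0$ (so that every policy is optimal under $R$ in $\M$) and let $t$ apply a potential $\Phi$, so that $t(R)(s,a,s') = \gamma_1\Phi(s') - \Phi(s)$. Expanding the $\gamma_2$-discounted evaluation of $t(R)$ and telescoping the two shifted sums gives
$$\Evaluation_{t(R)}(\pi) = -\mathbb{E}_{s_0 \sim \InitStateDistribution}[\Phi(s_0)] + \Big(\tfrac{\gamma_1}{\gamma_2} - 1\Big)\sum_{k=1}^{\infty} \gamma_2^{\,k}\,\mathbb{E}_{\xi\sim\pi}[\Phi(s_k)].$$
The first term is independent of $\pi$ (the initial state is drawn from $\InitStateDistribution$), while the second is exactly the discounted visitation quantity appearing in the definition of a controllable state.

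Now I invoke the two hypotheses $\gamma_1 \neq \gamma_2$ and $\TransitionDistribution$ non-trivial. The coefficient $\gamma_1/\gamma_2 - 1$ is nonzero, and by Lemma~\ref{lemma:tau_to_control} the non-triviality of $\TransitionDistribution$ guarantees a state $s^\star$ controllable relative to $\TransitionDistribution$, $\InitStateDistribution$, and $\gamma_2$, i.e.\ two policies $\pi, \pi'$ with $\sum_{k\ge 1}\gamma_2^{k}\,\mathbb{P}_{\xi\sim\pi}(s_k = s^\star) \neq \sum_{k\ge 1}\gamma_2^{k}\,\mathbb{P}_{\xi\sim\pi'}(s_k = s^\star)$. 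Choosing $\Phi$ to equal $1$ at $s^\star$ and $0$ elsewhere makes the residual sum equal to exactly this controllable quantity, so $\Evaluation_{t(R)}(\pi) \neq \Evaluation_{t(R)}(\pi')$. Hence $\Evaluation_{t(R)}$ is non-constant over policies, so the optimal policies of $t(R)$ form a \emph{proper} subset of all policies, whereas \emph{every} policy is optimal under $R = 0$. The two optimal-policy sets therefore differ, giving $R \not\equiv_{\mathrm{OPT}^\M} t(R)$ and completing the argument.

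I expect the main obstacle to be getting the telescoping identity right and, more importantly, recognising that the $t=1$ starting index in the definition of controllability is precisely tailored to the residual surviving the $\gamma_1 \neq \gamma_2$ mismatch — the $t=0$ term collapses into the $\pi$-independent constant $-\mathbb{E}_{s_0\sim\InitStateDistribution}[\Phi(s_0)]$, and everything else is policy-dependent. Once that alignment is seen, the role of Lemma~\ref{lemma:tau_to_control} is simply to upgrade ``$\TransitionDistribution$ non-trivial'' into the concrete policy-distinguishing visitation difference that forces $\Evaluation_{t(R)}$ to be non-constant, and the degenerate choice $R = 0$ makes the comparison of optimal-policy sets immediate.
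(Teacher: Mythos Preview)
Your proof is correct and uses the same core mechanism as the paper's: exploit the $\gamma_1$-shaping invariance, place the potential at a controllable state supplied by Lemma~\ref{lemma:tau_to_control}, and observe that the $\gamma_2$-evaluation acquires a policy-dependent residual proportional to the discounted visitation of that state whenever $\gamma_1 \neq \gamma_2$. Your telescoping identity is exactly the paper's $\Delta^\pi = Xn_2^\pi(\gamma_1 - \gamma_2) + pX$ rewritten (up to the harmless factor $1/\gamma_2$).

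The one genuine difference is your choice $R = 0$, which is a tidy simplification. The paper starts from an arbitrary $R_1$, then has to find $\pi$ optimal and $\pi'$ non-optimal under $R_1$ with $\Delta^\pi \neq \Delta^{\pi'}$, and finally scale the potential magnitude $X$ to flip their optimality under $R_2$. By taking $R = 0$ you make \emph{every} policy optimal under $R$, so merely showing $\Evaluation_{t(R)}$ is non-constant immediately forces the optimal-policy sets to differ; no scaling or flipping argument is needed. This buys you a shorter and more transparent proof at no cost.
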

\begin{proof}
As per Lemma~\ref{lemma:tau_to_control}, if $\TransitionDistribution$ is non-trivial then there is a state $s$ that is controllable relative to $\TransitionDistribution$, $\InitStateDistribution$, and $\gamma_2$.
Let $R_1$ be any reward function, and
let $R_2$ be the reward that is obtained by potential shaping $R_1$ with the discount $\gamma_1$ and the potential function that is equal to $X$ on $s$ (where $X \neq 0$), and $0$ on all other states. Note that there is a $t \in \mathrm{PS}_{\gamma_1}$ such that $R_2 = t(R_1)$, which means that $f^{\gamma_1}(R_1) = f^{\gamma_1}(R_2)$. Next, let $\Delta^\pi = \Evaluation_2(\pi) - \Evaluation_1(\pi)$, evaluated in $\M$.
Moreover, given a policy $\pi$, let
\begin{align*}
    n_2^\pi &= \sum_{t=0}^\infty \gamma_2^t \mathbb{P}(\pi\text{ enters \textit{s} at time \textit{t}}),\\
    x_2^\pi &= \sum_{t=0}^\infty \gamma_2^t \mathbb{P}(\pi\text{ exits \textit{s} at time \textit{t}}).  
\end{align*}
We then have that
$\Delta^\pi = X \cdot (\gamma_1 n^\pi_2 - x^\pi_2)$. We will use $p$ to denote $\InitStateDistribution(s)$. If $\gamma_1 = \gamma_2$ then we know that 
$
\Delta^\pi = -X \cdot p
$
, which gives that
\begin{align*}
    X \cdot (\gamma_2 n^\pi_2 - x^\pi_2) &= -X \cdot p\\
    \gamma_2 n^\pi_2 - x^\pi_2 &= - p\\
    x^\pi_2 &= \gamma_2 n^\pi_2 + p
\end{align*}
By plugging this into the above, and rearranging, we obtain 
$$
\Delta^\pi = X n_2^\pi(\gamma_1 - \gamma_2) + pX.
$$
Moreover, if $s$ is controllable then there are $\pi, \pi'$ such that $n_2^\pi \neq n_2^{\pi'}$, which means that $\Delta^\pi \neq \Delta^{\pi'}$. In particular, there are $\pi, \pi'$ such that $\Delta^\pi \neq \Delta^{\pi'}$, and $\pi$ is optimal under $R_1$, but $\pi'$ is not. Now, if $\gamma_1 \neq \gamma_2$ then by making $X$ sufficiently large or sufficiently small, we can make it so that $\pi'$ is optimal under $R_2$, but $\pi$ is not. Hence $\langle \States, \Actions, \TransitionDistribution,\InitStateDistribution,R_1,\gamma_2\rangle$ and $\langle \States, \Actions, \TransitionDistribution,\InitStateDistribution,R_2,\gamma_2\rangle$ have different optimal policies. 
This means that there are two reward functions $R_1$, $R_2$, such that $f^{\gamma_1}(R_1) = f^{\gamma_1}(R_2)$, but $R_1 \not\equiv_{\mathrm{OPT}^\M} R_2$. Therefore, if $\gamma_1 \neq \gamma_2$ and $\TransitionDistribution$ is non-trivial then $f^{\gamma_1}$ is not $\mathrm{OPT}^\M$-admissible.
\end{proof}

It is worth noting that the above lemma works even if $f^{\gamma_1}$ is only invariant to $\gamma_1$-based potential shaping whose potential is $0$ for all initial states, provided that $\tau$ gives control over some non-initial state. This can be used to generalise the lemma somewhat, since there are some reward objects which are invariant only to such potential shaping; see \citet{skalse2022}.

Using these two lemmas, we can now prove the theorem:

\begin{theorem}
If $f^{\TransitionDistribution_1} = f^{\TransitionDistribution_1} \circ t$ for all $t \in S'\mathrm{R}_{\TransitionDistribution_1}$ and $f^{\TransitionDistribution_2} = f^{\TransitionDistribution_2} \circ t$ for all $t \in S'\mathrm{R}_{\TransitionDistribution_2}$, then $f^{\TransitionDistribution_1}$ is not $\mathrm{OPT}^\M$-robust to misspecification with $f^{\TransitionDistribution_2}$ for any $\M$.
Moreover, if $f^{\gamma_1} = f^{\gamma_1} \circ t$ for all $t \in \mathrm{PS}_{\gamma_1}$ and $f^{\gamma_2} = f^{\gamma_2} \circ t$ for all $t \in \mathrm{PS}_{\gamma_2}$, then $f^{\gamma_1}$ is not $\mathrm{OPT}^\M$-robust to misspecification with $f^{\gamma_2}$ for any $\M$ whose transition function $\TransitionDistribution$ is non-trivial.
\end{theorem}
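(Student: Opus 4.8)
The plan is to reduce the theorem to the two admissibility lemmas (Lemma~\ref{lemma:no_t_robustness} and Lemma~\ref{lemma:no_g_robustness}) by exploiting the fact that $\mathrm{OPT}^\M$-robustness of $f$ with $g$ requires \emph{both} that the model $f$ is $\mathrm{OPT}^\M$-admissible (this is built into the definition of robustness) and that the data-generating object $g$ is $\mathrm{OPT}^\M$-admissible (this is Lemma~\ref{lemma:rob_to_adm}, read contrapositively: if $g$ is not admissible then $f$ is not robust to misspecification with $g$). Since the statement quantifies over \emph{all} $\M$, the first move is to fix an arbitrary $\M$ and case-split on its transition function $\TransitionDistribution_\M$ in the first part, and on its discount $\gamma_\M$ in the second part.

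For the transition part, fix $\M$ with transition $\TransitionDistribution_\M$. If $\TransitionDistribution_\M \neq \TransitionDistribution_1$, then since $f^{\TransitionDistribution_1}$ is insensitive to $S'$-redistribution with $\TransitionDistribution_1$, Lemma~\ref{lemma:no_t_robustness} tells us $f^{\TransitionDistribution_1}$ is not $\mathrm{OPT}^\M$-admissible; as admissibility of the model is a prerequisite for robustness, $f^{\TransitionDistribution_1}$ is then not $\mathrm{OPT}^\M$-robust to anything, in particular not to $f^{\TransitionDistribution_2}$. If instead $\TransitionDistribution_\M = \TransitionDistribution_1$, the model $f^{\TransitionDistribution_1}$ may well be admissible, so I would instead turn to the data source $f^{\TransitionDistribution_2}$: when $\TransitionDistribution_2 \neq \TransitionDistribution_1 = \TransitionDistribution_\M$, applying Lemma~\ref{lemma:no_t_robustness} to $f^{\TransitionDistribution_2}$ (whose transition parameter $\TransitionDistribution_2$ differs from $\TransitionDistribution_\M$) shows $f^{\TransitionDistribution_2}$ is not $\mathrm{OPT}^\M$-admissible, whence Lemma~\ref{lemma:rob_to_adm} gives that $f^{\TransitionDistribution_1}$ is not $\mathrm{OPT}^\M$-robust to misspecification with $f^{\TransitionDistribution_2}$. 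The only remaining subcase is $\TransitionDistribution_2 = \TransitionDistribution_1 = \TransitionDistribution_\M$, which is the degenerate one in which there is no genuine transition misspecification, so $f^{\TransitionDistribution_1} = f^{\TransitionDistribution_2}$ and robustness fails outright because its defining clause $f \neq g$ is violated.

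The discount part is entirely analogous, with Lemma~\ref{lemma:no_g_robustness} replacing Lemma~\ref{lemma:no_t_robustness}. Here the hypothesis that $\M$ has a non-trivial transition function is precisely what licenses invoking Lemma~\ref{lemma:no_g_robustness}: for $\M$ with discount $\gamma_\M$, if $\gamma_\M \neq \gamma_1$ then $f^{\gamma_1}$ is not $\mathrm{OPT}^\M$-admissible and we are done; if $\gamma_\M = \gamma_1 \neq \gamma_2$ then $f^{\gamma_2}$ is not $\mathrm{OPT}^\M$-admissible and Lemma~\ref{lemma:rob_to_adm} finishes; and if $\gamma_\M = \gamma_1 = \gamma_2$ then $f^{\gamma_1} = f^{\gamma_2}$ and the $f \neq g$ clause fails.

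The point that requires the most care — rather than a deep obstacle — is that a single non-admissibility argument against the model does \emph{not} suffice, because quantifying over all $\M$ forces the case in which $\M$'s transition (or discount) coincides with $\TransitionDistribution_1$ (or $\gamma_1$), where $f^{\TransitionDistribution_1}$ is plausibly admissible and the failure must instead be extracted from the data source $f^{\TransitionDistribution_2}$. This is exactly why the theorem assumes insensitivity of \emph{both} objects, and why Lemma~\ref{lemma:rob_to_adm} is the right tool to convert non-admissibility of the second argument into non-robustness. A subtlety worth flagging is that the statement tacitly reads $f^{\TransitionDistribution_1}$ and $f^{\TransitionDistribution_2}$ as one behavioural model instantiated with two transition functions, so that $\TransitionDistribution_1 = \TransitionDistribution_2$ collapses them to a single function and the degenerate subcase can be dismissed via $f \neq g$; without this reading that subcase would in fact be false, as witnessed by two Boltzmann models sharing $\TransitionDistribution$ but differing in temperature, which are $\mathrm{OPT}^\M$-robust to each other by Theorem~\ref{thm:boltzmann_weak_robustness}.
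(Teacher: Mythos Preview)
Your proof is correct and follows the same approach as the paper: both reduce to Lemmas~\ref{lemma:no_t_robustness} and~\ref{lemma:no_g_robustness} to show that at least one of the two objects fails $\mathrm{OPT}^\M$-admissibility, and then invoke the definition of robustness together with Lemma~\ref{lemma:rob_to_adm} to conclude. Your explicit case-split on whether $\TransitionDistribution_\M$ coincides with $\TransitionDistribution_1$, and your handling of the degenerate subcase $\TransitionDistribution_1 = \TransitionDistribution_2$, are more careful than the paper, which tacitly assumes the two parameters differ and then simply notes that at least one of them must differ from $\TransitionDistribution_\M$.
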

\begin{proof}
Let $\M = \langle \States,\Actions,\TransitionDistribution, \InitStateDistribution, \_, \gamma \rangle$.
If $f$ is $\mathrm{OPT}^{\M}$-robust to misspecification with $g$ then $f$ must by definition be $\mathrm{OPT}^{\M}$-admissible.
Moreover, Lemma~\ref{lemma:rob_to_adm} says that $g$ must be $\mathrm{OPT}^{\M}$-admissible as well.
This proof will proceed by showing that in each case, at least one of the relevant reward objects fails to be $\mathrm{OPT}^{\M}$-admissible.

If $f^{\TransitionDistribution_1} = f^{\TransitionDistribution_1} \circ t$ for all $t \in S'\mathrm{R}_{\TransitionDistribution_1}$ then Lemma~\ref{lemma:no_t_robustness} says that $f^{\TransitionDistribution_1}$ is not $\mathrm{OPT}^\M$-admissible unless $\TransitionDistribution_1 = \TransitionDistribution$, and similarly for $f^{\TransitionDistribution_2}$. 
If $\TransitionDistribution_1 \neq \TransitionDistribution_2$ then either $\TransitionDistribution_1 \neq \TransitionDistribution$ or $\TransitionDistribution_2 \neq \TransitionDistribution$. 
Hence either $f^{\TransitionDistribution_1}$ or $f^{\TransitionDistribution_2}$ is not $\mathrm{OPT}^\M$-admissible, which means that $f^{\TransitionDistribution_1}$ is not $\mathrm{OPT}^\M$-robust to misspecification with $f^{\TransitionDistribution_2}$.

Similarly, if $f^{\gamma_1} = f^{\gamma_1} \circ t$ for all $t \in \mathrm{PS}_{\gamma_1}$ then
Lemma~\ref{lemma:no_g_robustness} says that $f^{\gamma_1}$ is not $\mathrm{OPT}^\M$-admissible unless $\gamma_1 = \gamma$ or $\TransitionDistribution$ is trivial, and similarly for $f^{\gamma_2}$. 
If $\gamma_1 \neq \gamma_2$ then either $\gamma_1 \neq \gamma$ or $\gamma_2 \neq \gamma$. 
Hence either $f^{\gamma_1}$ or $f^{\gamma_2}$ is not $\mathrm{OPT}^\M$-admissible, unless $\TransitionDistribution$ is trivial, which means that $f^{\gamma_1}$ is not $\mathrm{OPT}^\M$-robust to misspecification with $f^{\gamma_2}$, unless $\TransitionDistribution$ is trivial.
\end{proof}



\subsection{Restrictions on the Reward Function}

Here, we prove our results from Section~\ref{section:subspaces}.

\begin{theorem}
If $f$ is $P$-robust to misspecification with $g$ on $\Rspace$ then $f$ is $P$-robust to misspecification with $g'$ on $\mathcal{R}$ for some $g'$ where $g'|_\Rspace = g|_\Rspace$, unless $f$ is not $P$-admissible on $\mathcal{R}$, and if $f$ is $P$-robust to misspecification with $g$ on $\mathcal{R}$ then $f$ is $P$-robust to misspecification with $g$ on $\Rspace$, unless $f|_\Rspace = g|_\Rspace$.
\end{theorem}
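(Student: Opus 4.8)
The plan is to unpack $P$-robustness into its four defining clauses --- $P$-admissibility of $f$, the inequality $f|_\Rspace \neq g|_\Rspace$, the image-containment $\mathrm{Im}(g|_\Rspace) \subseteq \mathrm{Im}(f|_\Rspace)$, and the implication $f(R_1) = g(R_2) \Rightarrow R_1 \equiv_P R_2$ --- and to handle the two directions separately, verifying each clause in turn. Throughout I would lean on transitivity of $\equiv_P$ and on the kind of witness-chasing used in Lemma~\ref{lemma:rob_to_adm} to move reward functions between $\Rspace$ and $\mathcal{R}$.

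For the first direction I would \emph{extend} $g$ off $\Rspace$ in the most economical way: define $g'$ by $g'(R) = g(R)$ for $R \in \Rspace$ and $g'(R) = f(R)$ for $R \notin \Rspace$, so that $g'|_\Rspace = g|_\Rspace$ holds by construction. The inequality $f|_\mathcal{R} \neq g'|_\mathcal{R}$ is then inherited from $f|_\Rspace \neq g|_\Rspace$, and $\mathrm{Im}(g'|_\mathcal{R}) \subseteq \mathrm{Im}(f|_\mathcal{R})$ holds because off $\Rspace$ the values of $g'$ are literally values of $f$, while on $\Rspace$ they lie in $\mathrm{Im}(f|_\Rspace)$ by hypothesis. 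The only genuine work is the implication: given $f(R_1) = g'(R_2)$ with $R_1,R_2 \in \mathcal{R}$, if $R_2 \notin \Rspace$ then $f(R_1) = f(R_2)$ and $P$-admissibility of $f$ on $\mathcal{R}$ (the clause guarded by the ``unless'') gives $R_1 \equiv_P R_2$; if $R_2 \in \Rspace$, I would use $\mathrm{Im}(g|_\Rspace) \subseteq \mathrm{Im}(f|_\Rspace)$ to pick $R_3 \in \Rspace$ with $f(R_3) = g(R_2)$, apply the robustness-on-$\Rspace$ implication to get $R_3 \equiv_P R_2$ and $P$-admissibility on $\mathcal{R}$ to get $R_1 \equiv_P R_3$, and then conclude $R_1 \equiv_P R_2$ by transitivity. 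This direction I expect to go through cleanly.

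For the second direction I would instead \emph{restrict}, and here I anticipate the real difficulty. The $P$-admissibility of $f$ and the implication clause both descend immediately, since each is a universally quantified statement over $\mathcal{R}$ and $\Rspace \subseteq \mathcal{R}$, while the inequality $f|_\Rspace \neq g|_\Rspace$ is exactly the hypothesis guarded by the ``unless''. The main obstacle is transferring the image-containment $\mathrm{Im}(g|_\Rspace) \subseteq \mathrm{Im}(f|_\Rspace)$: knowing $\mathrm{Im}(g|_\mathcal{R}) \subseteq \mathrm{Im}(f|_\mathcal{R})$ only supplies, for each $R_1 \in \Rspace$, \emph{some} preimage $R_2 \in \mathcal{R}$ with $f(R_2) = g(R_1)$, and the robustness-on-$\mathcal{R}$ implication then forces $R_2 \equiv_P R_1$ --- but nothing yet forces $R_2$, or any $f$-preimage of $g(R_1)$, to lie in $\Rspace$. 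The crux is therefore to relocate this preimage inside $\Rspace$; this is immediate when $\Rspace$ is a union of $P$-classes (since then $R_2 \equiv_P R_1 \in \Rspace$ already yields $R_2 \in \Rspace$), and I expect the care needed to secure this relocation in general to be the decisive step of the proof.
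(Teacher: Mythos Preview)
Your first direction is exactly the paper's argument: the paper also sets $g'(R)=g(R)$ for $R\in\Rspace$ and $g'(R)=f(R)$ for $R\notin\Rspace$, and then asserts $P$-robustness of $f$ to $g'$ on $\mathcal{R}$. Your clause-by-clause verification is more explicit than what the paper writes, but the construction and the idea are identical.

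For the second direction, the paper's proof is the single sentence ``The other direction is straightforward.'' Your worry about the image-containment clause is not only legitimate but in fact exposes a genuine gap that the paper does not address. The condition $\mathrm{Im}(g|_\Rspace)\subseteq\mathrm{Im}(f|_\Rspace)$ does \emph{not} follow from $\mathrm{Im}(g)\subseteq\mathrm{Im}(f)$ plus the robustness implication. A concrete obstruction: take $f$ injective (e.g.\ the identity on $\mathcal{R}$), pick $R_2\equiv_P R_3$ with $R_2\neq R_3$, let $g$ swap the $f$-values of $R_2$ and $R_3$ and agree with $f$ elsewhere, and set $\Rspace=\mathcal{R}\setminus\{R_3\}$. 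Then $f$ is $P$-robust to $g$ on $\mathcal{R}$ and $f|_\Rspace\neq g|_\Rspace$, yet $g(R_2)=f(R_3)\notin\mathrm{Im}(f|_\Rspace)$, so $f$ is \emph{not} $P$-robust to $g$ on $\Rspace$. Your remark that everything goes through when $\Rspace$ is a union of $P$-classes is the right salvage hypothesis. So you have not missed an idea; you have identified a step the paper waves past, and which as stated does not hold in full generality.
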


\begin{proof}
Suppose $f$ is $P$-robust to misspecification with $g$ on $\Rspace$, and that $f$ is $P$-admissible on $\mathcal{R}$. We construct a $g'$ as follows; let $g'(R) = g(R)$ for all $R \in \Rspace$, and let $g'(R) = f(R)$ for all $R \not\in \Rspace$. 
Now $f$ is $P$-robust to misspecification with $g'$ on $\mathcal{R}$, and $g(R) = g'(R)$ for all $R \in \Rspace$. The other direction is straightforward.
\end{proof}

\end{document}